\def\code#1{\texttt{#1}}
\author{Yves Rychener}
\author{Bahar Ta{\c{s}}kesen}
\author{Daniel Kuhn}
\thanks{The authors are with the Risk Analytics and Optimization Laboratory, EPFL Lausanne~(\texttt{yves.rychener,bahar.taskesen,} \texttt{daniel.kuhn@epfl.ch})}
\begin{document}
\title{Metrizing Fairness}

\begin{abstract}
    We study supervised learning problems that have significant effects on individuals from two demographic groups, and we seek predictors that are fair with respect to a group fairness criterion such as statistical parity (SP). A predictor is SP-fair if the distributions of predictions within the two groups are close in Kolmogorov distance, and fairness is achieved by penalizing the dissimilarity of these two distributions in the objective function of the learning problem. In this paper, we identify conditions under which hard SP constraints are guaranteed to improve predictive accuracy. We also showcase conceptual and computational benefits of measuring unfairness with integral probability metrics (IPMs) other than the Kolmogorov distance. Conceptually, we show that the generator of any IPM can be interpreted as a family of utility functions and that unfairness with respect to this IPM arises if individuals in the two demographic groups have diverging expected utilities. We also prove that the unfairness-regularized prediction loss admits unbiased gradient estimators, which are constructed from random mini-batches of training samples, if unfairness is measured by the squared $\mc L^2$-distance or by a squared maximum mean discrepancy. In this case, the fair learning problem is susceptible to efficient stochastic gradient descent (SGD) algorithms. Numerical experiments on synthetic and real data show that these SGD algorithms outperform state-of-the-art methods for fair learning in that they achieve superior accuracy-unfairness trade-offs---sometimes orders of magnitude faster. 
\end{abstract}

\maketitle


\section{Introduction}
The last decade has witnessed a surge of algorithms that have a consequential impact on our daily lives. Machine learning methods are increasingly used, for example, to decide whom to grant or deny loans, college admission, bail or parole. Even though it would be natural to expect that algorithms are free of prejudice, it turns out that cutting-edge AI techniques can learn or even amplify human biases and may thus be far from fair~\citep{ref:chouldechova2017fair,ref:dastin2018amazon,ref:propublica}. The necessity to correct algorithmic biases has propelled the growing field of fair machine learning; see, {\em e.g.}, the recent surveys~\citep{ref:barocas-hardt-narayanan,ref:berk2018fairness, ref:chouldechova2020snapshot, ref:corbett2017algorithmic, ref:mehrabi2019survey}. As of today, there exist several mathematical definitions of algorithmic fairness. All of these definitions fall into one of three main categories: (i)~notions of {\em group fairness} ask that different demographic groups have equal chances of securing beneficial outcomes \citep{ref:barocas2016big, ref:feldman2015certifying, ref:hardt2016equality, ref:pleiss2017fairness,ref:williamson2019fairness, ref:zafar2017fairness2,ref:zafar2017fairness}, (ii) notions of {\em individual fairness} demand that individuals with similar covariates should be treated similarly~\citep{ref:dwork2012fairness, ref:sharifi2019average, ref:yurochkin2020training}, and (iii) {\em causality-based fairness} notions require that protected attributes such as gender or race have no causal effect on outcomes~\citep{ref:garg2019counterfactual, ref:johnson2016impartial, ref:kusner2017counterfactual, ref:nabi2018fair,  ref:zhang2018equality, ref:zhang2018fairness}. This paper focuses on notions of group fairness, which are widely used in applications. Prominent group fairness criteria include statistical parity~(also known as demographic parity)~\citep{ ref:calders2013controlling, ref:feldman2015certifying, ref:zafar2017fairness}, equal opportunity~\citep{ref:hardt2016equality} and equalized odds~\citep{ref:hardt2016equality} as well as their probabilistic counterparts~\citep{ref:pleiss2017fairness}. All of these criteria quantify fairness in terms of the distributions of the algorithms' outputs conditional on the attributes of different demographic groups.
     
\paragraph{\textbf{Contributions.}}
We study supervised learning problems whose solutions critically impact individuals of two demographic groups, and we seek predictors that are fair with respect to one of several group fairness criteria. For ease of exposition, we will focus on statistical parity, but we will show that most of our main results readily extend to other group fairness notions. A predictor is fair with respect to statistical parity if the distributions of the predictions within the two groups are close in Kolmogorov distance, which constitutes an IPM. Fairness can thus be enforced by penalizing the dissimilarity of the two distributions in the objective function of the learning problem. In this paper we first prove that if membership in a certain demographic group provides no information about the distribution of the learning target conditional on the features, then the optimal predictor satisfies statistical parity even if it is not explicitly enforced. {\color{black} In addition, we prove that if training is based on a {\em biased} target with a sufficiently small bias, then fairness constraints are guaranteed to have a regularizing effect and to improve accuracy in view of the {\em true} target.} This reasoning provides a theoretical justification for enforcing statistical parity and may help to identify learning tasks in which fairness penalties produce desirable outcomes. Next, we flesh out conceptual and computational benefits of measuring unfairness with IPMs other than the Kolmogorov distance. On the one hand, we show that the generator of any IPM can be interpreted as a family of utility functions and that unfairness with respect to this IPM arises if individuals in the two demographic groups have diverging expected utilities. This establishes a utilitarian perspective on unfairness measures. On the other hand, we prove that unfairness-regularized learning problems are susceptible to efficient SGD algorithms if unfairness is measured by the squared $\mc L^2$-distance or by a squared maximum mean discrepancy. Unbiased gradient estimators, which are necessary for SGD to converge, are difficult to obtain because the unfairness penalty is non-linear in the distribution of the training samples. Focusing on IPMs related to $U$-statistics and constructing random batches of training samples, we can eliminate the systematic bias of the na\"ive empirical unfairness penalty. However, debiasing the unfairness penalty in this manner introduces a bias in the empirical prediction loss. We thus also need to derive a bias correction term for the empirical prediction loss. In summary, these techniques allow us to obtain unbiased gradient estimators for the overall learning objective at a low computational cost. Numerical experiments on real data show that our SGD-based approach to solve fair learning problems outperforms state-of-the-art methods in that it achieves a superior accuracy-fairness trade-off---sometimes at significantly reduced runtimes.

\paragraph{\textbf{Related work.}}
Fair supervised learning models can be categorized into three main groups: ~(i)~{\em preprocessing methods} correct biased data before the training process~\citep{ref:feldman2015certifying, ref:kamiran2012data,ref:fairgan, ref:zemel2013learning, ref:NEURIPS2021_64ff7983}, (ii)~{\em in-processing methods} incorporate fairness requirements into the training process ~\citep{ref:donini2018empirical, ref:fish2016confidence, ref:kamishima2012fairness, ref:zafar2017fairness2, ref:zafar2017fairness, roh2021fairbatch}, and (iii)~{\em post-processing methods} adjust a trained predictor to obey some desired fairness requirement~\citep{chzhen2020wsbary_reg, ref:hardt2016equality, ref:pleiss2017fairness, ref:NEURIPS2021_d9fea4ca}. 
The methods proposed in this paper add an unfairness regularizer in the form of an IPM to the training objective, and thus they belong to the \emph{in}-processing methods. More specific regularization schemes for incentivizing fairness are described in~\cite{ref:berk2017convex, jiang2020wassersteinClassif, ref:oneto2020expoliting, ref:taskesen2020distributionally}. 
The possibility to quantify unfairness via generic IPMs was informally mentioned in~\cite{ref:pfohl2021empirical} but not systematically investigated. Fairness can also be enforced rigidly via hard constraints on the predictors~\citep{ref:donini2018empirical, ref:wu2019convexity, ref:zafar2017fairness, ref:zafar2019fairness}. As the resulting constrained optimization problems are often non-convex and unsuitable for gradient descent algorithms, however, hard fairness constraints are typically relaxed in practice~\citep{ref:donini2018empirical, ref:oneto2020expoliting, ref:taskesen2020distributionally, ref:wu2019convexity, ref:zafar2017fairness}. The extent to which the solutions of these relaxed problems comply with the original fairness constraints is discussed in~\citep{ref:lohaus2020}. Unfairness penalties based on maximum mean deviations are used in~\cite{ref:oneto2020expoliting} to enforce statistical parity at an intermediate layer of a neural network predictor. As no unbiased gradient estimators are derived, however, the corresponding learning problems cannot be addressed with scalable SGD methods, which are indispensible for latent variable models~\cite{tucker2017rebar}, unrolled computations~\cite{vicol2021unbiased}, federated learning~\cite{yao2019federated}, distributionally robust optimization~\cite{ghosh2020unbiased} and generative adversarial networks~\cite{bellemare2017cramer}. 

This paper is structured as follows. Section~\ref{sec:group-fairness} reviews and unifies various group fairness criteria, whereas Section~\ref{sec:ipms} formalizes the connections between unfairness measures and IPMs. Section~\ref{sec:num_fair_learning} then addresses the numerical solution of learning problems with unfairness penalties, and Section~\ref{sec:numerical} reports on numerical results. Finally, Section~\ref{sec:conclusion} concludes. Proofs and additional background material are relegated to the appendix.

\paragraph{\textbf{Notation.}} All random vectors are defined on a probability space~$(\Omega, \mc F, \PP)$, 
and the expectation with respect to~$\PP$ is denoted by~$\EE[\cdot]$. Random vectors are denoted by capital letters ({\em e.g., $X$}), and their realizations are denoted by the corresponding lower-case letters ({\em e.g., $x$}). The cumulative distribution function (CDF) of a random vector~$X\in\R$ is denoted by~$F_X$ and satisfies~$F_X(x)=\PP[X\leq x]$ for all~$x\in\R$. Similarly, the probability distribution of~$X$ is denoted by~$\PP_X$ and satisfies~$\PP_X[\mc B]=\PP[X\in \mc B]$ for all Borel sets~$\mc B\subseteq\R^d$. We write $X\perp Y$ to indicate that the random vectors~$X$ and~$Y$ are independent under~$\PP$. For any Borel sets~$\mc X\subseteq \R^{d_\mc X}$ and~$\mc Y\subseteq \R^{d_\mc Y}$ we denote by~$\mc L(\mc X, \mc Y)$ the space of all Borel-measurable functions from~$\mc X$ to~$\mc Y$.
Given a norm~$\|\cdot\|$ on~$\R^d$, the Lipschitz modulus of~$f:\R^d \to \R$ is defined as $\lip(f) = \sup_{x, x' \in\R^d} \{|f(x)- f(x')| / \|x-x'\| : x \neq x'\}$. The indicator function~$\mathbbm 1_\mc S$ of a logical statement~$\mc S$ evaluates to~$1$ if~$\mc S$ is true and to~$0$ otherwise. 

\section{Fairness in Supervised Learning}
\label{sec:group-fairness}
We study regression and classification problems of the form
\begin{equation}
    \min\limits_{h \in \mc H} \EE[L (h(X), Y)]
    \label{eq:loss-min}
\end{equation}
that aim to predict a property~$Y \in \mc Y\subseteq\R$ (the output) of a human being characterized by a feature vector~$X\in\mc X\subseteq \R^d$ (the input). Here, $\mc H$ represents a family of Borel-measurable hypotheses~$h:\mc X\to\mathbb{R}$, and $L:\R\times\R\rightarrow\R_+$ represents a lower semi-continuous loss function that quantifies the discrepancy between the predicted output~$h(X)$ and the actual output~$Y$. Throughout the paper we assume that individuals have a protected attribute~$A\in\mc A=\{0,1\}$ that encodes their race, religion, age or sex etc. 
Note that the protected attribute~$A$ may impact the feature vector~$X$ or even be one of the features. 
If discrimination with respect to~$A$ is undesired or legally forbidden, however, one should seek classifiers and regressors~$h(X)$ that make no use of~$A$. Unfortunately, deleting~$A$ from the list of features (which is sometimes referred to as `fairness by unawareness' \citep{ref:barocas-hardt-narayanan}) is not enough to make $h$ fair because the entirety of data collected about a person typically provides enough information to infer~$A$ with high reliability. Instead of ignoring~$A$, we require the classifiers and regressors to satisfy formal statistical notions of fairness, which are defined in terms of conditional probability distributions. The following definition unifies several popular group fairness criteria.
\begin{definition}[Group-fairness]
\label{def:eps-fairness}
A hypothesis~$h\in\mc H$ is fair at the level $\varepsilon \geq 0
$ with respect to some scoring function $S: \R \times \mathcal Y \to \R$ and family of conditioning sets $\mathcal C = \{ \mc C_j\}_{j\in\mc J}$ with $\mc C_j \subseteq \mc X \times \mc Y$ for all $j \in \mc J \subseteq \mathbb N$ if
\[
\Big|\PP\Big[ S(h(X),Y) \leq \tau| A=0, (X,Y)\in \mc C_j\Big] - \PP\Big [ S(h(X),Y) \leq \tau|A = 1, (X,Y)\in   \mc C_j\Big ]\Big|\leq \eps \quad~\forall\tau \in \R,\ \forall j\in\mc J.\]
\end{definition}
Table~\ref{tab:eps-fairness-illustration} shows that different choices of the scoring function and the family of conditioning sets~$\mc C$ imply different group fairness criteria from the literature.
\begin{table}
    \centering
    \begin{tabular}{l@{\quad}|@{\quad}l@{\qquad}l}
Fairness criterion& $S(\hat y, y)$  & $\mathcal{C}$\\\hline
Statistical parity~\cite{ref:agarwal2019fair}               & $\hat y$      & $\{\mc X \times \mc Y\}$\\
Equal opportunity~\cite{ref:hardt2016equality,ref:pleiss2017fairness}                & $\hat y$      & $\{\mathcal{X} \times \{1\}\}$\\
Equalized odds~\cite{ref:hardt2016equality, ref:pleiss2017fairness}               
& $\hat y$      & $\{\mathcal{X} \times \{y\}\}_{y\in\mc Y}$\\
Risk parity~\cite{ref:donini2018empirical, ref:maity2021does} & $L(\hat y,y)$ & $\{\mc X \times \mc Y\}$\\
Conditional statistical parity~\cite{ref:corbett2017algorithmic} & $\hat y$& $\{
\mc X_s \times \mc Y\}$~\text{for some $\mathcal X_s \subseteq \mathcal X$}\\
Probabilistic predictive equality~\cite{ref:corbett2017algorithmic, ref:pleiss2017fairness}& $\hat y$      & $\{\mathcal{X} \times \{0\}\}$\\
\end{tabular}
    \caption{Group fairness criteria induced by different scoring functions and conditioning sets.}
    \label{tab:eps-fairness-illustration}
\end{table}
Note that a hypothesis~$h$ is fair at level~$\eps=0$ if and only if the conditional CDFs~$F_{S(h(X),Y)|A=0,(X,Y)\in \mc C_j}$ and~$F_{S(h(X),Y)|A=1,(X,Y)\in \mc C_j}$ match for all $j \in \mc J$, which means that~$S(h(X),Y)$ is independent of $A$ conditional on the event $(X,Y)\in\mc C_j$. Accordingly, fairness at level~$\eps>0$ implies similarity of the conditional CDFs~$F_{S(h(X),Y)|A=0,(X,Y)\in \mc C_j}$ and~$F_{S(h(X),Y)|A=1, (X,Y)\in \mc C_j}$ for all $j \in \mc J$.


All methods to be developed in this paper are compatible with all fairness notions of Table~\ref{tab:eps-fairness-illustration}. For ease of exposition, however, we will explain these methods without loss of generality under the assumption that fairness is quantified by \textit{statistical parity} (SP)~\cite{ref:agarwal2019fair}, which is the group fairness criterion obtained by setting $S(\hat y, y) = \hat y $ and $\mc C = \{\mc X \times \mc Y\}$. SP is also called demographic parity~\citep{ref:dwork2012fairness} or disparate impact~\citep{ref:feldman2015certifying}. By construction, a hypothesis~$h$ satisfies SP at level~$\eps=0$ if and only if the conditional CDFs~$F_{h(X)|A=0}$ and~$F_{h(X)|A=1}$ match, which means that~$h(X)$ is independent of $A$ {\color{black}(which we express concisely as $h(X)\perp A$)}. 
SP at level~$\eps>0$ implies similarity of the conditional CDFs~$F_{h(X)|A=0}$ and~$F_{h(X)|A=1}$.
From a conceptual point of view, SP is arguably the easiest of all fairness criteria listed in Table~\ref{tab:eps-fairness-illustration}.
SP is enforced, for example, via the US Equal Employment Opportunities Commission's 80\% rule~\cite{us1979_80percrule}, which requires that the selection rate for any race, sex, or ethnic group be at least 80\% of the rate for the group with the highest rate.

Fair hypotheses can be constructed by solving the statistical learning problem~\eqref{eq:loss-min} subject to the extra constraint that~$h$ must be $\eps$-fair with respect to SP (or any other desired group fairness notion). Clearly, the optimal value of problem~\eqref{eq:loss-min} decreases if we restrict its feasible set, that is, increasing the fairness of the optimal hypothesis comes at the expense of reducing its predictive power. This accuracy-fairness trade-off is well-documented empirically~\cite{ref:calders2009building, ref:corbett2017algorithmic, ref:menon2018cost}. If the in-sample distribution of~$(X, Y)$ used for training differs from the out-of-sample distribution used for testing, however, then imposing fairness constraints can have a regularizing effect and improve predictive power. In other words, imposing fairness constraints in sample can increase the fairness as well as the predictive power of the optimal hypothesis out of sample. For example, it can be shown that equal opportunity-fairness constraints can help to learn the Bayes optimal classifier (which maximizes the correct classification rate) even if the training samples are subject to label bias~\cite{blum2020recovering} or feature bias~\cite{ref:dutta2020there}. Similarly, disparate mistreatment-fairness constraints can help to learn the Bayes optimal classifier in the target domain even in the presence of a subpopulation shift~\cite{ref:maity2021does}.

 In the following, we will discuss the conceptual advantages and disadvantages of enforcing SP constraints in statistical learning. That is, we compare problem~\eqref{eq:loss-min} against a fair learning problem of the form
\begin{equation}
    \begin{array}{cl}
    \min\limits_{h \in \mc H} & \EE[L (h(X), Y)] \\
    \text{s.t.} & h(X)\perp A
    \end{array}
    \label{eq:loss-min-fair}
\end{equation}
which optimizes only over predictors that are SP-fair at level~$\varepsilon=0$. We first highlight that, if used na\"ively, SP constraints can have  detrimental effects on the optimal predictors. Specifically, the following example reveals that SP constraints can reduce both prediction accuracy as well as fairness by any reasonable standard.
\begin{example}[Enforcing SP can reduce accuracy and fairness]
\label{ex:price-of-fairness}
Consider a least squares regression problem of the form~\eqref{eq:loss-min} with~$L(\hat y, y)=(\hat y-y)^2$ that aims to predict the skill level~$Y\in [0,1]$ of a job candidate based on a feature vector~$X=(X_1,X_2)$, where $X_1\in[0,1]$ and~$X_2=A\in\{0,1\}$ represent the candidate's normalized college GPA and age group, respectively. Specifically, suppose that $X_2=1$ if the candidate's age is at most 40~years and that $X_2=0$ otherwise. The hypothesis space~$\mc H$ comprises all Borel-measurable functions~$h$ from $\mc X=[0,1]\times\{0,1\}$ to $\mc Y=[0,1]$. A correct and fair prediction of the skill level is critical because it will determine the candidate's salary. In the following, we set~$p_a=\PP[A=a]$ for all~$a\in\mc A$, and we assume that the skill level satisfies $Y=AX_1+(1-A)S$, where~$S$ denotes the candidate's work experience, which is not observed. Thus, the skill level matches the GPA for junior candidates and the work experience for senior candidates.  We also assume that both~$X_1$ and~$S$ are uniformly distributed on~$[0,1]$ and that~$X_1$, $S$ and~$A$ are mutually independent. Hence, the distribution of~$Y$ conditional on~$A=a$ coincides with the uniform distribution on~$[0,1]$ irrespective of~$a\in\mc A$, that is, $Y$ is independent of $A$. Moreover, one can show that the optimal value of~\eqref{eq:loss-min} amounts to~$p_0/12$, which is uniquely attained by the hypothesis
\[
    h^\star(x)=\EE[Y|X=x]=\left\{ \begin{array}{cl}
    \half & \text{if }x_2=a=0, \\
    x_1=y & \text{if }x_2=a=1.
    \end{array}\right.
\]
This confirms that, even though the protected attribute~$A$ is independent of~$Y$, the optimal hypothesis may display a non-trivial dependence on~$A$. As the conditional distributions of $h^\star(X)$ differ across the two age groups, SP is violated. Next, we solve the regression problem~\eqref{eq:loss-min-fair}, which restricts attention to SP-fair hypotheses. A tedious but routine calculation reveals that the optimal SP-fair hypothesis~is
\[
    \textstyle h_{\rm SP}^\star(x)=\EE[Y|X_1=x_1]=\half+p_1(x_1-\half),
\]
whose objective function value exceeds that of $h^\star$ because
\[
    \textstyle \EE[L (h^\star_{\rm SP}(X), Y)] = \frac{p_0}{12}\cdot\frac{1}{p_1}\left[ 1+10p_1-9p_1^2+4p_1^3\right]+\frac{p_0^2p_1}{12} >\frac{p_0}{12} = \EE[L (h^\star(X), Y)].
\]
Note that under~$h^\star_{\rm SP}$ the predicted skill level grows with the GPA independent of the protected attribute. Thus, the distribution of $h^\star_{\rm SP}(X)$ conditional on~$A=a$ coincides with the uniform distribution on the interval $[\frac{1}{2}(1-p_1), \frac{1}{2}(1+p_1)]$ irrespective of~$a\in\mc A$, indicating that the representatives of the two age groups have the same prospects of being hired into a particular salary band. This confirms that~$h^\star_{\rm SP}(X)$ is indeed SP-fair. However, the prediction~$h^\star_{\rm SP}(X)$ is not more fair than~$h^\star(X)$ by any reasonable standard. While $h^\star_{\rm SP}(X)$ is perfectly correlated with the skill level of junior candidates, it is independent of the skill level of senior applicants. Hence, $h^\star_{\rm SP}$ enforces SP by making purely random predictions that have no bearing on the actual qualifications of senior applicants, which suggests that their earning a high salary is tantamount to winning a lottery. Under the hypothesis~$h^\star$ that minimizes the prediction loss without constraints, on the other hand, all senior candidates are treated equally, which seems more `fair' than affording them a random salary.
\qed
\end{example}

{\color{black} Example~\ref{ex:price-of-fairness} shows that SP deteriorates the predictive power of the optimal hypothesis and, while ensuring fairness at the level of the population, may reduce fairness at the level of the individual. This raises the question whether there is any benefit in enforcing SP at all. We will answer this question affirmatively under three simplifying assumptions. First, we assume that the sample space~$\Omega$ is finite and that~$\mc F=2^{|\Omega|}$ is the power set $\sigma$-algebra. This assumption is unrestrictive because problem~\eqref{eq:loss-min} aims to predict properties of human beings and because the world's population is finite. As~$\Omega$ is finite, $X$ and~$Y$ can adopt only finitely many values~$x_\omega$ and~$y_\omega$ for $\omega \in\Omega$, respectively. We may thus assume without loss of generality that~$\mc X = \{x_\omega:\omega\in\Omega\}$ is finite, too. Next, we assume that $\mc H=\mc L(\mc X,\mathbb{R})$ represents the family of {\em all} measurable hypotheses. As~$\mc X$ is finite, this assumption holds if one optimizes over families of deep neural networks that enjoy universal approximation capabilities. By slight abuse of notation, we can thus identify any hypothesis~$h\in \mc H$ with a finite-dimensional vector of the form~$h=(h(x_\omega))_{\omega\in\Omega}$, and we can identify~$\mc H$ with~$\R^{|\Omega|}$. Finally, we assume that the distribution of~$(X,Y)$ is known. This is approximately true if we have access to a large training~dataset.

The following proposition shows that if the sensitive attribute~$A$ carries no information about the distribution of~$Y$ conditional on~$X$, then problem~\eqref{eq:loss-min} has an optimal solution that is fair with respect to~SP.

\begin{proposition}[Optimality implies SP]
\label{thm:optimal-decision-indep}
    Suppose that~$\Omega$ is finite, $\mc H=\mc L(\mc X,\mathbb{R})$, and~\eqref{eq:loss-min} has a minimizer~$h^\star$ that is $\PP$-almost surely unique. If $\PP_{Y|X}\perp A$, then $h^\star(X)\perp A$.
\end{proposition}

Note that the conditional distribution~$\PP_{Y|X}$ is fully determined by the finite random vector consisting of the conditional probabilities $\PP_{Y|X}[Y=y_\omega]$ for $\omega\in\Omega$. Proposition~\ref{thm:optimal-decision-indep} shows that the optimal solution of problem~\eqref{eq:loss-min} is sometimes guaranteed to be SP-fair even though SP is not enforced in~\eqref{eq:loss-min}. In these cases, SP-fairness provides a necessary optimality condition for problem~\eqref{eq:loss-min}. This result is {\em un}surprising. Indeed, in order to predict~$Y$ from~$X$, all one needs to know is the distribution of~$Y$ conditional on~$X$. Thus, the optimal prediction~$h^\star(X)$ depends on~$X$ only indirectly through the conditional distribution~$\PP_{Y|X}$. If this conditional distribution is independent of~$A$, then, clearly, $h^\star(X)$ must be independent of~$A$, too. Hence, the condition $\PP_{Y|X}\perp A$ translates the statement ``{\em $A$ is irrelevant for predicting~$Y$}'' into the language of mathematics. We expect it to hold in many cases of algorithmic bias that were scandalized in the news. 
}

The following example shows that Proposition~\ref{thm:optimal-decision-indep} ceases to hold if the assumption~$\PP_{Y|X}\perp A$ is replaced with the simpler assumption~$Y\perp A$. 
\begin{example}[$Y\perp A$ does not imply SP]
\label{ex:YperpA-counter}
Consider a classification problem of the form~\eqref{eq:loss-min} with feature space $\mc X=\{0,1\}^2$, label space $\mc Y=\{0,1\}$, 0-1 loss function $L(\hat{y}, y)=\mathbbm 1_{\hat y\neq y}$ and hypothesis space $\mc H=\mc L(\mc X,\mc Y)$, where $X_2=A$ is a protected attribute. Assume that $X_1$ is independent of~$A$ and that its marginal distribution is given by~$\PP[X_1=0]=0.8$ and~$\PP[X_1=1]=0.2$. The distribution of~$A$ is irrelevant for this example. Finally, the conditional distribution of~$Y$ given~$X$ is completely determined by the success probabilities in Table~\ref{tbl:yperpa-p}. Based on this information, it is easy to verify that
\[
    \PP[Y=1|A=a]= \sum_{x_1\in\{0,1\}} \PP[Y=1|X_1=x_1,\,A=a]\, \PP[X_1=x_1] = 0.5\quad\forall a\in\mc A,
\]
and thus we have $Y\perp A$. Similarly, we find
\[
    \PP[\,\PP[Y=1|X]=0.4\,|\,A=a] = \left\{\begin{array}{ll} 0.2 & \text{if }a=1,\\
    0 & \text{if }a=0,
    \end{array}\right.
\]
which is sufficient to imply that $\mathbb P_{Y|X} \not\perp A$. Thus, Proposition~\ref{thm:optimal-decision-indep} does not apply. Thanks to its simplicity, problem~\eqref{eq:loss-min} can be solved analytically, and its unique optimal solution $h^\star$ is fully characterized by the information in Table~\ref{tbl:yperpa-h}. Next, one readily verifies that
$$
\PP[h^\star(X)=1|A=a]=\PP[X_1=a]=\left\{\begin{array}{ll} 0.2 & \text{if }a=1,\\
    0.8 & \text{if }a=0,
    \end{array}\right.
$$
which reveals that $h^\star(X)\not\perp A$. Hence, $h^\star$ fails to satisfy SP even though~$Y$ is independent of~$A$.
\begin{table}%
  \centering
  \subfloat[][${\mathbb{P}[Y=1|X]}$]{
  \begin{tabular}{lc c}
  \toprule
    &$X_1=0$&$X_1=1$\\\midrule
         $X_2=1$&  0.4\phantom{0}&0.9\phantom{0}\\
         $X_2=0$& 0.55&0.3\phantom{0} \\
         \bottomrule
\end{tabular}\label{tbl:yperpa-p}
  }%
  \qquad
  \subfloat[][$h^\star(X)$]{
  \begin{tabular}{lc c}
  \toprule
    &$X_1=0$&$X_1=1$\\\midrule
         $X_2=1$&  0&1\\
         $X_2=0$& 1&0 \\
         \bottomrule
\end{tabular}\label{tbl:yperpa-h}
  }
  \caption{Success probabilities and optimal decisions for Example~\ref{ex:YperpA-counter}}
  \label{tbl:yperpa}%
\end{table}
\qed
\end{example}

While $Y\perp A$ does {\em not} induce SP, we know from Proposition~\ref{thm:optimal-decision-indep} that the condition $\mathbb P_{Y|X}\perp A$ ensures the existence of an SP-fair optimizer. From Example~\ref{ex:YperpA-counter} it is thus clear that $Y\perp A$ does {\em not} imply $\mathbb P_{Y|X}\perp A$. Conversely, $\mathbb P_{Y|X}\perp A$ does also {\em not} imply $Y\perp A$ in general. To see this, just assume that $Y=A$ and that $X$ is independent of $Y$. In this case $\mathbb P_{Y|X}\perp A$ is satisfied, but $Y\perp A$ is not.

The next proposition shows that the family $\mc H_{\rm fair}=\{h\in\mc H: h(X)\perp A\} $ of all SP-fair hypotheses within~$\mc H$ consists of a finite union of linear subspaces of~$\mc H\cong\R^{|\Omega|}$.

\begin{proposition}[Geometry of $\mc H_{\rm fair}$]
    \label{prop:H_fair}
    If~$\Omega$ is finite and $\mc H=\mc L(\mc X,\R)$, then the set $\mc H_{\rm fair}=\{h\in\mc H: h(X)\perp A\} $ of all SP-fair hypotheses is non-empty and represents a union of finitely many linear subspaces of~$\mc H$.
\end{proposition}

Proposition~\ref{prop:H_fair} implies that~$\mc H_{\rm fair}$ is generically non-convex. Hence, the fair learning problem~\eqref{eq:loss-min-fair} constitutes a---potentially challenging---non-convex optimization problem even if the loss function is convex. Next, we will show that enforcing SP can improve prediction accuracy if the training data is biased. To this end, we consider a family of outputs $Y_\delta$ parametrized by~$\delta\geq 0$. Here, $Y_0$ represents the true {\em unbiased} output we actually want to predict. For any~$\delta>0$, however, $Y_\delta$ represents a {\em biased} proxy for~$Y_0$. We will study the task of predicting~$Y_0$ from~$X$ if we have only access to the distribution of~$(X,Y_\delta)$ for some~$\delta>0$ but {\em not} that of~$(X,Y_0)$.

\begin{example}[Training on biased data]
\label{ex:healthcare-bias}
A recent study uncovered racial bias in an algorithm utilized by the U.S.\ health care system for making health-related decisions \cite{ref:obermeyer2019dissecting}. The study revealed that black patients classified at the same risk level as white patients were, in fact, in poorer health. This bias arose because the algorithm used health care costs as a proxy for health needs and because less money is typically allocated to black patients with the same level of need. The authors conclude that ``{\em the choice of convenient, seemingly effective proxies for ground truth can be an important source of algorithmic bias in many contexts.}'' \qed
\end{example}

The following theorem shows that if~$A$ is irrelevant for predicting the true target~$Y_0$ but training is based on a biased target~$Y_\delta$, then enforcing SP improves the predictive power of the optimal hypothesis in view of~$Y_0$.

\begin{theorem}[SP improves accuracy]
\label{thm:hard-fairness-accuracy}
    Suppose that~$\Omega$ is finite, $\mc H=\mc L(\mc X,\R)$, $L(\hat y,y)$ is strongly convex in~$\hat y$, and $\varphi(h,\delta)=\EE[L(h(X), Y_\delta)]$ is once and twice continuously differentiable in~$\delta$ and~$h$, respectively. For all $\delta\geq 0$, let $h^\star_\delta$ and $h^\star_{{\rm fair},\delta}$ be minimizers of $\min_{h\in\mc H}\varphi(h,\delta)$ and $\min_{h\in\mc H_{\rm fair}}\varphi(h,\delta)$, respectively.  If $\PP_{Y_0|X}\perp A$ and~$\delta$ is sufficiently small, then we have $\varphi(h\opt_\delta,0) \geq \varphi(h^\star_{\textrm{fair},\delta},0)$, and the inequality is strict unless~$h\opt_\delta\in\mc H_{\rm fair}$.
\end{theorem}

\begin{figure}
    \centering
    \begin{subfigure}[t]{0.4\columnwidth}
    \includegraphics[width=\linewidth]{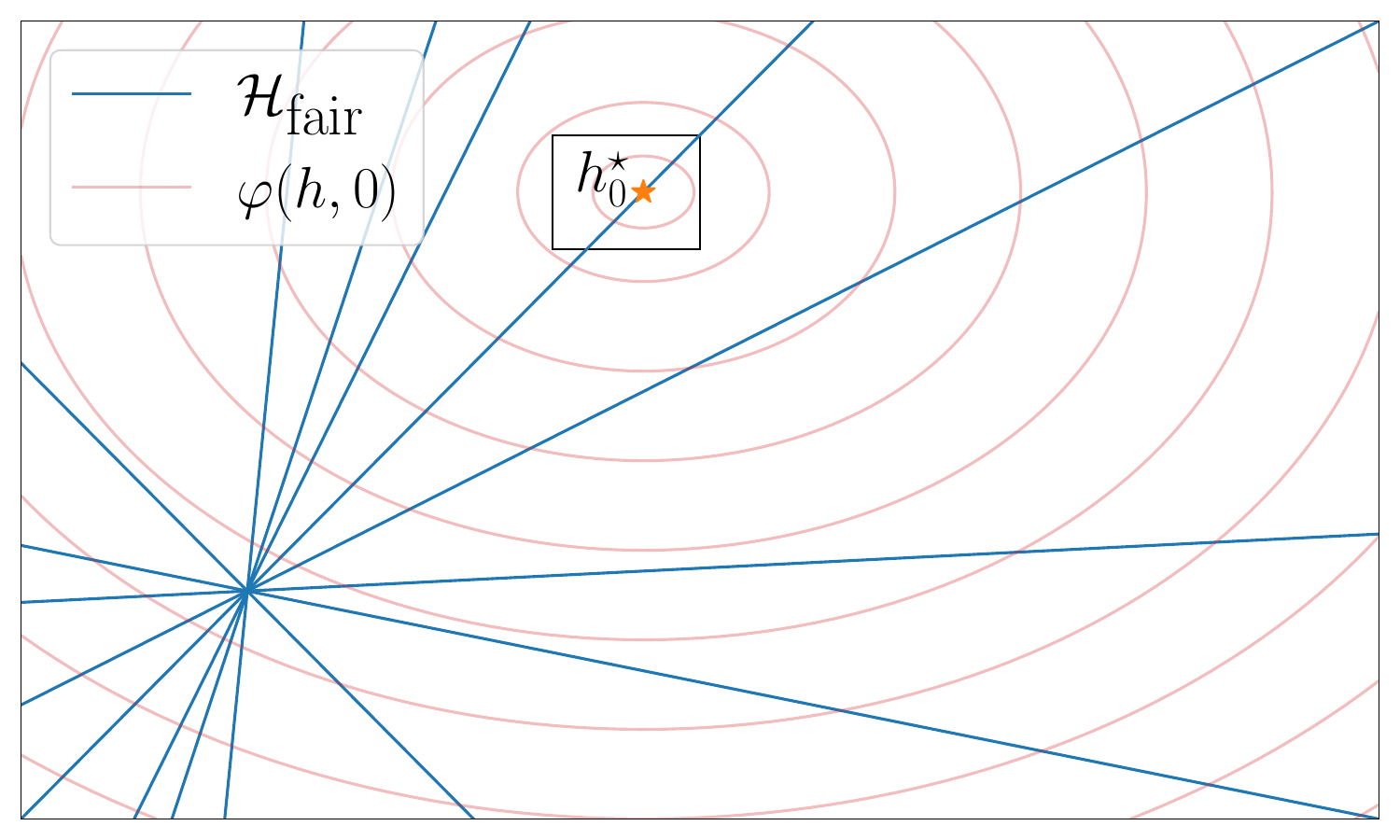}
    \caption{Global visualization of $\mc H_{\rm fair}$}
    \label{fig:sp-illustration-hfair}
    \end{subfigure}\hspace{5mm}
    \begin{subfigure}[t]{0.4\columnwidth}
    \includegraphics[width=\linewidth]{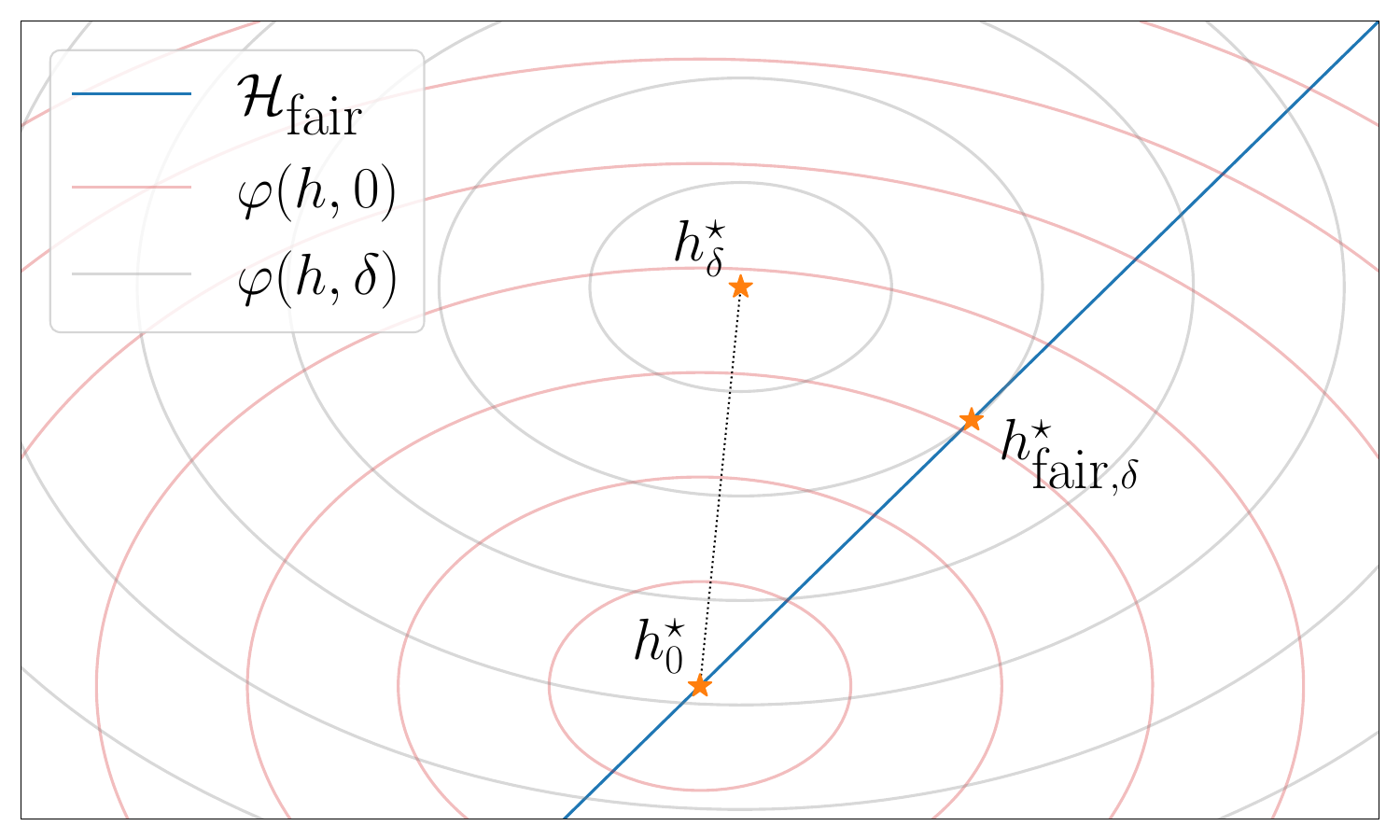}
    \caption{Local visualization of $\mc H_{\rm fair}$}
    \label{fig:sp-illustration-fair}
    \end{subfigure}
    \caption{ Proof of Theorem~\ref{thm:hard-fairness-accuracy}: The right chart zooms into the neighborhood of $h\opt_0$. The red (gray) lines represent the contours of the unbiased (biased) objective functions.}
    \label{fig:sp-illustration}
\end{figure}

Theorem~\ref{thm:hard-fairness-accuracy} implies that if biased outputs~$Y_\delta$ for~$\delta>0$ are used for training, then SP constraints reduce the prediction loss of the optimal hypothesis if two conditions are satisfied: {\em (i)}~the sensitive attribute~$A$ is irrelevant for predicting the true target~$Y_0$ ({\em i.e.}, $\PP_{Y_0|X}\perp A$) and {\em (ii)}~the bias is small ({\em i.e.}, $\delta$ is close to~$0$). Under the conditions of Theorem~\ref{thm:hard-fairness-accuracy}, we thus face a win-win situation, that is, SP improves both the fairness as well as the predictive power of the optimal hypothesis. Empirical evidence supporting this theoretical insight is provided in \cite{wick2019fairnesstradeoff}. The proof of Theorem~\ref{thm:hard-fairness-accuracy} is illustrated in Figure~\ref{fig:sp-illustration}. Figure~\ref{fig:sp-illustration-hfair} represents~$\mc H$ as a plane and visualizes the linear subspaces that form~$\mc H_{\rm fair}$ by blue lines. Note that~$\varphi(h,0)$ represents the expected loss of a hypothesis~$h$ with respect to the true unbiased output~$Y_0$. Its contours are visualized by red lines. By Proposition~\ref{thm:optimal-decision-indep}, which applies because~$A$ is irrelevant for predicting~$Y_0$, the optimal predictor~$h_0^\star$ that minimizes~$\varphi(h,0)$ over~$\mc H$ is SP-fair and thus resides on one of the blue lines. Figure~\ref{fig:sp-illustration-fair} zooms into the neighborhood of~$h^\star_0$. If we do not have data for the true target~$Y_0$ but only for the biased target~$Y_\delta$, then we cannot minimize~$\varphi(h,0)$. Instead, we have to minimize the perturbed objective function~$\varphi(h,\delta)$, which is constructed from the biased target~$Y_\delta$. In Figure~\ref{fig:sp-illustration-fair} the contours of the perturbed objective function are visualized by the gray lines. If the bias parameter~$\delta$ is small, then the hypothesis~$h^\star_\delta$ that minimizes~$\varphi(h,\delta)$ over~$\mc H$ is close to~$h^\star_0$. Clearly, $h^\star_\delta$ has a higher prediction error than~$h^\star_0$ with respect to the true unbiased objective function~$\varphi(h,0)$ with the red contours. In addition, $h^\star_\delta$ generically violates SP, that is, it does {\em no longer} reside on the blue line. The optimal {\em fair} hypothesis~$h^\star_{{\rm fair},\delta}$ that minimizes the biased prediction loss~$\varphi(h,\delta)$ over~$\mc H_{\rm fair}$ is obtained by projecting~$h^\star_\delta$ onto~$\mc H_{\rm fair}$. Thus, the prediction error of~$h^\star_{{\rm fair},\delta}$ with respect the true unbiased objective function~$\varphi(h,0)$ is smaller than that of~$h^\star_\delta$. This shows that if~$A$ is irrelevant for predicting~$Y_0$ and the bias parameter~$\delta$ is small, then the prediction error of the solution of the biased learning problem improves with respect to the true unbiased objective function if we enforce SP.

Note that if~$\delta$ is large, then~$h^\star_\delta$ may no longer be close to the linear subspace of~$\mc H_{\rm fair}$ that contains~$h^\star_0$. In this case, $h^\star_\delta$ and~$h^\star_{{\rm fair},\delta}$ may in fact reside on {\em different} linear subspace of~$\mc H_{\rm fair}$, and the prediction error of~$h^\star_{{\rm fair},\delta}$ with respect to the true unbiased objective function may thus be {\em higher} than that of~$h^\star_0$. This shows that SP constraints improve prediction accuracy only if the bias of the training outputs is sufficiently small.

\begin{remark}[Loss functions]
The requirement that~$\varphi(h,\delta)$ be strongly convex in~$h$ is restrictive, but it holds in the important special case when~$L(\hat y,y)=(\hat y-y)^2$. It could be relaxed at the expense of a more cumbersome proof. Alternatively, we could restrict~$\mc H$ to a reproducible kernel Hilbert space~$\mbb H_K\subseteq \mc H$ and allow for objective functions of the form $\varphi(h,\delta)=\EE[L(h(X),Y_\delta)] +\|h\|^2_\mbb H$ with a convex (but not necessarily strongly convex) loss function~$L$ and a Tikhonov regularizer induced by the Hilbert norm~$\|\cdot\|_{\mbb H_K}$. \qed
\end{remark}

\begin{remark}[Bias models]
\label{rem:bias-models}
Theorem~\ref{thm:hard-fairness-accuracy} remains valid for a broad spectrum of bias models. For example, in least-squares regression it holds for biased outputs of the form $Y_\delta=\delta Y_1+(1-\delta)Y_0$, and in classification with cross-entropy loss it holds for biased outputs of the form $Y_\delta=B_\delta Y_1+(1-B_\delta)Y_0$, where $B_\delta$ is a Bernoulli random variable with success probability~$\delta$ that is independent of all other sources of randomness.
\qed
\end{remark}

A critical challenge in achieving algorithmic fairness is the choice of the sensitive attribute~$A$. 
In principle, {\em any} component~$X_i$ of the feature vector~$X$ and any threshold~$\tau\in\R$ can be used to construct two demographic groups characterized by the binary attribute~$A=\mathbbm 1_{X_i\leq \tau}$. From a philosophical point of view, it is desirable to ensure that {\em every} subpopulation is treated fairly. It is easy to see, however, that only a trivial {\em constant} hypothesis can be SP-fair with respect to {\em all} possible subpopulations. We thus have to settle for a more modest goal. Theorem~\ref{thm:hard-fairness-accuracy} provides guidance for selecting {\em meaningful} sensitive attributes. Specifically, $A$ constitutes a `good' sensitive attribute if it is irrelevant for predicting the true target~$Y_0$ ($\PP_{Y_0|X}\perp A$) {\em and} if there is reason to believe that is relevant for predicting the biased target ($\PP_{Y_\delta|X}\not\perp A$). Indeed, the first condition ensures that $h^\star_0\in\mc H_{\rm fair}$, and the second condition implies that, generically, $h^\star_\delta\notin\mc H_{\rm fair}$. Enforcing SP with respect to all sensitive attributes satisfying these two conditions will improve prediction accuracy.

\section{Unfairness Measures and Integral Probability Metrics}
\label{sec:ipms}
Recall that a hypothesis $h\in\mc H$ satisfies SP at level $\varepsilon$ if and only if the absolute difference between the CDFs of $h(X)$ conditional on $A=0$ and $A=1$ is uniformly bounded by $\varepsilon$. Equivalently, the Kolmogorov distance~\citep{ref:shorack2000probability} between $\mathbb{P}_{h(X)|A=0}$ and $\mathbb{P}_{h(X)|A=1}$ is at most~$\varepsilon$ \cite{chzhen2020wsbary_reg}. Hence, the Kolmogorov distance between $\mathbb{P}_{h(X)|A=0}$ and $\mathbb{P}_{h(X)|A=1}$ quantifies the degree of {\em un}fairness of~$h$. The Kolmogorov distance is an example of an integral probability metric (IPM) \citep{ref:muller_1997, ref:sriperumbudur2012empirical}.

\begin{definition}[Integral probability metric]
\label{def:ipm}
Let~$w\in\mc L(\R^n,[1,\infty))$ be a weight function, define~$\mc L_w(\R^n,\R)$ as the set of all functions~$\psi \in\mc L(\R^n,\R)$ with~$\sup_{z\in\R^n} |\psi(z)|/w(z)<\infty$, and define~$\mc Q_w(\R^n)$ as the set of all probability measures~$\QQ$ on~$\R^n$ with~$\int_{\R^n}w(z)\QQ(\dd z)<\infty$. The integral probability metric on~$\mc Q_w(\R^n)$ with generator~$\Psi\subseteq \mc L_w(\R^n,\R)$ is then given by
{\em
\[
    \mathcal{D}_{\Psi}(\QQ_1, \QQ_2) = \sup\limits_{\psi\in \Psi} \left|  \int_{\R^n} \psi(z)\, \QQ_1(\dd z) - \int_{\R^n} \psi(z) \,\QQ_2( \dd z)\right|\quad \forall \QQ_1, \QQ_2\in \mc Q_w(\R^n). 
\]
}
\end{definition}
Note that $\mathcal{D}_{\Psi}$ is a pseudo-metric for any~$\Psi\subseteq \mc L_w(\R^n,\R)$. Indeed, $\mathcal{D}_{\Psi}(\QQ_1, \QQ_2)$ is non-negative, symmetric in~$\QQ_1$ and~$\QQ_2$, vanishes if~$\QQ_1=\QQ_2$ and satisfies the triangle inequality. Moreover, $\mathcal{D}_{\Psi}$ is a metric if~$\Psi$ separates points in~$\mc Q_w(\R^n)$, in which case~$\mathcal{D}_{\Psi}(\QQ_1, \QQ_2)$ vanishes only if~$\QQ_1=\QQ_2$. The Kolmogorov distance is indeed an IPM as it is generated by the family of step-functions of the form $\psi(y)=\mathbbm 1_{y\leq\tau}$ parametrized by $\tau\in\R$. We emphasize, however, that other IPMs have previously been used to quantify unfairness~\cite{ref:dwork2012fairness}. A prominent example is the 1-Wasserstein distance.
\begin{definition}[Wasserstein distance]
\label{def:wass-distance}
If $w(z)=1+\|z\|$ for some norm~$\|\cdot\|$ on~$\R^n$, then the 1-Wasserstein distance (or Kantorovich distance) between~$\QQ_1, \QQ_2\in\mc Q_w(\R^n)$ is
{\em
\begin{equation*}
    \mathcal{W}(\QQ_1, \QQ_2) = \inf\limits_{\pi \in \Pi(\QQ_1, \QQ_2)} \int_{\R^n \times \R^n} \|z- z' \|\,  \pi(\dd z, \dd z'),
\end{equation*}}where~$\Pi(\QQ_1, \QQ_2)$ denotes the set of all joint distributions or couplings of the random vectors~$Z\in\R^n$ and~$Z'\in\R^n$ with marginal distributions~$\QQ_1$ and~$\QQ_2$, respectively.
\end{definition}
By the classical Kantorovich-Rubinstein theorem, the 1-Wasserstein distance is indeed an IPM.
\begin{lemma}[Kantorovich-Rubinstein theorem {\cite[Remark~6.5]{ref:villani2008optimal}}] 
The 1-Wasserstein distance~$\mathcal{W}$ coincides with the IPM~$\mathcal{D}_\Psi$ generated by the set $\Psi = \{\psi \in \mc L(\R^n, \R) : \lip(\psi) \leq 1\}$ of all Lipschitz continuous test functions with Lipschitz modulus of at most~$1$.
\label{leamma:kantorovich-ipm}
\end{lemma}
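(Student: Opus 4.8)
The plan is to establish the two inequalities $\mathcal{D}_\Psi(\QQ_1,\QQ_2)\le \mathcal{W}(\QQ_1,\QQ_2)$ and $\mathcal{W}(\QQ_1,\QQ_2)\le \mathcal{D}_\Psi(\QQ_1,\QQ_2)$ separately, where throughout I write $c(z,z')=\|z-z'\|$ for the underlying transportation cost and observe that $c$ is itself a metric on $\R^n$. The first inequality is elementary, whereas the second relies on Kantorovich duality and is where the real work lies.

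For the inequality $\mathcal{D}_\Psi\le\mathcal{W}$, fix any $\psi$ with $\lip(\psi)\le 1$ and any coupling $\pi\in\Pi(\QQ_1,\QQ_2)$. Since the marginals of $\pi$ are $\QQ_1$ and $\QQ_2$, the difference of the two integrals can be rewritten as a single integral against $\pi$,
\[
\int_{\R^n}\psi(z)\,\QQ_1(\dd z)-\int_{\R^n}\psi(z)\,\QQ_2(\dd z)=\int_{\R^n\times\R^n}\big(\psi(z)-\psi(z')\big)\,\pi(\dd z,\dd z').
\]
Bounding the integrand via $|\psi(z)-\psi(z')|\le\lip(\psi)\,\|z-z'\|\le\|z-z'\|$ and taking absolute values shows that the left-hand difference is at most $\int_{\R^n\times\R^n}\|z-z'\|\,\pi(\dd z,\dd z')$. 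Taking the supremum over admissible $\psi$ on the left and the infimum over couplings $\pi$ on the right yields $\mathcal{D}_\Psi(\QQ_1,\QQ_2)\le\mathcal{W}(\QQ_1,\QQ_2)$.

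For the reverse inequality I would invoke the Kantorovich duality theorem, which expresses the primal transport infimum as
\[
\mathcal{W}(\QQ_1,\QQ_2)=\sup_{\varphi,\phi}\Big\{\int_{\R^n}\varphi(z)\,\QQ_1(\dd z)+\int_{\R^n}\phi(z')\,\QQ_2(\dd z')\Big\},
\]
where the supremum runs over all integrable pairs satisfying $\varphi(z)+\phi(z')\le\|z-z'\|$ for all $z,z'\in\R^n$. The decisive step is to show that an optimal dual pair may be taken of the form $(\varphi,-\varphi)$ with $\varphi$ being $1$-Lipschitz. Concretely, for a fixed admissible pair one replaces $\varphi$ by its $c$-transform $\varphi^c(z')=\inf_{z}\{\|z-z'\|-\varphi(z)\}$ and iterates once more; because $c$ is a metric obeying the triangle inequality, each transform can only increase the dual objective and forces the resulting potential to satisfy $\lip(\varphi)\le 1$ together with $\phi=-\varphi$. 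This collapses the two-function dual into a single-function supremum over $1$-Lipschitz test functions, which is precisely $\mathcal{D}_\Psi(\QQ_1,\QQ_2)$.

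The main obstacle is this hard direction: establishing Kantorovich duality with no duality gap (a strong-duality statement that requires a lower-semicontinuity or compactness argument on the space of couplings) and then carrying out the $c$-transform reduction that exploits the metric structure of the cost. Since this is exactly the content of the cited Kantorovich-Rubinstein theorem \cite[Remark~6.5]{ref:villani2008optimal}, I would either appeal to it directly or reproduce the $c$-concavity argument sketched above, which is the only genuinely non-routine part of the proof.
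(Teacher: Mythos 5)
The paper gives no proof of this lemma at all---it is stated as a known result and deferred entirely to the citation of Villani's Remark~6.5. Your sketch is the standard and correct argument (the elementary coupling bound for $\mathcal{D}_\Psi\le\mathcal{W}$, and Kantorovich duality plus the $c$-transform reduction for the reverse), and you correctly identify that the only non-routine part is precisely the content of the cited reference, so there is nothing to compare against and no gap to report.
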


For univariate distributions, the Wasserstein distance reduces to the $\mathcal{L}^1$-distance~\cite{ref:salvemini1943sul}.
\begin{definition}[$\mc L^p$-distance]
\label{def:lp_distance}
If $n=1$ and $w(z)=1+|z|$, then the $\mc L^p$-distance of~$\QQ_1,\QQ_2\in \mc Q_w(\R)$ is given by the $\mc L^p$-norm distance of their CDFs~$F_{\QQ_1}$ and~$F_{\QQ_2}$, respectively, that is,
\[
    d_{p}(\mbb Q_1,\mbb Q_2) =\|F_{\QQ_1}-F_{\QQ_2}\|_{\mc L^p}= \left(  \int_{\R} \left|F_{\QQ_1}(z) - F_{\QQ_2}(z)\right|^p \textrm{\em d} z\right)^{1/p}
\]
for~$p \in[1, \infty)$ and~$d_\infty(\mbb Q_1, \mbb Q_2) =\|F_{\QQ_1}-F_{\QQ_2}\|_{\mc L^\infty} =\sup_{z\in\R}|F_{\QQ_1}(z) - F_{\QQ_2}(z)|$ for~$p=\infty$.
\end{definition}

It can be shown that any $\mathcal{L}^p$-distance on a space of univariate distributions is in fact an IPM.

\begin{lemma}[Duals of~$\mc L^p$-distances~{\cite[{Lemma~1}]{dedecker2007lpdual}}]
\label{lemma:lpdual}
For any pair of conjugate exponents $p,q\in[1,\infty)$ with $1/p+1/q=1$, the $\mc L^p$-distance $d_p$ on~$\mc Q_w(\R^1)$ coincides with the IPM~$\mc D_\Psi$ generated by the set~$\Psi=\{\psi\in W_0^{1,q}(\R):\|\psi'\|_{\mc L^q}\leq 1\}$, where $W_0^{1,q}(\R)$ denotes the Sobolev space of all absolutely continuous functions~$\psi:\R\to\R$ whose derivative~$\psi'$ has a finite $\mc L^q$-norm.
\end{lemma}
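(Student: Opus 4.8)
The plan is to establish the two inequalities $\mc D_\Psi(\QQ_1,\QQ_2)\le d_p(\QQ_1,\QQ_2)$ and $\mc D_\Psi(\QQ_1,\QQ_2)\ge d_p(\QQ_1,\QQ_2)$ separately, using the duality in Hölder's inequality as the engine. Throughout I write $\nu=\QQ_1-\QQ_2$ and $G=F_{\QQ_1}-F_{\QQ_2}$, so that $\|G\|_{\mc L^p}=d_p(\QQ_1,\QQ_2)$ by Definition~\ref{def:lp_distance} and $G\in\mc L^p(\R)$ because $\QQ_1,\QQ_2\in\mc Q_w(\R)$. Here $\nu$ is a finite signed measure of total mass $\nu(\R)=0$, and $G(z)=\nu((-\infty,z])$ tends to $0$ as $z\to\pm\infty$. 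Before comparing the two quantities I would first record that the generator $\Psi$ is admissible in the sense of Definition~\ref{def:ipm}: if $\psi$ is absolutely continuous with $\psi'\in\mc L^q(\R)$, then applying Hölder's inequality to $\psi(z)-\psi(0)=\int_0^z\psi'(t)\,\dd t$ gives $|\psi(z)-\psi(0)|\le\|\psi'\|_{\mc L^q}\,|z|^{1/p}$, so $\psi$ grows strictly sublinearly (recall $p>1$, since $1/p+1/q=1$ with $p,q\in[1,\infty)$ forces $p,q\in(1,\infty)$) and hence $\psi\in\mc L_w(\R,\R)$ for $w(z)=1+|z|$. In particular, all integrals defining $\mc D_\Psi$ are finite.

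The key identity is the integration-by-parts formula $\int_\R\psi\,\dd\nu=-\int_\R\psi'(z)\,G(z)\,\dd z$ for every $\psi\in\Psi$. Rather than argue about pointwise boundary terms, I would derive it cleanly by writing $\psi(z)=\psi(0)+\int_0^z\psi'(t)\,\dd t$, using $\nu(\R)=0$ to discard the constant, and applying Fubini's theorem to interchange the integrations in $t$ and $z$; the resulting inner measures are $\nu([t,\infty))=-G(t^-)$ and $\nu((-\infty,t])=G(t)$, which agree with $-G(t)$ and $G(t)$ outside the at most countable set of discontinuities of $G$ (a Lebesgue-null set), and they assemble into $-\int_\R\psi'G\,\dd z$. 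The Fubini step is justified because $\psi'G\in\mc L^1(\R)$ by Hölder, as $\psi'\in\mc L^q$ and $G\in\mc L^p$. Given this identity the upper bound is immediate: for any $\psi\in\Psi$, $\left|\int\psi\,\dd\nu\right|=\left|\int\psi'G\,\dd z\right|\le\|\psi'\|_{\mc L^q}\|G\|_{\mc L^p}\le\|G\|_{\mc L^p}=d_p(\QQ_1,\QQ_2)$, and taking the supremum over $\psi\in\Psi$ yields $\mc D_\Psi(\QQ_1,\QQ_2)\le d_p(\QQ_1,\QQ_2)$.

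For the reverse inequality I would exhibit the Hölder extremizer explicitly. Assuming $d_p(\QQ_1,\QQ_2)>0$ (otherwise both sides vanish), set $g(z)=-\mathrm{sgn}(G(z))\,|G(z)|^{p-1}/\|G\|_{\mc L^p}^{p-1}$ and define $\psi^\star(z)=\int_0^z g(t)\,\dd t$. Because $(p-1)q=p$, one checks $\|g\|_{\mc L^q}^q=\int|G|^{(p-1)q}\,\dd z/\|G\|_{\mc L^p}^{(p-1)q}=\|G\|_{\mc L^p}^p/\|G\|_{\mc L^p}^p=1$, so $\psi^\star$ is absolutely continuous with $\|(\psi^\star)'\|_{\mc L^q}=1$, giving $\psi^\star\in\Psi$. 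Feeding $\psi^\star$ into the integration-by-parts identity yields $\int\psi^\star\,\dd\nu=-\int gG\,\dd z=\int|G|^p\,\dd z/\|G\|_{\mc L^p}^{p-1}=\|G\|_{\mc L^p}=d_p(\QQ_1,\QQ_2)$, whence $\mc D_\Psi(\QQ_1,\QQ_2)\ge d_p(\QQ_1,\QQ_2)$. Combining the two inequalities proves the claim.

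I expect the main obstacle to be the rigorous justification of the integration-by-parts identity, in particular controlling the boundary behaviour when $\psi^\star$ may itself be unbounded (it can grow like $|z|^{1/p}$) while $G$ merely tends to $0$. Routing this through Fubini and the mass-zero property $\nu(\R)=0$, rather than through a limiting boundary-term argument, is what makes the step clean; the remaining verifications (admissibility of $\Psi$, the Hölder extremal computation, and the degenerate case $G\equiv 0$) are routine.
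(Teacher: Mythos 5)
The paper does not prove Lemma~\ref{lemma:lpdual} itself; it is quoted verbatim from \cite[Lemma~1]{dedecker2007lpdual}, so there is no in-paper argument to compare against. Your proof is correct and is essentially the standard duality argument behind that reference: integration by parts converts $\int\psi\,\dd\nu$ into $-\int\psi'G\,\dd z$, H\"older gives the upper bound, and the explicit extremizer $g=-\mathrm{sgn}(G)|G|^{p-1}/\|G\|_{\mc L^p}^{p-1}$ (using $(p-1)q=p$) gives the matching lower bound. Your handling of the structural points is sound: the mass-zero property $\nu(\R)=0$ kills the constant $\psi(0)$, the observation that $p,q\in[1,\infty)$ with $1/p+1/q=1$ forces $p,q\in(1,\infty)$ makes the H\"older steps legitimate, and the sublinear growth bound $|\psi(z)-\psi(0)|\le\|\psi'\|_{\mc L^q}|z|^{1/p}$ correctly places $\Psi$ inside $\mc L_w(\R,\R)$ for $w(z)=1+|z|$ as required by Definition~\ref{def:ipm}. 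One small imprecision: to invoke Fubini you need the iterated integral of the \emph{absolute} integrand against the total variation $|\nu|$, not merely $\psi'G\in\mc L^1$ (which is the conclusion of the interchange rather than its hypothesis). This is easily repaired: $|\nu|([t,\infty))\le(1-F_{\QQ_1}(t^-))+(1-F_{\QQ_2}(t^-))$ for $t\ge 0$ (and symmetrically on the negative half-line), and these tail functions lie in $\mc L^p([0,\infty))$ because they are bounded by $1$ and integrable thanks to the first-moment condition built into $\mc Q_w(\R)$; H\"older against $\psi'\in\mc L^q$ then gives the required finiteness. With that patch the argument is complete.
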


For $p=\infty$, the $\mc L^p$-distance collapses to the Kolmogorov distance. The squared $\mc L^2$-distance is sometimes called the Cram{\'e}r distance~\citep{ref:cramer1928composition}, which can violate the triangle inequality and is thus only a semi-metric. The two-fold multiple of the squared $\mc L^2$-distance is also known as the energy distance~\citep{ref:baringhaus2004new, ref:szekely2004testing}. Hence, the square root of the (univariate) energy distance is an IPM. The square root of the (multivariate) energy distance is also an instance of a maximum mean discrepancy (MMD).

\begin{definition}[Maximum mean discrepancy]\label{def:mmd}
If $K \in\mathcal L(\R^n \times \R^n,\R)$ is a positive definite symmetric kernel and $w\in\mathcal L(\R^n ,[1,\infty))$ satisfies $\sup_{z\in\R^n} K(z,z')/w(z)<\infty$ for all~$z'\in\R^n$, then the maximum mean discrepancy between $\QQ_1, \QQ_2 \in \mc Q_w(\R^n)$ relative to~$K$ is given by
\begin{align*}
    {\rm d_{\rm MMD}}(\QQ_1, \QQ_2) &=\bigg(\int_{\R^n \times \R^n} K(z,z')\, \QQ_1(\dd z) \QQ_1(\dd z') + \int_{\R^n \times \R^n}  K(z, z') \, \QQ_2(\dd z)\QQ_2(\dd z') \\
    &\hspace{1cm}- 2\int_{\R^n\times \R^n} K(z, z')\,  \QQ_1(\dd z) \QQ_2(\dd z') \bigg)^\half.
\end{align*}
\end{definition}

\begin{lemma}[\cite{ref:sriperumbudur2012empirical}]
The MMD distance~$ d_{\textrm{\rm MMD}}$ induced by~$K$ matches the IPM~$\mc D_{\Psi}$ induced by the unit ball~$\Psi = \{\psi \in \mathbb H_K: \| \psi\|_{\mathbb H_K} \leq 1 \}$ in the reproducing kernel Hilbert space $\mbb H_K$ corresponding to~$K$.
\end{lemma}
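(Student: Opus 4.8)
The plan is to express both the MMD and the IPM $\mc D_\Psi$ through a single object, the kernel mean embedding, and to show that each equals the $\mbb H_K$-distance between the embeddings of $\QQ_1$ and $\QQ_2$. For each $\QQ\in\mc Q_w(\R^n)$ I would first introduce the \emph{mean embedding}
\[
    \mu_\QQ = \int_{\R^n} K(z,\cdot)\,\QQ(\dd z)\in\mbb H_K,
\]
understood as a Bochner integral. Its existence must be verified: since the reproducing property gives $\|K(z,\cdot)\|_{\mbb H_K}=\sqrt{K(z,z)}$, the integrability hypothesis on $K$ in Definition~\ref{def:mmd}, combined with $\QQ\in\mc Q_w(\R^n)$, guarantees $\int_{\R^n}\sqrt{K(z,z)}\,\QQ(\dd z)<\infty$, so that $z\mapsto K(z,\cdot)$ is Bochner integrable and $\mu_\QQ$ is a well-defined element of $\mbb H_K$. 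The same bound shows that every $\psi$ in the unit ball $\Psi$ lies in $\mc L_w(\R^n,\R)$, so that $\mc D_\Psi$ is meaningful on $\mc Q_w(\R^n)$.

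The next step is the key identity. Using the reproducing property $\psi(z)=\inner{\psi}{K(z,\cdot)}_{\mbb H_K}$ together with the defining property of the Bochner integral, which permits interchanging the bounded linear functional $\inner{\psi}{\cdot}_{\mbb H_K}$ with the integral, I would obtain
\[
    \int_{\R^n}\psi(z)\,\QQ(\dd z)=\inner{\psi}{\mu_\QQ}_{\mbb H_K}\qquad\forall\,\psi\in\mbb H_K.
\]
Applying this to $\QQ_1$ and $\QQ_2$ and subtracting yields $\int\psi\,\dd\QQ_1-\int\psi\,\dd\QQ_2=\inner{\psi}{\mu_{\QQ_1}-\mu_{\QQ_2}}_{\mbb H_K}$. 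Taking the supremum over $\Psi=\{\psi\in\mbb H_K:\|\psi\|_{\mbb H_K}\le 1\}$ and invoking the Cauchy-Schwarz inequality, with equality attained at $\psi=(\mu_{\QQ_1}-\mu_{\QQ_2})/\|\mu_{\QQ_1}-\mu_{\QQ_2}\|_{\mbb H_K}$ whenever the denominator is nonzero, gives
\[
    \mc D_\Psi(\QQ_1,\QQ_2)=\|\mu_{\QQ_1}-\mu_{\QQ_2}\|_{\mbb H_K}.
\]

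It then remains to identify this norm with $d_{\textrm{MMD}}$. Expanding the squared norm and invoking $\inner{K(z,\cdot)}{K(z',\cdot)}_{\mbb H_K}=K(z,z')$ once more, together with a vector-valued Fubini theorem to push the double integral through the inner product, I would arrive at
\[
    \|\mu_{\QQ_1}-\mu_{\QQ_2}\|^2_{\mbb H_K}=\iint K\,\dd\QQ_1\dd\QQ_1+\iint K\,\dd\QQ_2\dd\QQ_2-2\iint K\,\dd\QQ_1\dd\QQ_2,
\]
which is exactly $d_{\textrm{MMD}}(\QQ_1,\QQ_2)^2$ by Definition~\ref{def:mmd}. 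Combining the last two displays closes the argument.

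I expect the main obstacle to be the analytic justification of the two interchanges of integration with the Hilbert-space inner product, namely the Bochner integrability of $z\mapsto K(z,\cdot)$ and the applicability of a Fubini-type theorem for the $\mbb H_K$-valued double integral. These are precisely the points at which the weight-function integrability hypothesis of Definition~\ref{def:mmd} is essential; once they are secured, the remaining steps are elementary consequences of the reproducing property and the Cauchy-Schwarz inequality.
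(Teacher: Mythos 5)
Your proof is correct and is precisely the standard kernel mean-embedding argument from the cited reference; the paper itself states this lemma without proof, deferring to \cite{ref:sriperumbudur2012empirical}, and your chain (Bochner integrability of $z\mapsto K(z,\cdot)$, the identity $\int\psi\,\dd\QQ=\langle\psi,\mu_\QQ\rangle_{\mbb H_K}$, Cauchy--Schwarz with the extremal $\psi$, and expansion of $\|\mu_{\QQ_1}-\mu_{\QQ_2}\|_{\mbb H_K}^2$) is exactly how that reference proves it. The only caveat is a small overreach in the integrability step: the paper's hypothesis $\sup_{z}K(z,z')/w(z)<\infty$ for each \emph{fixed} $z'$ does not literally yield the diagonal bound $\int\sqrt{K(z,z)}\,\QQ(\dd z)<\infty$ that Bochner integrability requires, so you should either impose that condition directly (as Sriperumbudur et al.\ do) or point out that it is the condition actually needed.
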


If $K$ is any kernel satisfying $\|z -z'\| = K(z,z) + K(z', z') - 2 K(z, z')$, {\em e.g.}, if $K$ is a distance-induced kernel (see Appendix~\ref{app:dist-kernel}), then $2\cdot d^2_\textrm{MMD}$ reduces to the energy distance~\cite[Theorem~22]{ref:sejdinovic2013equivalence}. Another popular IPM is the total variation distance.
\begin{definition}[Total variation distance] The total variation distance of~$\QQ_1,\QQ_2\in\mc Q_1(\R^n)$ is $\mathcal{TV}(\QQ_1,\QQ_2) = \sup_{B \in \mc B(\R^n)} |\QQ_1(B) - \QQ_2(B)|$, where~$\mc B(\R^n)$ is the Borel $\sigma$-algebra on~$\R^n$.
\end{definition}
By construction, the total variation distance is an IPM generated by the indicator functions of the Borel sets~$B\in\mc B(\R^n)$. One can thus show that its maximal generator \cite[Definition~3.1]{ref:muller_1997} is given by~$\Psi=\{\psi\in\mc L(\R^n,\R):\|\psi\|_{\mc L^\infty}\leq 1\}$. A summary of the discussed IPMs is given in Table~\ref{tab:ipm_examples}.

\begin{table}[h]
\centering
\caption{Generators of commonly used IPMs}
\vspace{-5pt}
    \begin{tabular}[t]{ll}
    \toprule
        IPM $\mc D_{\Psi}$&Generator~$\Psi$\\ \midrule
        Kolmogorov distance&$\{\psi \in \mc L(\R, \R) : \exists \tau \in \R \text{ s.t. } \psi(y)=\mathbbm 1_{y\leq\tau}~\forall y\in\R\}$\\
        Wasserstein distance&$\{\psi\in \mc L(\R^n, \R): \lip(\psi) \leq 1\}$   \\
        $\mc L^p$-distance & $\{\psi\in W_0^{1,q}(\R): \|\psi'\|_{\mc L^q} \leq 1\}$, where $1/p+1/q = 1$\\
        $ \sqrt{\text{Cram{\'e}r distance}}$ & $\{\psi \in W_0^{1,2}(\R): \|\psi'\|_{\mc L^2}\leq 1\}$\\
        $\sqrt{ \text{Energy distance}}$&$\{\psi \in W_0^{1,2}(\R): \|\psi'\|_{\mc L^2} \leq \sqrt{2}\}$ \\
        $\text{Maximum mean discrepancy}$&$\{\psi \in \mathbb H_K: \|\psi\|_{\mathbb H_K} \leq 1\}$ \\
        Total variation distance&$\{\psi \in \mc L(\R^n, \R) : \|\psi\|_{\mc L^\infty} \leq 1\}$ \\
        \bottomrule
    \end{tabular}
    \label{tab:ipm_examples}
\end{table}
Different IPMs induce different unfairness measures. Observe first that each IPM~$\mathcal D_\Psi$ of Table~\ref{tab:ipm_examples} satisfies the identity of indiscernibles. This means that $\mathcal{D}_{\Psi}(\PP_{h(X)|A=0}, \PP_{h(X)|A=1})= 0$ if and only if~$\PP_{h(X)|A=0}$ coincides with~$\PP_{h(X)|A=1}$ or, put differently, if and only if $h$ satisfies SP at level~$0$. Hence, all IPMs agree on what it means for a hypothesis to be perfectly fair. However, they attribute different levels of unfairness to hypotheses with $\PP_{h(X)|A=0}\neq \PP_{h(X)|A=1}$. Fix now an IPM $\mc D_\Psi$ defined on $\mc Q_w$, and assume that $\mathbb P_{h(X)}\in \mc Q_w$ for all~$h\in\mc H$. In this case, the test functions $\psi\in\Psi$ can be viewed as utility functions, which are routinely used to model preferences under uncertainty~\cite{ref:von2007theory}. Specifically, if the prediction $h(X)$ impacts the well-being of a person with feature vector $X$, {\em e.g.}, if $h(X)$ determines the person's salary, then the expected utility $\mathbb E[\psi(h(X))|A=a]$ quantifies the person's expected satisfaction with the prediction~$h(X)$ if that person belongs to class~$a\in\mc A$. We can now use a hypothetical experiment to introduce a notion of {\em utilitarian fairness}. Imagine that you are asked to assess a hypothesis~$h$ before birth, that is, before knowing any of your own personal traits such as your feature vector~$X$, class~$A$ or utility function~$\psi$. In this situation, it is natural to call~$h$ $\varepsilon$-fair if the expected utilities conditional on~$A=0$ and~$A=1$ differ at most by~$\varepsilon$ for any~$\psi\in\Psi$, {\em i.e.}, if
\begin{equation}
    |\EE[\psi(h(X)) | A =0 ] -\EE[\psi(h(X))| A=1] | \leq\eps\quad \forall \psi \in \Psi.
    \label{eq:local-soc-fair-exp}
\end{equation}
This utilitarian perspective gives a physical interpretation to unfairness measures induced by IPMs.

\begin{lemma}[Utilitarian fairness]\label{lemma:soc_fair}
For any given hypothesis $h\in\mathcal{H}$ and any given IPM~$\mc D_\Psi$ on~$\mc Q_w$ such that $\mathbb P_{h(X)}\in\mc Q_w$, we have that $\mathcal{D}_{\Psi}(\PP_{h(X)|A=0}, \PP_{h(X)|A=1})\leq \varepsilon$ if and only if~\eqref{eq:local-soc-fair-exp} holds.
\end{lemma}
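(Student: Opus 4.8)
The plan is to reduce both sides of the asserted equivalence to one and the same statement about conditional expectations and then invoke the elementary characterization of a supremum. The argument is in essence an unpacking of Definition~\ref{def:ipm}, so I expect it to be short, with the only point requiring genuine care being an integrability check.

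First I would record the change-of-variables formula (the law of the unconscious statistician): for every $\psi\in\mc L_w(\R,\R)$ and every $a\in\{0,1\}$,
\[
    \int_{\R}\psi(z)\,\PP_{h(X)|A=a}(\dd z)=\EE[\psi(h(X))\mid A=a],
\]
which holds because $\PP_{h(X)|A=a}$ is, by definition, the law of $h(X)$ conditional on $A=a$. Before using this identity I must verify that everything is finite and, crucially, that the conditional laws lie in $\mc Q_w(\R)$ so that the IPM $\mc D_\Psi$ is even defined on them. By the law of total probability, $\PP_{h(X)}=\PP[A=0]\,\PP_{h(X)|A=0}+\PP[A=1]\,\PP_{h(X)|A=1}$, and integrating the weight $w$ gives $\int_\R w\,\dd\PP_{h(X)}=\sum_{a\in\{0,1\}}\PP[A=a]\int_\R w\,\dd\PP_{h(X)|A=a}$. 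Since $\PP_{h(X)}\in\mc Q_w$ by hypothesis and each demographic group occurs with positive probability, both conditional integrals of $w$ are finite, so $\PP_{h(X)|A=a}\in\mc Q_w$ for $a\in\{0,1\}$. As every $\psi\in\Psi\subseteq\mc L_w(\R,\R)$ satisfies $|\psi|\le C_\psi\,w$ for some finite constant $C_\psi$, it follows that $\EE[\psi(h(X))\mid A=a]$ is finite and the change-of-variables identity applies.

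Substituting $\QQ_1=\PP_{h(X)|A=0}$ and $\QQ_2=\PP_{h(X)|A=1}$ into Definition~\ref{def:ipm} and applying the identity above yields
\[
    \mc D_{\Psi}\big(\PP_{h(X)|A=0},\PP_{h(X)|A=1}\big)=\sup_{\psi\in\Psi}\big|\EE[\psi(h(X))\mid A=0]-\EE[\psi(h(X))\mid A=1]\big|.
\]
The equivalence is then immediate from the observation that a supremum of a family of non-negative numbers is bounded above by $\eps$ if and only if each member of the family is bounded above by $\eps$. Concretely, the left-hand quantity is at most $\eps$ precisely when $|\EE[\psi(h(X))\mid A=0]-\EE[\psi(h(X))\mid A=1]|\le\eps$ holds for every $\psi\in\Psi$, which is exactly condition~\eqref{eq:local-soc-fair-exp}. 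The main (and only mild) obstacle is the integrability and membership check in the second paragraph; once the conditional expectations are known to be finite and the conditional laws are known to lie in $\mc Q_w$, the remainder is a direct consequence of the definitions.
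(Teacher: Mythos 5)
Your proof is correct and follows essentially the same route as the paper's: the change-of-variables identity $\EE[\psi(h(X))\mid A=a]=\int_\R\psi(z)\,\PP_{h(X)|A=a}(\dd z)$ combined with the definition of the IPM as a supremum. The only difference is that you additionally verify that the conditional laws lie in $\mc Q_w$ via the total-probability decomposition, a detail the paper leaves implicit.
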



In summary, any IPM $\mc D_\Psi$ can be used to measure the deviation from perfect statistical parity, and its generator~$\Psi$ can be viewed as a family of utility functions. The degree of unfairness that $\mc D_\Psi$ assigns to any given hypothesis thus always has a utilitarian interpretation. 
{In addition, one can show that measuring unfairness via IMPs readily yields high-probability generalization bounds (see Theorem~\ref{thm:generalization} in Appendix~\ref{app:aux}).}
We conclude that, conceptually, there is no objective reason for preferring the Kolmogorov metric (which is used in the standard definition of SP in~\cite{ref:agarwal2019fair}, for example) over other IPMs. In the remainder of the paper we will argue, however, that other IPMs have distinct computational advantages over the Kolmogorov metric.


\section{Numerical Solution of Fair Learning Problems}
\label{sec:num_fair_learning}
From now on we assume that~$L$ is convex and that~$\mc H=\{h_\theta:\theta\in\Theta\}$, where~$\Theta$ is an open subset of a Euclidean space, while the parametric hypothesis~$h_\theta(x)$ is Lipschitz continuous in~$x$ for every fixed~$\theta$ and Lipschitz continuous in~$\theta$ for every fixed~$x$. For example, $h_\theta$ could be a linear hypothesis with gradient~$\theta$ or a multi-layer neural network whose weight matrices are encoded by~$\theta$. To find an optimal trade-off between prediction loss and unfairness, we can solve the fair learning problem
\begin{equation}\label{eq:fair-metric-learning}
    \min_{\theta\in\Theta} ~ \mathbb{E}[L(h_\theta(X), Y)] + \mathcal{U}(h_\theta),
\end{equation}
which differs from~\eqref{eq:loss-min} only in that its objective function involves an unfairness penalty in the form of $\mathcal{U}(h_\theta) = \rho(\mathcal{D}_{\Psi}(\PP_{h_\theta(X)|A=0}, \PP_{h_\theta(X)|A=1}))$,
where $\mathcal{D}_{\Psi}$ is an IPM and $\rho:\mathbb R_+\rightarrow\mathbb R_+$ is a smooth and non-decreasing regularization function. Below we will discuss how to solve problem~\eqref{eq:fair-metric-learning} to local optimality when the distribution~$\PP_{(X,Y,A)}$ of~$X$, $Y$ and~$A$ is only indirectly observable through training samples. 

\subsection{Empirical Risk Minimization}
\label{sec:empirical}
Assume that~$(\hat X_i,\hat Y_i,\hat A_i)$, $i\in\mathbb N$, is a stochastic process of independent and identically distributed (i.i.d.) training samples, all of which follow the probability distribution~$\PP_{(X,Y,A)}$. Define now~$\tau^a_t\in\mathbb N$ for each~$t\in\mathbb N$ and~$a\in\mc A$ via the recursion $\tau^a_{t}=\inf\{i>\tau^a_{t-1}:\hat A_i=a\}$ initialized with~$\tau^a_0=0$. Thus,~$\tau^a_t$ represents the index of the $t$-th sample in class~$a$. Note that~$\tau^a_{t}$ is a stopping time in the sense that the event~$\{\tau^a_{t}=i\}$ belongs to the $\sigma$-algebra generated by~$\hat X_1,\ldots,\hat X_i$ for any~$i\in\mathbb N$. To exclude trivialities, we assume from now on that~$\PP[A=a]>0$ for all~$a\in\mc A$, which implies that~$\tau^a_t$ is $\PP$-almost surely finite. Next, define~$\hat X^a_t=\hat X_{\tau^a_t}$ as the feature vector of the $t$-th sample in class~$a$. By~\cite[Lemma~5.3.4]{ref:chow1997probability}, the random vectors~$\hat X^a_t$, $t\in\mathbb N$, are i.i.d.\ for all~$a\in\mc A$. A simple generalization of the same argument shows that these random vectors are independent across all~$t\in\mathbb N$ and~$a\in\mc A$. In addition, $\hat X^a_t$ follows the probability distribution~$\PP_{X|A=a}$ of~$X$ conditional on~$A=a$ because
\begin{align*}
     \PP\left[\hat X^a_t\leq x\right] & =\PP\left[\hat X_1^a\leq x\right]=\sum_{i\in\mathbb N} \PP\left[\hat X_i\leq x,\,\tau^a_1=i\right] \\ & =\sum_{i\in\mathbb N}\PP\left[\hat X_i\leq x,\,\hat A_1\neq a,\ldots,\hat A_{i-1}\neq a,\hat A_i=a\right]\\
     & =\sum_{i\in\mathbb N}\PP\left[X\leq x,\,A=a\right] \PP\left[A\neq a\right]^{i-1}=\frac{\PP\left[X\leq x,\,A=a\right]}{1-\PP[A\neq a]}= \PP\left[X\leq x|A=a\right]
\end{align*}
for all~$x\in\mc X$, where the second equality exploits the law of total probability, the fourth equality holds because the training samples are i.i.d., and the sixth equality follows from the observation that~$1-\PP[A\neq a]=\PP[A=a]$. Next, for any fixed~$N\in\mathbb N$ define~$T^a_N=\max_{t\ge 0}\{t:\tau^a_t\leq N\}$, which counts how many out of the first~$N$ training samples belong to class~$a$. One can show that the random counters~$T^0_N$ and~$T^1_N$ are independent of~$\hat X^a_t$ for all~$t\in\mathbb N$ and~$a\in\mc A$. Using these notational conventions, we introduce conditional empirical distributions
\[
    \hat\PP^a_{N,\theta} =\frac{1}{T_N^a} \sum_{t=1}^{T^a_N} \delta_{h_\theta(\hat X^a_t)} \quad\forall a\in\mc A,
\]
where~$\delta_z$ denotes the Dirac measure at~$z\in\R$, and the empirical risk minimization problem
\begin{equation}
    \min_{\theta \in\Theta} ~ \frac{1}{N}\sum\limits_{i=1}^N L(h_\theta(\hat X_i), \hat Y_i) + \rho\left( \mc D_{\Psi}(\hat\PP^0_{N,\theta}, \hat\PP^1_{N,\theta} ) \right).
    \label{eq:saa}
    \tag{ERM}
\end{equation}
Problem~\eqref{eq:saa} is generically non-convex even if the hypotheses depend linearly on~$\theta$, and even if~$L$ and~$\rho$ are convex. Indeed, if~$\mc D_\Psi$ is an MMD distance and $\rho$ is quadratic, for example, then one can show that the unfairness penalty in~\eqref{eq:saa} represents a sum of convex and concave functions of~$\theta$. The lack of convexity is of little concern if the hypotheses represent multi-layer neural networks, which already display a non-trivial dependence on~$\theta$. We therefore propose to solve problem~\eqref{eq:saa} to local optimality via gradient descent-type algorithms. Problem~\eqref{eq:saa} is indeed amenable to such algorithms for commonly used hypothesis classes ({\em e.g.} linear hypotheses or neural networks with piecewise linear activation functions) and standard loss functions provided that $\rho(\mc D_{\Psi}(\mathbb Q_1, \mathbb Q_2))$ is piecewise differentiable with respect to the support points of any discrete distributions~$\mathbb Q_1$ and~$\mathbb Q_2$. The computational complexity of a gradient step critically depends on the choice of~$\rho$ and~$\mathds D_\psi$. For example, when $\mathds D_\psi$ is an MMD distance, evaluating the gradient of the unfairness penalty in~\eqref{eq:saa} requires~$\Omega(N^2)$ arithmetic operations, which is prohibitive for large training sets. 

\subsection{Stochastic Approximation}\label{sec:stochastic-approximation}
The state-of-the-art method for solving empirical risk minimization problems over large datasets and complex hypothesis spaces is the stochastic gradient descent (SGD) algorithm~\citep{ref:robbins1951stochastic} or its variants such as Adam~\citep{ref:kingma2014adam, ref:sashank2018convergence, ref:zhang2018improved} or Adadelta~\citep{ref:zeiler2012adadelta}.
In their simplest form, these algorithms mimic ordinary gradient descent but approximate each gradient using a single training sample. Thus, SGD uses far less memory and time per iteration than gradient descent at the cost of noisy updates. Unfortunately, it is not evident how SGD can be applied to problem~\eqref{eq:fair-metric-learning} because there is no meaningful way to estimate the unfairness penalty from one sample.
Additionally, SGD is not parallelizable and thus cannot exploit the full power of multicore CPUs and GPUs.
Mini-batch SGD uses gradient estimators constructed from several training samples and thus interpolates between gradient descent and plain vanilla SGD. A key advantage of mini-batch SGD is that it is amenable to efficient implementations. In order to guarantee converge, mini-batch SGD requires {\em unbiased} estimators for the gradients of the objective function of problem~\eqref{eq:fair-metric-learning}. Unfortunately, the empirical risk itself, that is, the objective function of~\eqref{eq:saa}, constitutes a {\em biased} estimator for the objective function of~\eqref{eq:fair-metric-learning}. This is a direct consequence of the following lemma.

\begin{lemma}[The empirical unfairness penalty is biased]
\label{lem:biased-empirical-risk}
For all~$N\in\mathbb N$ and~$\theta\in\Theta$ we have
\[
    \EE\left[ \rho\left(\left. \mc D_{\Psi}(\hat\PP^0_{N,\theta}, \hat\PP^1_{N,\theta} )\right) \right| T^0_N,T^1_N\ge 1\right] \geq \rho\left( \mc D_{\Psi}(\PP_{h_\theta(X)|A=0} , \PP_{h_\theta(X)|A=1})\right).
\]
The inequality is strict if $\rho$ increases strictly and  $\PP_{h_\theta(X)|A=0}=\PP_{h_\theta(X)|A=1}$ is no Dirac distribution.
\end{lemma}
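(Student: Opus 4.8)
The plan is to establish the inequality by conditioning on the realized group sizes and then invoking Jensen's inequality, exploiting that $\mc D_\Psi$ is jointly convex in its two arguments while the conditional empirical distributions are unbiased estimators of the true conditional laws. Throughout I write $\PP^a\defeq\PP_{h_\theta(X)|A=a}$ and $D\defeq\mc D_\Psi(\hat\PP^0_{N,\theta},\hat\PP^1_{N,\theta})$. I would first condition on $\{T^0_N=m,\,T^1_N=n\}$ for fixed $m,n\ge 1$. By the properties derived just before the lemma—that $h_\theta(\hat X^a_t)$ follows $\PP^a$, that these variables are independent across $t\in\mathbb N$ and $a\in\mc A$, and that the counters $T^0_N,T^1_N$ are independent of the $\hat X^a_t$—the conditioned samples are i.i.d.\ draws from $\PP^a$ within each group, with the two groups independent. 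Consequently, for every fixed $\psi\in\Psi$ the empirical integral $\inner{\psi}{\hat\PP^a_{N,\theta}}=\frac{1}{T^a_N}\sum_{t=1}^{T^a_N}\psi(h_\theta(\hat X^a_t))$ has conditional mean $\int\psi\,\dd\PP^a$, since $\psi\in\mc L_w$ and $\PP^a\in\mc Q_w$ guarantee integrability.

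The key step is a scalarization that collapses the supremum defining $\mc D_\Psi$ onto a single test function. Fix $\eps>0$ and choose $\psi^\star\in\Psi$ with $|\int\psi^\star\,\dd\PP^0-\int\psi^\star\,\dd\PP^1|\ge \mc D_\Psi(\PP^0,\PP^1)-\eps$. Because $D\ge|\inner{\psi^\star}{\hat\PP^0_{N,\theta}-\hat\PP^1_{N,\theta}}|$ and $\rho$ is non-decreasing, we get $\rho(D)\ge\rho(|\inner{\psi^\star}{\hat\PP^0_{N,\theta}-\hat\PP^1_{N,\theta}}|)$. The scalar $S\defeq\inner{\psi^\star}{\hat\PP^0_{N,\theta}-\hat\PP^1_{N,\theta}}$ has conditional mean $\int\psi^\star\,\dd\PP^0-\int\psi^\star\,\dd\PP^1$, and the map $s\mapsto\rho(|s|)$ is convex, being the composition of the convex absolute value with the convex non-decreasing $\rho$. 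Conditional Jensen then gives $\EE[\rho(|S|)\mid T^0_N=m,T^1_N=n]\ge\rho(|\int\psi^\star\,\dd\PP^0-\int\psi^\star\,\dd\PP^1|)\ge\rho(\mc D_\Psi(\PP^0,\PP^1)-\eps)$. Letting $\eps\downarrow0$ and using the continuity of the smooth $\rho$ yields the conditional bound $\EE[\rho(D)\mid T^0_N=m,T^1_N=n]\ge\rho(\mc D_\Psi(\PP^0,\PP^1))$; since this holds for every $m,n\ge1$, averaging over $(T^0_N,T^1_N)$ conditioned on $\{T^0_N,T^1_N\ge1\}$ preserves it and proves the claim.

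For the strictness assertion, observe that its hypothesis $\PP^0=\PP^1$ forces $\mc D_\Psi(\PP^0,\PP^1)=0$, so the right-hand side equals $\rho(0)$, while $D\ge0$ holds surely. If $\PP^0=\PP^1$ is not a Dirac measure, the two empirical measures are built from independent samples and hence differ with strictly positive probability; by the identity of indiscernibles satisfied by the IPMs in Table~\ref{tab:ipm_examples} this makes $D>0$ on a set of positive probability. As $\rho$ increases strictly, $\rho(D)>\rho(0)$ there, and taking expectations gives the strict inequality.

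The aspect that most deserves care is that the sign of the bias is governed entirely by convexity: it is the joint convexity of $\mc D_\Psi$ (a supremum of absolute values of functionals that are affine in the measure pair) together with the convexity and monotonicity of $\rho$ that renders $s\mapsto\rho(|s|)$—and hence $\rho\circ\mc D_\Psi$—convex, so that Jensen applies in the right direction. Monotonicity of $\rho$ by itself would not suffice; convexity of $\rho$ is the essential ingredient, and it is met, in particular, by the quadratic regularizer used later in the paper. The remaining technical points—that the supremum in $\mc D_\Psi$ need not be attained and that the group sizes are random—are exactly what the near-optimal $\psi^\star$ and the independence of $T^a_N$ from the samples are designed to absorb, reducing an otherwise infinite-dimensional Jensen inequality to the one-dimensional bound above.
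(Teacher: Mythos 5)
Your proof is correct and rests on the same two ingredients as the paper's own argument: convexity and monotonicity of $\rho$ (so that Jensen's inequality points the right way) together with the unbiasedness of the empirical integrals $\int\psi\,\dd\hat\PP^a_{N,\theta}$, which exploits the independence of the counters $T^a_N$ from the samples $\hat X^a_t$. The only difference is in the order of operations: you scalarize onto a near-optimal test function $\psi^\star$ and then apply Jensen to the convex map $s\mapsto\rho(|s|)$, whereas the paper first applies Jensen to $\rho$ and then exchanges the expectation with the supremum over $\Psi$; both routes yield the identical bound, and your strictness argument (positive probability that the empirical measures differ, plus the identity of indiscernibles and strict monotonicity of $\rho$) matches the paper's in substance.
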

Conditioning on~$T^a_N\ge 1$ ensures that the empirical distribution~$\hat\PP^a_{N,\theta}$ is well-defined.
Lemma~\ref{lem:biased-empirical-risk} shows that the empirical risk provides a {\em biased} estimator for the true risk, which suggests that the gradients of the empirical risk provide 
{\em biased} estimators for the gradients of the true risk. Nevertheless, the fair statistical learning problem~\eqref{eq:fair-metric-learning} is amenable to efficient SGD-type algorithms when~$\mc D_\Psi=d_{\rm MMD}$ is an MMD metric and~$\rho(z)=\lambda z^2$ is a quadratic penalty function with~$\lambda\ge 0$. We will now show that, in this special case, one can construct unbiased gradient estimators by using {\em random} batches of training samples. To this end, we set $\tau_1=1$ and define~$\tau_b$ for $b\geq 2$ recursively as the smallest integer satisfying $\tau_b- \tau_{b-1} \geq \bar N$ such that the set~$\mathcal I_b=\{\tau_{b},\ldots,\tau_{b + 1}-1\}$ contains the indices of at least two training samples of each class~$a\in\mathcal A$. By construction, $|\mc I_b|$ is not smaller than a given target batch size~$\bar N$. Defining~$\mc I_b^a=\{i\in\mc I_b:\hat A_i=a\}$, we further have $|\mc I_b^a|\geq 2$ for each~$a\in\mc A$. Using a similar reasoning as in Section~\ref{sec:empirical}, one can show that the index~$\tau_b$ of the first sample in the $b$-th batch is $\mathbb P$-almost surely finite and constitutes a stopping time for every~$b\in\mathbb N$. Conditional on~$|\mc I^a_b|=N^a$, one can also show that~$\{\hat X_i:i\in\mc I^a_b\}$ is a family of~$N^a$ i.i.d.\ training features governed by~$\PP_{X|A=a}$ for every~$a\in\mc A$. By construction, both the cardinality~$|\mc I_b|$ of the $b$-th batch and the cardinality $|\mc I^a_b|$ of its subfamily corresponding to any class~$a\in\mc A$ are random. Then,

\begin{align*}
    \hat U_{b}(\theta) = \sum\limits_{a \in \mc A}\ \sum\limits_{i, j \in \mc I_b^a,\, i\neq j} \!\! \frac{K(h_\theta(\hat X_{i}), h_\theta(\hat X_{j}))}{|\mc I_{b}^{a}| (|\mc I^{a}_b|\!-\!1)} -  2\sum\limits_{\substack{i\in \mc I_{b}^{0},\, j\in \mc I_{b}^{1}}} \!\!\frac{ K(h_\theta(\hat X_{i}), h_\theta(\hat X_{j}))}{|\mc I^{0}_{b}| |\mc I^{1}_{b}|} 
\end{align*}
is an unbiased estimator for the squared MMD distance between $\PP_{h_\theta(X)|A=0}$ and $\PP_{h_\theta(X)|A=1}$; see Proposition~\ref{prop:unbiased_mmd}. Note that if the number of training samples in each class $a\in\mc A$ was {\em deterministic} and not smaller than~2, then~$\hat U_b(\theta)$ would reduce to a classical~$U$-statistic and thus constitute the minimum variance unbiased estimator for the squared MMD distance \cite[\S~5]{ref:serfling2009approximation}. Given a stream of i.i.d.\ training samples~$(\hat X_i,\hat Y_i,\hat A_i)$, $i\in\mathbb N$, however, there is always a positive probability that a batch of deterministic cardinality contains less than two samples of one class, in which case $\hat U_{b}(\theta)$ is not defined. Hence, working with batches of random cardinality seems unavoidable to correct the bias in the empirical unfairness penalty. Unfortunately, this randomness introduces a bias in the empirical prediction error~$|\mc I_b|^{-1} \sum_{i \in \mc I_b} L(h_\theta(\hat X_i), \hat Y_i)$; see Proposition~\ref{prop:empirical_loss_Ib_bias}. While one could construct an unbiased estimator for the prediction error using only the first~$\bar{N}$ training samples in the $b$-th batch, this would amount to sacrificing the last $|\mc I_b|-\bar N$ training samples and thus be data-inefficient. We circumvent this problem by introducing bias correction terms defined via the auxiliary function
\[
    \Delta(N, n) = \mathbbm 1_{N = \bar N } + \frac{N}{2(N-1)} \mathbbm{1}_{(N > \bar N )\wedge (n = 2)} + \frac{N}{N-1} \mathbbm 1_{(N > \bar N) \wedge ( n = N-2) },
\]
where $N\in\{\bar N,\bar N+1,\ldots\}$ and $n \in\{2,\ldots,N-2\}$. Using this definition, we can prove that
\[
    \hat R_b(\theta) =\frac{1}{|\mc I_b|} \sum\limits_{a\in\mc A}  \sum\limits_{i \in \mc I_b^a}\Delta(|\mc I_b|, |\mc I_b^a|) L(h_\theta(\hat X_i), \hat Y_i)
\]
constitutes an unbiased estimator for~$\EE[L(h_\theta(X), Y)]$; see Proposition~\ref{prop:empirical_risk_unbiased_est}. In summary, Propositions~\ref{prop:unbiased_mmd} and~\ref{prop:empirical_risk_unbiased_est} imply that~$\hat R_b(\theta) + \lambda \hat U_{b}(\theta)$ is an unbiased estimator for the objective function of the fair learning problem~\eqref{eq:fair-metric-learning}. Unbiased gradient estimators are available under the following {technical assumption, which is satisfied by all kernels and loss functions commonly used in the machine learning literature.
}

\begin{assumption}[Uniform integrability]
    \label{ass:regularity}
    The loss function $L$ as well as the kernel function $K$ are piecewise differentiable, and every~$\theta\in\Theta$ has a neighborhood~$\Theta_0\subseteq\Theta$ such that 
\begin{equation*}
    \EE\left[\sup_{\theta\in\Theta'_0} \left\| \nabla_\theta K(h_\theta (\hat X_1), h_\theta(\hat X_2)) \right\|_2 \right]<\infty\quad\text{and} \quad \EE\left[\sup_{\theta\in\Theta'_0} \left\| \nabla_\theta L(h_\theta(\hat X_1),\hat Y_1)\right\|_2 \right]<\infty,
\end{equation*}
where~$\Theta_0'$ is the subset of~$\Theta_0$ on which the gradients exist. 
\end{assumption}

\begin{theorem}[Unbiased gradient estimators]
\label{thm:unbiasedness_batch}
If~$\mc D_\Psi=d_{\rm{MMD}}$ is an MMD metric, $\rho(z)=\lambda z^2$ is a quadratic penalty function with $\lambda\ge 0$ and Assumption~\ref{ass:regularity} holds, then~$\nabla_{\theta}\hat R_b(\theta) + \lambda\nabla_\theta\hat U_b(\theta)$ constitutes an unbiased estimator for the gradient of the objective function of problem~\eqref{eq:fair-metric-learning}.
\end{theorem}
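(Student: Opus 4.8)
The goal is to show that the gradient estimator $\nabla_\theta \hat R_b(\theta) + \lambda \nabla_\theta \hat U_b(\theta)$ is unbiased for the gradient of the objective of~\eqref{eq:fair-metric-learning}. The plan is to reduce this to the already-established unbiasedness of the estimators themselves (Propositions~\ref{prop:unbiased_mmd} and~\ref{prop:empirical_risk_unbiased_est}) by interchanging the gradient and the expectation. Concretely, since Proposition~\ref{prop:empirical_risk_unbiased_est} gives $\EE[\hat R_b(\theta)] = \EE[L(h_\theta(X),Y)]$ and Proposition~\ref{prop:unbiased_mmd} gives $\EE[\hat U_b(\theta)] = d^2_{\mathrm{MMD}}(\PP_{h_\theta(X)|A=0},\PP_{h_\theta(X)|A=1})$, the objective of~\eqref{eq:fair-metric-learning} with $\rho(z)=\lambda z^2$ equals $\EE[\hat R_b(\theta)] + \lambda\, \EE[\hat U_b(\theta)]$. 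Hence it suffices to prove that $\nabla_\theta \EE[\hat R_b(\theta)] = \EE[\nabla_\theta \hat R_b(\theta)]$ and $\nabla_\theta \EE[\hat U_b(\theta)] = \EE[\nabla_\theta \hat U_b(\theta)]$, and then add the two identities.

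First I would justify the interchange of differentiation and expectation. The standard tool here is the dominated-convergence theorem for derivatives (a Leibniz-type rule): if the integrand is differentiable in $\theta$ almost surely and its gradient is dominated, locally uniformly in $\theta$, by an integrable random variable, then one may differentiate under the expectation. This is exactly what Assumption~\ref{ass:regularity} supplies. The piecewise differentiability of $L$ and $K$ together with the Lipschitz continuity of $h_\theta$ in $\theta$ (assumed at the start of Section~\ref{sec:num_fair_learning}) ensures that $\theta\mapsto K(h_\theta(\hat X_i), h_\theta(\hat X_j))$ and $\theta\mapsto L(h_\theta(\hat X_i),\hat Y_i)$ are differentiable for almost every $\theta$, and the two uniform-integrability bounds provide the integrable dominating functions on a neighborhood $\Theta_0'$ of each $\theta$.

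The main subtlety, and the step I expect to require the most care, is that $\hat U_b(\theta)$ and $\hat R_b(\theta)$ are not fixed finite sums but sums over the \emph{random} index sets $\mc I_b^a$, whose cardinalities $|\mc I_b^a|$ and $|\mc I_b|$ are themselves random. The clean way to handle this is to condition on the batch composition, i.e., on the event $\{|\mc I_b^0|=N^0,\ |\mc I_b^1|=N^1\}$. Conditional on this event, each estimator is a finite deterministic-length sum of terms of the form treated above, the summation weights $\tfrac{1}{|\mc I_b^a|(|\mc I_b^a|-1)}$ and the bias-correction factors $\Delta(|\mc I_b|,|\mc I_b^a|)$ become constants, and the conditional expectation of the gradient equals the gradient of the conditional expectation by the dominated-convergence argument of the previous paragraph (the dominating bound from Assumption~\ref{ass:regularity} applies uniformly and is thus integrable against the conditional law as well). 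One then takes the outer expectation over the batch composition; since the bound in Assumption~\ref{ass:regularity} is uniform in the sample indices, the total number of summands in each batch has finite expectation, so the tower property and a final application of dominated convergence let one pull the gradient through the outer expectation as well.

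Putting these pieces together, I obtain $\EE[\nabla_\theta \hat R_b(\theta)] = \nabla_\theta \EE[L(h_\theta(X),Y)]$ and $\lambda\,\EE[\nabla_\theta \hat U_b(\theta)] = \lambda\, \nabla_\theta d^2_{\mathrm{MMD}}(\PP_{h_\theta(X)|A=0},\PP_{h_\theta(X)|A=1}) = \nabla_\theta\, \rho(\mc D_\Psi(\PP_{h_\theta(X)|A=0},\PP_{h_\theta(X)|A=1}))$. Summing the two yields that $\nabla_\theta \hat R_b(\theta) + \lambda\nabla_\theta \hat U_b(\theta)$ is an unbiased estimator of the gradient of $\EE[L(h_\theta(X),Y)] + \mathcal U(h_\theta)$, which is precisely the objective of~\eqref{eq:fair-metric-learning}, completing the proof.
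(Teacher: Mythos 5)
Your proposal is correct and follows essentially the same route as the paper: the paper likewise combines Propositions~\ref{prop:unbiased_mmd} and~\ref{prop:empirical_risk_unbiased_est} to get unbiasedness of $\hat R_b(\theta)+\lambda\hat U_b(\theta)$ and then interchanges gradient and expectation, invoking a standard lemma of Glasserman that is applicable under Assumption~\ref{ass:regularity}. The only difference is that you spell out the dominated-convergence and conditioning-on-batch-composition details that the paper delegates to the cited lemma.
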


In practice, one may use automatic differentiation to evaluate the gradient estimators of Theorem~\ref{thm:unbiasedness_batch} ({\em e.g.}, using Autograd in PyTorch or GradientTape in TensorFlow). {SGD-type algorithms using these unbiased estimators are guaranteed to converge, in expectation, to a stationary point of problem~\eqref{eq:fair-metric-learning} provided that the following mild continuity and smoothness conditions are satisfied. 

\begin{assumption}[Continuity and smoothness conditions]\label{ass:smoothness} The following hold.
\begin{enumerate}
    \item[(a)] The hypothesis $h_\theta(x)$ is $S_h$-smooth and $L_h$-Lipschitz continuous in $\theta$ for all fixed~$x\in\mc X$.
    \item[(b)] The loss function $L(\hat y, y)$ is $S_L$-smooth and $L_L$-Lipschitz continuous in~$\hat y$ and admits an unbiased gradient oracle with bounded variance for all fixed~$y\in\mc Y$.
    \item[(c)] The Kernel function $K(y_1,y_2)$ is $S_K$-smooth and $L_K$-Lipschitz continuous in~$(y_1,y_2)$.
\end{enumerate}
\end{assumption}

Assumptions~(a) and~(b) imply that the loss function $L(h_\theta(x),y)$ is smooth in $\theta$ and thus ensure the convergence of SGD-type algorithms in the absence of unfairness penalties. Assumptions~(a) and~(c) imply that $K(h_\theta(x_1),h_\theta(x_2))$ is smooth in $\theta$. Note that many commonly used kernels such as the radial basis function or sigmoid kernels satisfy this assumption. Thus, Assumption~\ref{ass:smoothness} does not impose any restrictive conditions beyond those that are already needed in the absence of unfairness penalties.

\begin{theorem}[Convergence rate of SGD]
\label{thm:convergence-rate}
Suppose that Assumptions~\ref{ass:regularity} and~\ref{ass:smoothness} hold, $\mc D_\Psi=d_{\rm{MMD}}$ is an MMD metric, $\rho(z)=\lambda z^2$ is a quadratic penalty function with $\lambda\ge 0$ and $F(\theta)=\mathbb{E}[L(h_\theta(X), Y)] + \mathcal{U}(h_\theta)$ is the objective function of~\eqref{eq:fair-metric-learning}. Then, there exist $\overline{\gamma},C>0$ such that the following holds: If $\{\theta_t\}_{t=1}^T$ are the iterates of a mini-batch SGD algorithm with gradient estimator $\nabla_{\theta}\hat R_b(\theta) + \lambda\nabla_\theta\hat U_b(\theta)$ and stepsize~$\gamma\in (0,\overline{\gamma}]$, then
$$
    \frac{1}{T}\sum_{t=1}^T \|\nabla F(\theta_t)\|^2 \leq\frac{2}{\gamma T} \left(F(\theta_1)-\min_{\theta\in\Theta} F(\theta)\right) + \gamma C.
$$
\end{theorem}
\begin{proof}[Proof of Theorem~\ref{thm:convergence-rate}]
From Theorem~\ref{thm:unbiasedness_batch} we know that the gradient estimator at hand is unbiased under Assumption~\ref{ass:regularity}. As the unfairness penalty is proportional to a squared MMD metric, it becomes evident from Definition~\ref{def:mmd} that~$F$ is representable as the expected value of a sum of composite integrands. By Assumptions~\ref{ass:smoothness}, these composite integrands inherit the smoothness and Lipschitz continuity from their component functions. In addition, by~\cite[Lemma~1]{ref:Glassermann-differentiability}, which applies thanks to Assumption~\ref{ass:regularity},
it is easy to show that~$F$ is $S_F$-smooth for some~$S_F>0$. Using similar arguments, one can show that $\hat R_b(\theta) + \lambda \hat U_b(\theta)$ is $L_{\hat F}$-Lipschitz for some $L_{\hat F}>0$.  
We thus have $\|\nabla_{\theta}\hat R_b(\theta) + \lambda\nabla_\theta\hat U_b(\theta)\|\leq L_{\hat F}$ for all~$\theta\in\Theta$. Popoviciu's inequality on variances then implies that ${\rm Var}(\nabla_{\theta}\hat R_b(\theta) + \lambda\nabla_\theta\hat U_b(\theta))\leq L_{\hat F}^2/4$. The claim finally follows from~\cite[Lemma~A.1]{hu2021bias} if we set $\overline{\gamma}=\frac{1}{2S_{ F}}$ and $C=S_F L_{\hat F}^2/4$. 
\end{proof}

Theorem~\ref{thm:convergence-rate} implies that, under standard regularity conditions, SGD-type algorithms based on our unbiased gradient estimator converge to a stationary point of problem~\eqref{eq:fair-metric-learning} at rate~$\mc O(1/\sqrt{T})$.
}

\section{Numerical Experiments}\label{sec:numerical}
Our approach to fair learning consists in solving problem~\eqref{eq:fair-metric-learning}, where the unfairness penalty is constructed from the $\mc L^2$-distance $\mathcal{D}_\Psi=d_2$ and the quadratic regularization function~$\rho(z) = 2\lambda z^2$ with regularization weight~$\lambda\geq 0$. Hence, the unfairness penalty is proportional to the energy distance. Moreover, as the $\mc L^2$-distance is a special case of an MMD metric with a distance-induced kernel, we can use the techniques developed in Section~\ref{sec:num_fair_learning} to obtain unbiased batch gradients (see Theorem~\ref{thm:unbiasedness_batch}) and solve problem~\eqref{eq:fair-metric-learning} with the Adam optimizer~\citep{ref:kingma2014adam}. In the following we refer to the procedure outlined above as {\em metrized fair learning} (MFL). 
MFL is implemented in PyTorch~\citep{ref:NEURIPS2019_bdbca288}, and all experiments are run on an Intel~i7-10700 CPU~(2.9GHz) computer with 32GB RAM. The corresponding codes are available at
{\url{https://github.com/RAO-EPFL/Metrizing-Fairness}.} 

\subsection{Online Learning}\label{sec:exp-online}
In the first experiments we use synthetic and real data to show that even slightly biased gradient estimators can have a detrimental effect on the performance of SGD-type algorithms.


\subsubsection{Regression}
Consider first a fair regression problem with~$d=10$ features, where the feature vector~$X$ follows the uniform distribution on~$[0,1]^9\times\{0,1\}$, the protected attribute~$A$ coincides with the last component of~$X$, and the target satisfies~$Y = \max_{j\in[5]} \langle s_{j}, X\rangle$ for some independent random vectors~$s_{j}$ that follow the uniform distribution on~$[-2,2]^{10}$. The training samples~$(\hat X_i,\hat Y_i,\hat A_i)$, $i\in\mathbb N$, are drawn independently from~$\PP_{(X,Y,A)}$, where the vectors~$s_{j}$ are re-sampled only when $i$ is a multiple of~$2{,}000$ and kept constant otherwise. We solve the resulting regression problem with the proposed MFL method, where the loss function and the regularization weight are set to~$L(\hat y,y)=(\hat y-y)^2$ and~$\lambda=2$, respectively, while the hypothesis space~$\mc H$ is identified with the family of all neural networks with one hidden layer accommodating 20~nodes, ReLU (Rectified Linear Unit) activation functions at the hidden layer and linear activation functions at the output layer. The target batch size~$\bar N$ is either set to $4$ or to~$50$, and the choice of all other hyperparameters is detailed in Appendix~\ref{app:trainig_details}.
To showcase the merits of unbiased gradient estimators, we solve the regression problem at hand also with a variant of MFL, which uses the gradient of the empirical risk corresponding to a given batch of $\bar N$ training samples as a biased estimator for the gradient of the true risk.

Figure~\ref{fig:online-learning} visualizes the out-of-sample risk of the optimal regressor output by the original unbiased MFL approach and its biased variant as a function of the target batch size~$\bar N\in\{4,50\}$ and the number of training samples. The out-of-sample risk is evaluated on 1{,}000 test samples that follow the same distribution as the last seen training sample. The solid lines represent averages and the shaded areas visualize standard errors corresponding to five independent replications of the same experiment. Choosing the larger target batch size~$\bar N=50$ slows down convergence of both methods because a gradient step can only be implemented once at least~50 training samples have been accumulated. However, if the gradient estimators are biased, then a large target batch size is needed to ensure convergence to a low out-of-sample risk. Indeed, the out-of-sample risk of the biased MFL method with~$\bar N=4$ saturates already after about 500 samples at a relatively high level, while the out-of-sample risk corresponding to~$\bar N=50$ continues to decrease at a steady rate even after 4{,}000 training samples. Since standard MFL works with unbiased gradient estimators, it does not suffer from such a trade-off, that is, smaller values of~$\bar N$ are preferable both in terms of convergence speed and in terms of the out-of-sample risk at equilibrium. Hence, increasing~$\bar N$ only reduces the variance of the gradient estimators. 

\begin{figure}[t]
    \centering

    \begin{subfigure}[t]{0.49\columnwidth}
    \includegraphics[width=\linewidth]{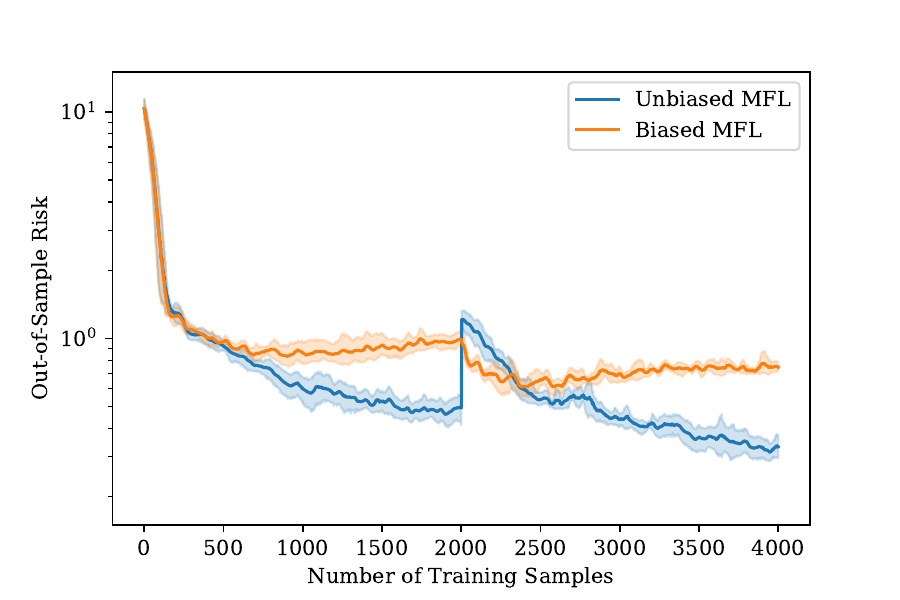}
    \caption{$\bar{N}=4$}
    \end{subfigure}
    \begin{subfigure}[t]{0.49\columnwidth}
    \includegraphics[width=\linewidth]{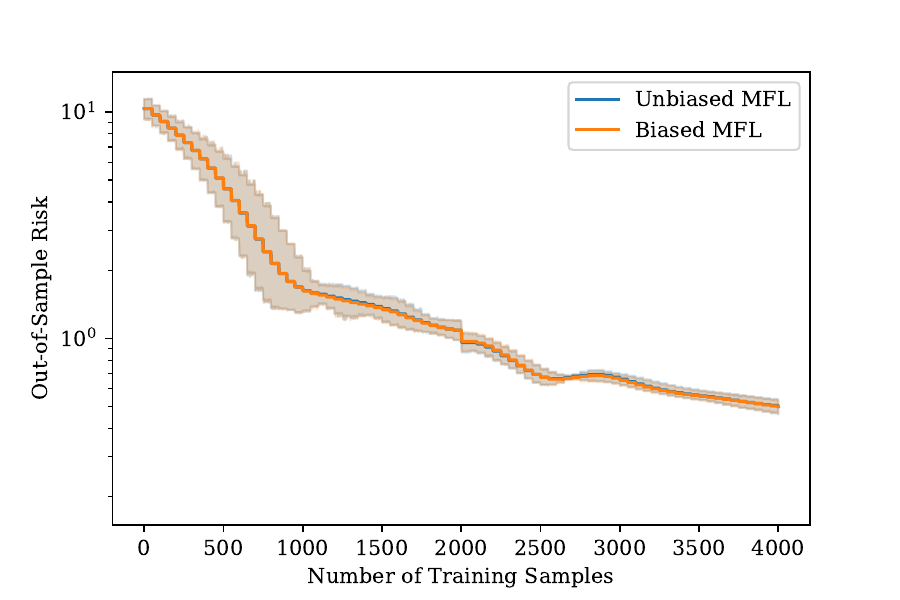}
    \caption{$\bar{N}=50$}
    \end{subfigure}
    
    \caption{Test loss of the optimal regressors output by the biased and unbiased MFL methods for target batch sizes $\bar{N}\in\{4,50\}$ as a function of the number of training samples}
    \label{fig:online-learning}
\end{figure}

The key insights of this first experiment can be summarized as follows. Methods that use biased gradient estimators either converge quickly to a low-quality solution (if $\bar N$ is small) or 
converge slowly to a high-quality solution (if $\bar N$ is large). Methods that use {\em un}biased gradient estimators, on the other hand, 
converge quickly to a high-quality solution (if~$\bar N$ is small). Quick convergence to equilibrium is particularly desirable when the distribution of the samples changes over time, {\em e.g.}, due to occasional regime shifts as in our experiment.

\subsubsection{Classification}
Consider now the fair classification problem based on the drug dataset~\citep{ref:fehrman2017five} described in Appendix~\ref{app:numerical}. An infinite stream of training samples~$(\hat X_i,\hat Y_i,\hat A_i)$, $i\in\mathbb N$, is generated by repeatedly concatenating shuffled copies of the training set. Note, however, that the resulting data stream is only approximately i.i.d. We then solve the fair classification problem with our MFL approach, where the loss function and the regularization weight are set to the cross entropy loss~\cite[\S~16.5.4]{ref:murphy2012machine} and $\lambda=1$, respectively, while the hypothesis space~$\mathcal H$ is identified with the family of all neural networks with one hidden layer accommodating 16 nodes, ReLU activation functions at the hidden layer and sigmoid activation functions at the output layer. The target batch size~$\bar N$ is treated as a free parameter, and the choice of all other hyperparameters is detailed in Appendix~\ref{app:trainig_details}. To showcase the merits of unbiased gradient estimators, we solve the classification problem at hand also with a variant of MFL, which uses the gradient of the empirical risk corresponding to a given batch of $\bar N$ training samples as a biased estimator for the gradient of the true risk. Finally, we use the biased variant of MFL to solve an `unfair' version of the classification problem with~$\lambda=0$. This is tantamount to solving the classification problem without an unfairness penalty via empirical risk minimization. In this case, the gradient estimators cease to be biased even though they are constructed from deterministic data batches that are not guaranteed to contain at least two samples of each class~$a\in\mathcal A$. 


Figure~\ref{fig:ablation} visualizes the accuracy-fairness trade-off of the optimal classifiers for various target batch sizes~$\bar N$ (color-coded), where SP-unfairness is measured by the Kolmogorov metric. The training of all classifiers is stopped after passing through $500$ shuffled copies of the training set (which contains $1{,}413$ samples), and their accuracy and SP-unfairness are evaluated on the test set (which contains 472 samples). Dots represent averages and error bars represent standard errors corresponding to ten independent replications of the same experiment. 
Figure~\textsc{\ref{fig:ablation-unregularized}} shows that classical empirical risk minimization without an unfairness penalty yields classifiers with an accuracy of at least $80\%$ and a relatively high level of SP-unfairness of about~$20\%$. We also observe that the algorithm's performance is insensitive to the batch size~$\bar N$.
Next, Figure~\textsc{\ref{fig:ablation-biased}} shows that
including an unfairness penalty with weight $\lambda=1$ and solving the classification problem via biased MFL significantly reduces unfairness at the expense of slightly reducing accuracy. Using biased gradient estimators ostensibly renders the performance highly sensitive to the batch size~$\bar N$. This is in stark contrast to standard MFL with {\em un}biased gradient estimators, which outputs classifiers that are largely independent of the target batch size and display a consistently high accuracy and low unfairness; see Figure~\textsc{\ref{fig:ablation-unbiased}}. Since MFL uses the squared $\mathcal L^2$-distance (that is, the energy distance) to penalize unfairness, one can show that the per-iteration memory usage of the Adam optimizer scales as~$\Theta(\bar N^2)$. Hence, even though MFL with large mini-batches is susceptible to parallel implementations, its memory consumption poses a challenge as~$\bar N$ increases. 
It has also been observed that small batch sizes improve the generalization performance of the trained classifiers and speed up convergence \cite{ref:keskar2016large, ref:lecun2012efficient, ref:masters2018revisiting, ref:wilson2003general}. We thus conclude that small batch sizes are preferable from a computational perspective. However, their computational benefits can only be harnessed when unbiased gradient estimators are available.


\begin{figure}
    \centering
    \begin{subfigure}[t]{0.32\columnwidth}
    \includegraphics[width=\linewidth]{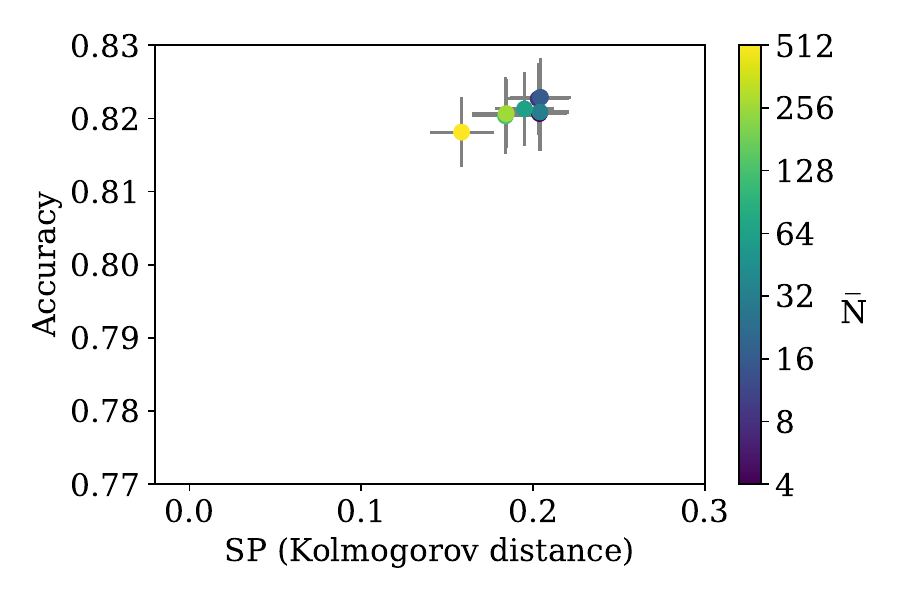}
    \caption{Training w/o unfairness penalty}
    \label{fig:ablation-unregularized}
    \end{subfigure}\hspace{2mm}
    \begin{subfigure}[t]{0.32\columnwidth}
    \includegraphics[width=\linewidth]{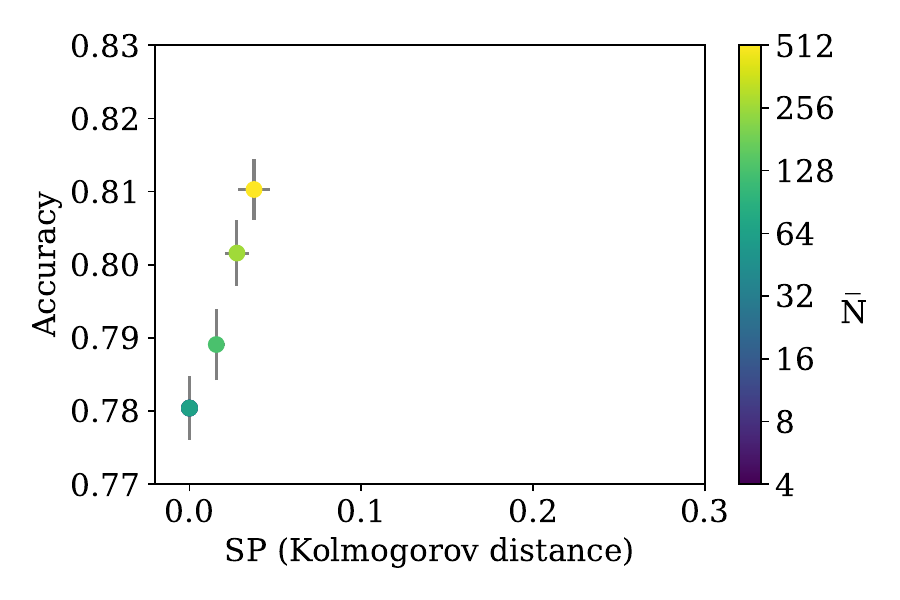}
    \caption{Training with biased gradient estimators and with unfairness penalty}
    \label{fig:ablation-biased}
    \end{subfigure}\hspace{2mm}
    \begin{subfigure}[t]{0.32\columnwidth}
    \includegraphics[width=\linewidth]{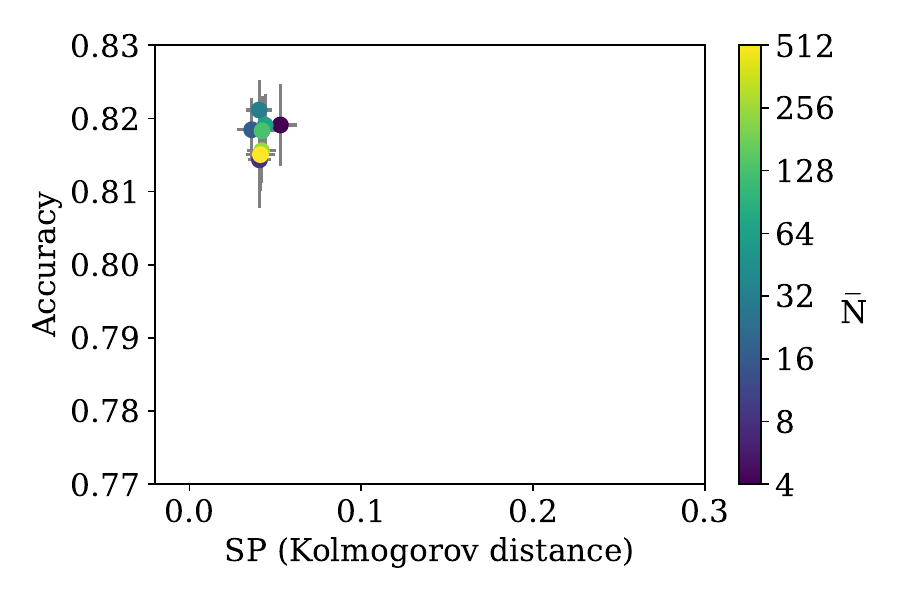}
    \caption{Training with unbiased gradient estimators and with unfairness penalty}
    \label{fig:ablation-unbiased}
    \end{subfigure}
    \caption{Impact of the target batch size $\Bar{N}$ (color-coded) on the means (dots) and std.\ errors (error bars) of the accuracy and the SP-unfairness of the trained classifiers on test data}
    \label{fig:ablation}
\end{figure}

\subsection{Offline Learning}\label{sec:exp-offline}
Oftentimes we have no access to an infinite stream of independent training samples but are only given a finite training dataset in tabular form. In this offline setting, the best we can hope for is to solve the empirical risk minimization problem~\eqref{eq:saa}. This problem can be interpreted as an instance of~\eqref{eq:fair-metric-learning}, where~$\PP_{(X,A,Y)}$ corresponds to the discrete empirical distribution on the given training samples, and thus the methods of Section~\ref{sec:num_fair_learning} can readily be applied to find a local minimizer. Specifically, an infinite stream of independent training samples can be generated by sampling from the given dataset. However, problem~\eqref{eq:saa} can also be solved with a simpler offline MFL method that uses batches of a fixed deterministic size. This is possible because the given dataset can be used to generate two separate (finite) data streams, each containing independent samples from only one class~$a\in\mathcal A$. Note that the class probabilities $p_a=\PP[A=a]$ for $a\in\mc A$ are easy to compute if~$\PP_{(X,A,Y)}$ represents the empirical distribution. Using the two data streams, we can then construct batches that contain exactly $\lceil p_0\bar N\rceil$ samples of class~$0$ and $\bar N-\lceil p_0\bar N\rceil$ samples of class~$1$. In this case, the batch cardinalities~$|\mc I_b|$, $|\mc I_b^0|$ and~$|\mc I_b^1|$ become deterministic, and one readily verifies that the gradient estimator derived in Section~\ref{sec:num_fair_learning} remains unbiased. The only restriction is that one must choose a sufficiently large batch size~$\bar N$ to ensure that~$|\mc I_b^a|\geq 2$ for every~$a\in\mathcal A$. All experiments in this section are based on tabular datasets and thus use offline MFL to solve the empirical risk minimization problem~\eqref{eq:saa}. More precisely, below we assess the accuracy-fairness trade-offs and runtimes of offline MFL on five standard datasets (Drug~\citep{ref:fehrman2017five}, Communities\&Crime (CC)~\citep{ref:redmond2002data,ref:Dua2019}, Compas~\citep{ref:propublica}, Adult~\citep{ref:Dua2019} and Student Grades~\citep{cortez2008student_dataset}).

\subsubsection{Regression}
\label{sec:offline-regression} 
In all regression experiments, we solve an instance of~\eqref{eq:saa}, where~$L$ is the squared error. In linear regression, $\mathcal H$ is set to the family of all linear hypotheses. In neural network regression, on the other hand, $\mathcal H$ comprises all neural networks with one hidden layer accommodating 20 nodes, ReLU activation functions at the hidden layer and linear activation functions at the output layer. For further details see Appendix~\ref{app:trainig_details}. We compare offline MFL against two methods by Berk et al.~\citep{ref:berk2017convex}, which train a linear regressor by solving a convex optimization problem implemented in CVXPY~\citep{agrawal2018CVXPY}.
We sweep the unfairness penalty parameter~$\lambda$ of the MFL method from~$10^{-5}$ to~$10^{3}$ for the Student Math and Student Portugese datasets, and from~$10^{-5}$ to~$10^{2}$ for the CC dataset, all in~50 equal steps on a log~scale. Similarly, we sweep the unfairness penalty parameter~$\lambda$ of the methods by Berk et al.~\citep{ref:berk2017convex} from~$10^{-2}$ to~$10^{5}$ in~50 equal steps on a log~scale.

Figure~\ref{fig:R2_fairness_plots} visualizes the trade-off between the goodness-of-fit (measured by the coefficient of determination~$R^2$) and SP-unfairness (measured by the Kolmogorov metric) of the trained hypotheses on test data averaged over 10 independent replications of the same experiment with randomly permuted data. Table~\ref{tab:auc_regression} reports the average areas under the respective goodness-of-fit-unfairness curves (AUC; see Appendix~\ref{app:auc} for a precise definition) as well as the average training times. We observe that MFL consistently attains superior AUC values and is at least one order of magnitude faster than the baseline methods on the large Communities and Crime dataset. On the smaller Student Math and Student Portugese datasets the runtimes of MFL remain competitive.

\begin{figure}
    \begin{center}
    \begin{subfigure}[t]{0.32\columnwidth}
    \centering
    \includegraphics[width=\columnwidth]{  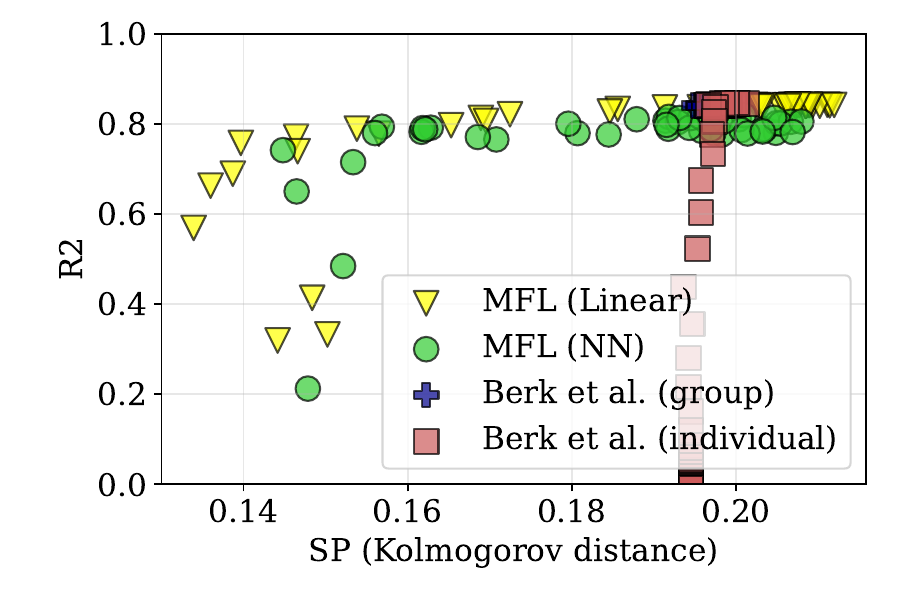}
    \vspace{-.7cm}
    \caption{Student Portugese}
    \label{fig:studentsportugese_regression}
    \end{subfigure}
    \begin{subfigure}[t]{0.32\columnwidth}
    \centering
    \includegraphics[width=\columnwidth]{  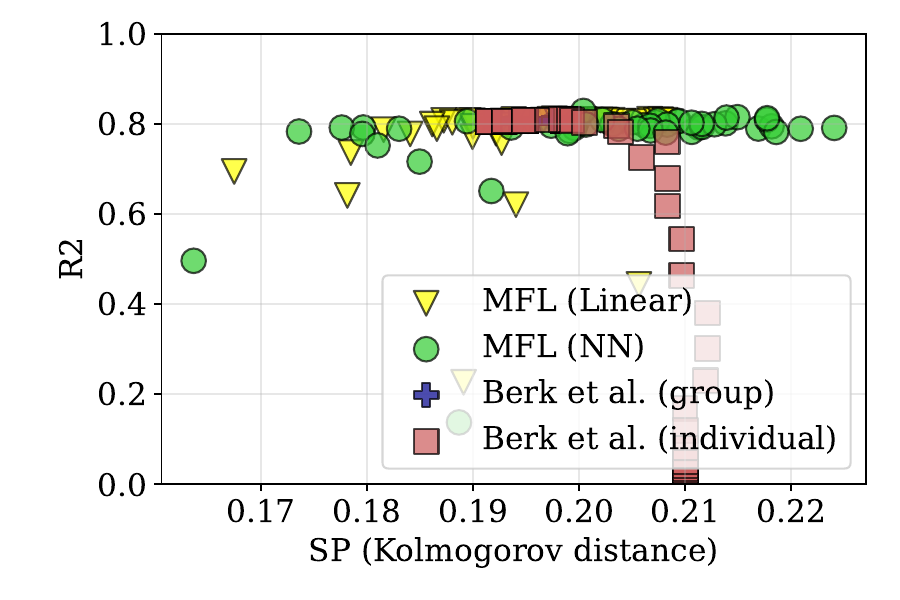}
    \vspace{-.7cm}
    \caption{Student Math}
    \label{fig:studentsmath_regression}
    \end{subfigure}
    \begin{subfigure}[t]{0.32\columnwidth}
    \centering
    \includegraphics[width=\columnwidth]{ 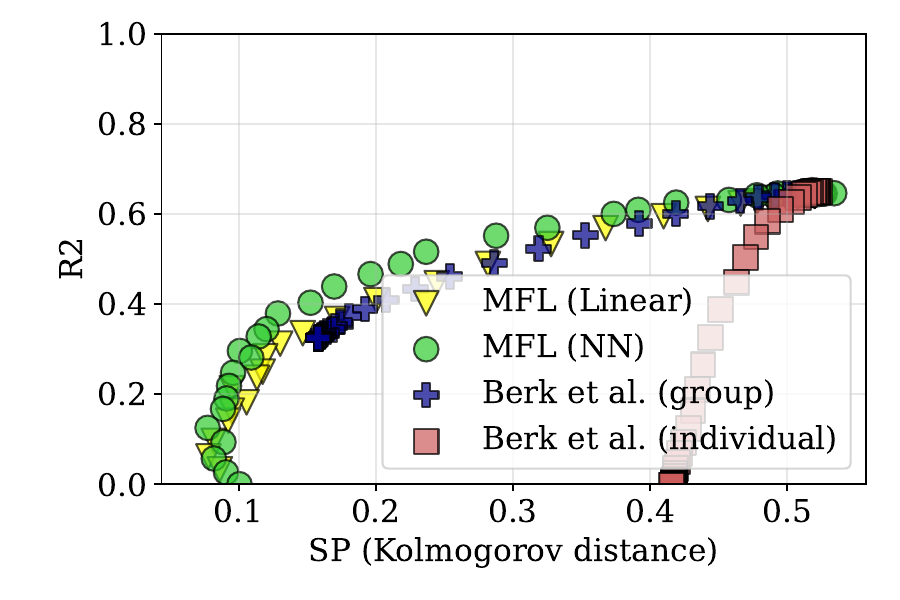}
    \vspace{-.7cm}
    \caption{Communities$\&$Crime}
    \label{fig:communitiescrime_regression}
    \end{subfigure}
    \end{center}
\centering
\vspace{-.4cm}
    \caption{$R^2$ vs SP-unfairness on test data for regression tasks averaged over 10 simulations}
    \label{fig:R2_fairness_plots}
\end{figure}

\begin{table}[h]
    \centering
    \caption{AUC values (mean $\pm$ std.\ error) and training times (mean) for regression tasks}
    \vspace{-5pt}
    \begin{tabular}{llcccc}
    \toprule
         Dataset &Performance &$\substack{\text{\citet{ref:berk2017convex}} \\ \text{(individual)}}$& $\substack{\text{\citet{ref:berk2017convex}} \\\text{(group)}}$ & MFL~(Linear)&MFL~(NN)\\\midrule
         \multirow{2}{*}{Student Math} &AUC&0.531$\pm$0.042  &0.430$\pm$0.055 &0.624$\pm$0.029 &\textbf{0.634$\pm$0.034}\\
         &Time & 1.48 secs &0.21 secs &4.07 secs &4.69 secs\\ \midrule
         \multirow{2}{*}{Student Portugese} &AUC &0.679$\pm$0.013 &0.642$\pm$0.029 & \textbf{0.733$\pm$0.012} &0.704$\pm$0.016\\
         &Time&4.67 secs &1.10 secs& 6.12 secs &7.06 secs\\ \midrule
         \multirow{2}{*}{CC}&AUC &0.357$\pm$0.008 & 0.503$\pm$0.009 & 0.517$\pm$0.007 & \textbf{0.541$\pm$0.005}\\
         &Time& 710.23 secs&168.49 secs & 11.39 secs &16.54 secs\\
         \bottomrule
    \end{tabular}
    \label{tab:auc_regression}
\end{table}

\subsubsection{Classification} 
\label{sec:offline-classification}
In all classification experiments, we solve an instance of \eqref{eq:saa}, where $L$ is the cross entropy loss~\cite[\S~16.5.4]{ref:murphy2012machine}. As in Section~\ref{sec:offline-regression}, we work with two different hypothesis spaces~$\mathcal H$. In linear classification, we set $\mathcal H$ to the family of all linear hypotheses. In neural network classification, on the other hand, we set it to the family of all neural networks with one hidden layer accommodating 16 nodes, ReLU (Rectified Linear Unit) activation functions at the hidden layer and sigmoid activation functions at the output layer. 
For further details see Appendix~\ref{app:trainig_details}. We compare offline MFL against three baselines: (i)~a convexified empirical logistic regression model with relaxed fairness constraints proposed by~\citet{ref:zafar2017fairness}, (ii)~an instance of~\eqref{eq:saa} with cross entropy loss proposed by~\citet{ref:cho2020fair}, which is solved with the Adam optimizer using biased gradient estimators, and (iii)~an instance of~\eqref{eq:saa} with cross entropy loss and a Sinkhorn divergence-based unfairness penalty proposed by~\citet{ref:oneto2020expoliting}, which is solved via gradient descent. Detailed information on these baselines is also provided in Appendix~\ref{app:trainig_details}. In all models we sweep the unfairness penalty parameter~$\lambda$ from~$10^{-5}$ to~10 in~25 equal steps on a log~scale.

\begin{figure}
    \begin{center}
    \begin{subfigure}[t]{0.24\columnwidth}
    \centering
    \includegraphics[width=\columnwidth]{ 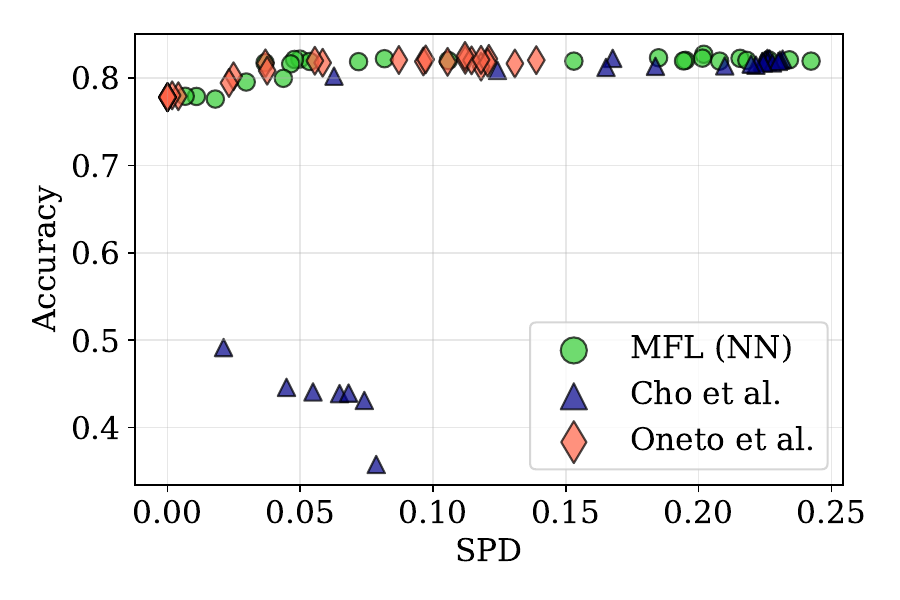}
    \vspace{-.7cm}
    \caption{Drug}
    \label{fig:drug}
    \end{subfigure}
    \begin{subfigure}[t]{0.24\columnwidth}
    \centering
    \includegraphics[width=\columnwidth]{ 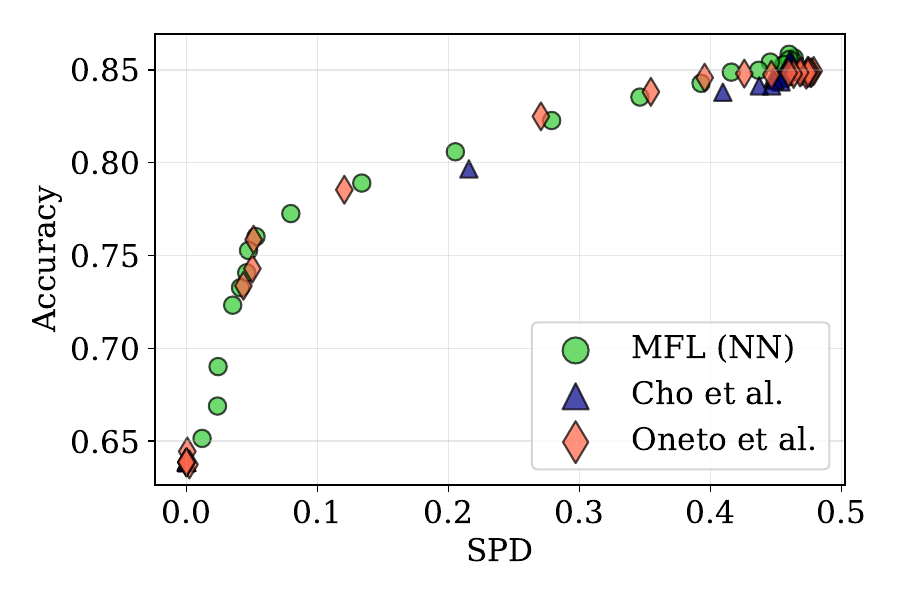}
    \vspace{-.7cm}
    \caption{Communities$\&$Crime}
    \end{subfigure}
    \begin{subfigure}[t]{0.24\columnwidth}
    \centering
    \includegraphics[width=\columnwidth]{ 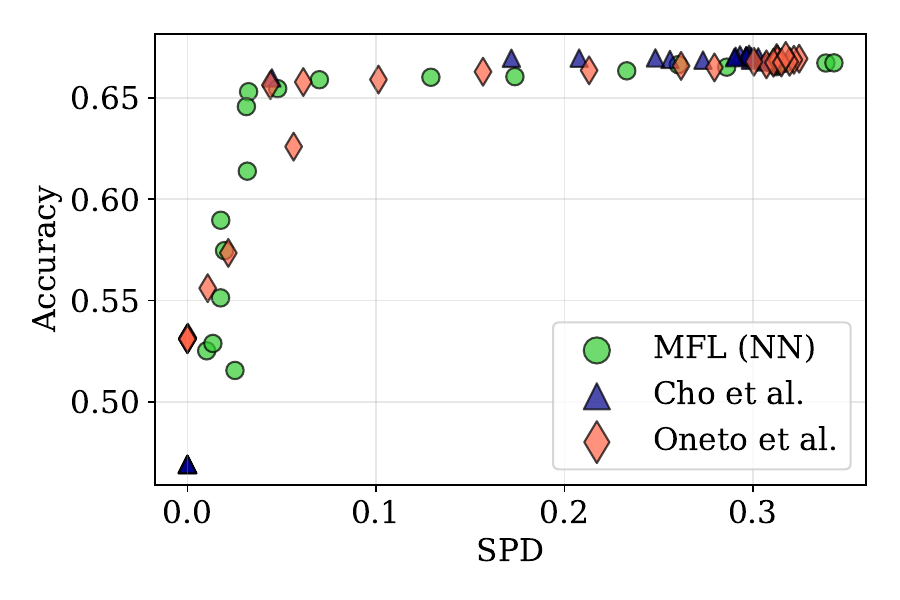}
    \vspace{-.7cm}
    \caption{Compas}
    \end{subfigure}
    \begin{subfigure}[t]{0.24\columnwidth}
    \centering
    \includegraphics[width=\columnwidth]{ 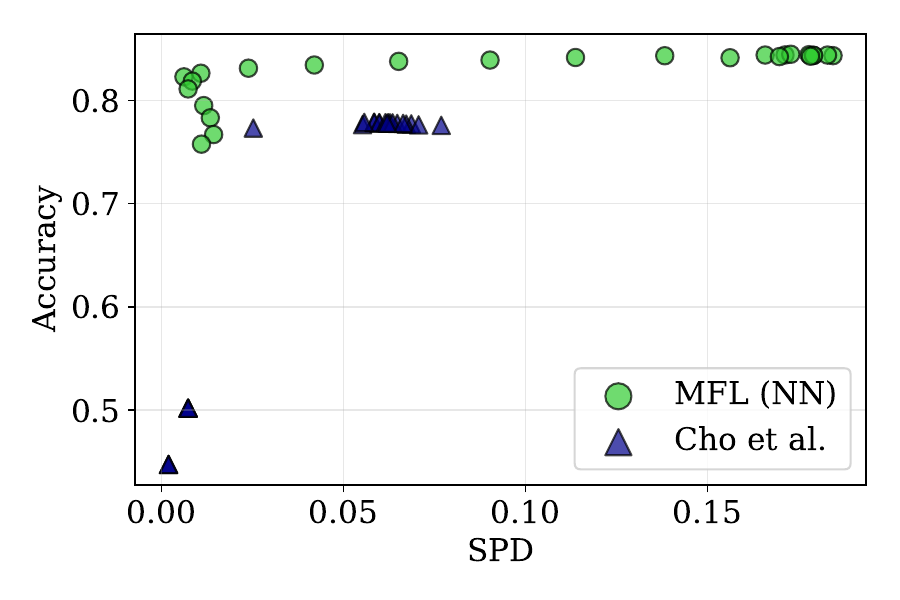}
    \vspace{-.7cm}
    \caption{Adult}
    \label{fig:adult}
    \end{subfigure}
    \begin{subfigure}[t]{0.24\columnwidth}
    \centering
    \includegraphics[width=\columnwidth]{ 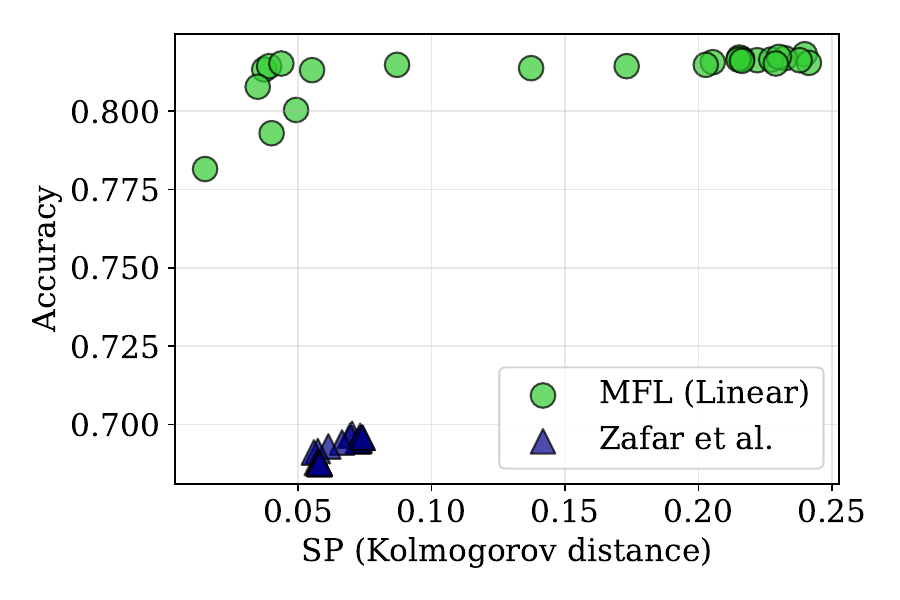}
    \vspace{-.7cm}
    \caption{Drug}
    \label{fig:drug-linear}
    \end{subfigure}
    \begin{subfigure}[t]{0.24\columnwidth}
    \centering
    \includegraphics[width=\columnwidth]{ 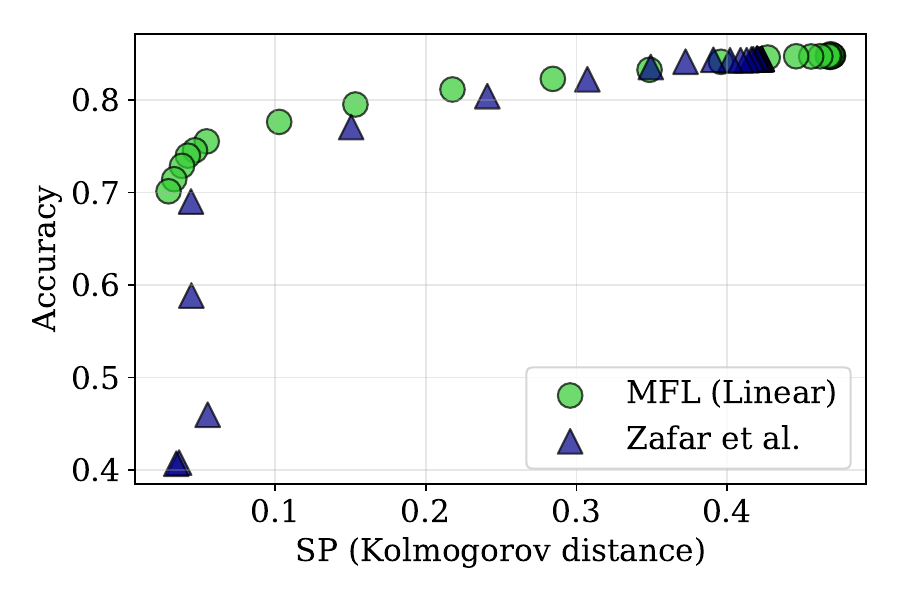}
    \vspace{-.7cm}
    \caption{Communities$\&$Crime}
    \end{subfigure}
    \begin{subfigure}[t]{0.24\columnwidth}
    \centering
    \includegraphics[width=\columnwidth]{ 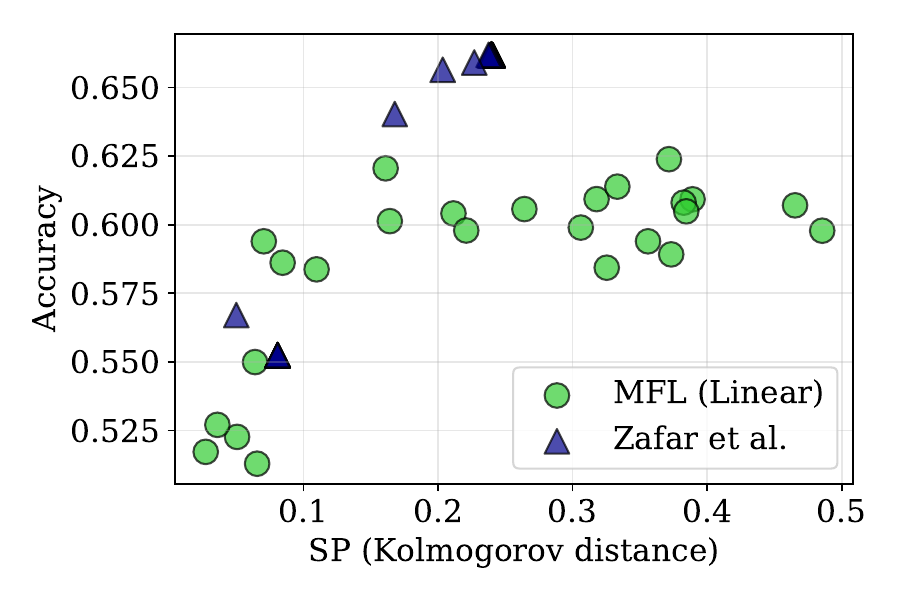}
    \vspace{-.7cm}
    \caption{Compas}
    \end{subfigure}
    \begin{subfigure}[t]{0.24\columnwidth}
    \centering
    \includegraphics[width=\columnwidth]{ 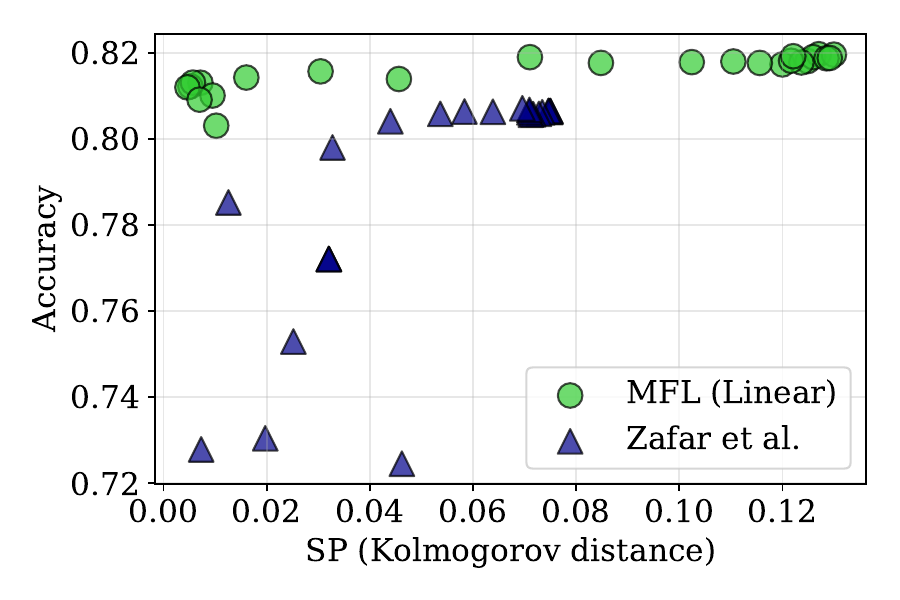}
    \vspace{-.7cm}
    \caption{Adult}
    \label{fig:adult-linear}
    \end{subfigure}
    \end{center}
\centering
\vspace{-.4cm}
    \caption{Accuracy vs SP-unfairness on test data for neural network-based (\textsc{\ref{fig:drug}}--\textsc{\ref{fig:adult}}) and linear (\textsc{\ref{fig:drug-linear}}--\textsc{\ref{fig:adult-linear}}) classification tasks averaged over 10 simulations}
    \label{fig:accuracy_fairness_plots}
\end{figure}
\begin{table}[h]
    \centering
    \caption{AUC values (mean $\pm$ std.\ error) and training times (mean) for classification tasks}
    \vspace{-5pt}
    \begin{tabular}{ll|ccc|cc}
    \toprule
         Dataset &Performance & \citet{ref:cho2020fair} & \citet{ref:oneto2020expoliting}&MFL~(NN)&\citet{ref:zafar2017fairness}&MFL~(Linear)\\\midrule
         \multirow{2}{*}{Drug} &AUC  &0.786$\pm$0.008 &\textbf{0.825$\pm$0.005} &0.823$\pm$0.005&0.668$\pm$0.013&\textbf{0.814$\pm$0.004}\\
         &Time &10.54 secs &416.18 secs &6.66 secs&0.35 secs& 5.89 secs \\ \midrule
         \multirow{2}{*}{CC}&AUC &0.817$\pm$0.004 &\textbf{0.829$\pm$0.004} &\textbf{0.829$\pm$0.003}&0.787$\pm$0.004 &\textbf{0.812$\pm$0.003}\\
         &Time& 13.52 secs& 217.27 secs & 8.39 secs&81.05 secs& 5.93 secs\\ \midrule
         \multirow{2}{*}{Compas} &AUC &\textbf{0.667$\pm$0.004} & 0.666$\pm$0.003 &0.661$\pm$0.003&\textbf{0.625$\pm$0.006 }&0.605$\pm$0.004\\
         &Time&17.33 secs& 2299.37 secs &9.99 secs&0.44 secs & 9.87 secs\\ \midrule
         \multirow{2}{*}{Adult}&AUC &0.769$\pm$0.002 &- &\textbf{0.842$\pm$0.000}&0.799$\pm$0.000 &\textbf{0.820$\pm$0.001}\\
         &Time&141.35 secs &- &149.61 secs& 29.45 secs& 159.80 secs\\ 
         \bottomrule
    \end{tabular}
    \label{tab:auc_time}
\end{table}

Figure~\ref{fig:accuracy_fairness_plots} visualizes the accuracy-fairness trade-off of the trained hypotheses on test data, where SP-unfairness is measured by the Kolmogorov metric, averaged over 10 independent replications of the same experiment with randomly permuted data. Note that the methods by~\citet{ref:cho2020fair} and~\citet{ref:oneto2020expoliting} allow for neural network classifiers, while the method by~\citet{ref:zafar2017fairness} can only handle linear classifiers and is thus at a disadvantage. Therefore, we compare the methods by~\citet{ref:cho2020fair} and~\citet{ref:oneto2020expoliting} only against MFL with neural network classifiers (Figures~\textsc{\ref{fig:drug}}--\textsc{\ref{fig:adult}}) and the method by~\citet{ref:zafar2017fairness} only against MFL with linear hypotheses (Figures~\textsc{\ref{fig:drug-linear}}--\textsc{\ref{fig:adult-linear}}). Table~\ref{tab:auc_time} reports the corresponding average AUC values as well as the average training times. The method by \citet{ref:oneto2020expoliting} failed to train on the Adult dataset due to memory constraints. Note that MFL performs favorably vis-\`a-vis its competitors in that it often attains the highest AUC values, while training faster than the other neural network-based methods. The method by~\citet{ref:zafar2017fairness} relies on convex optimization and thus provides fast computation when the input dimension~$d$ is low, but it cannot learn non-linear input-output relations. In addition, even when restricted to linear classifiers, MFL usually outperforms the method by~\citet{ref:zafar2017fairness} with regard to AUC. As \citet{ref:oneto2020expoliting} do not use unbiased gradient estimators, they resort to classical gradient descent algorithms to solve their empirical risk minimization models. For large training datasets, the time necessary for training is thus no longer competitive.

{
\subsubsection{Pre- and Post-Processing Methods}
So far we have compared MFL against other in-processing methods. However, fairness can also be enforced via \textit{pre}-processing methods, which eliminate biases from the dataset before training, and \textit{post}-processing methods, which alter the prediction scores of a pre-trained hypothesis. We now compare MFL against the \emph{pre}-processing methods by~\citet{jiang2020identifying}, who correct label biases by re-weighting the training samples, \citet{roh2021fairbatch}, who use adaptive unbiased minibatches for training, and~\citet{ref:kamiran2012data}, who employ \emph{massaging}~(M) or \emph{reweighing}~(R) to change some labels close to the decision surface or to create an adaptive reweighing of the dataset, respectively. 
In addition, we compare MFL against the  \emph{post}-processing method by~\citet{pmlr-v202-xian23b}, who solve a Wasserstein barycenter problem subject to fairness constraints in order to determine a threshold for mapping model scores to fair label predictions.

The next experiment focuses on the neural network-based classification tasks described in Section~\ref{sec:offline-classification}. Figure~\ref{fig:accuracy_fairness_plots_prepost} visualizes the accuracy-fairness trade-off of the trained hypotheses on test data, where SP-unfairness is measured by the Kolmogorov metric, averaged over 10 independent replications of the same experiment with randomly permuted data. Note that none of the \emph{pre}-processing methods has tunable fairness hyperparameters, and the \emph{post}-processing method cannot attain high fairness levels on the test set. It is therefore not meaningful to report AUC values. We observe that the pre-processing methods are dominated by MFL, which has the added benefit that the trade-off between accuracy and fairness can be tuned. The post-processing method by~\citet{pmlr-v202-xian23b} displays a similar performance as MFL but covers only a part of the Pareto frontier in the unfairness-accuracy plane. 
Note that MFL can reach higher fairness levels because it enforces SP on the \emph{scores} instead of the \emph{predicted labels}, which has a stronger regularizing effect. 
\begin{figure}
    \begin{center}
    \begin{subfigure}[t]{0.24\columnwidth}
    \centering
    \includegraphics[width=\columnwidth]{ 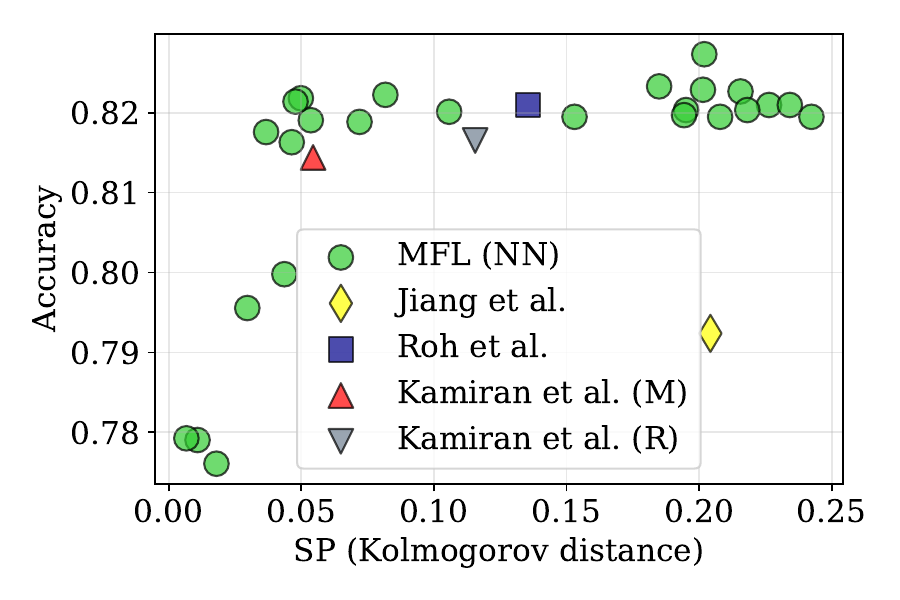}
    \vspace{-.7cm}
    \caption{Drug}
    \label{fig:drug_pre}
    \end{subfigure}
    \begin{subfigure}[t]{0.24\columnwidth}
    \centering
    \includegraphics[width=\columnwidth]{ 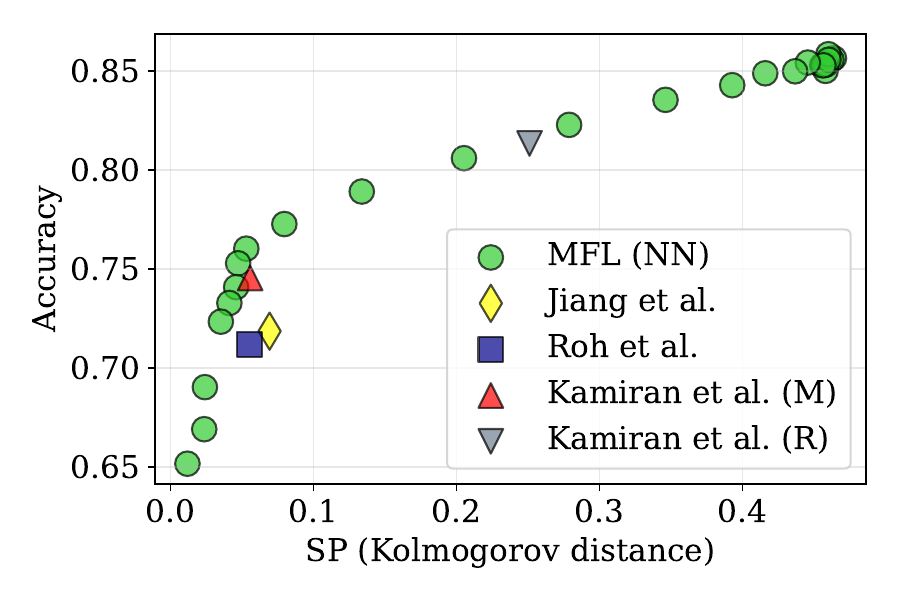}
    \vspace{-.7cm}
    \caption{Communities$\&$Crime}
    \end{subfigure}
    \begin{subfigure}[t]{0.24\columnwidth}
    \centering
    \includegraphics[width=\columnwidth]{ 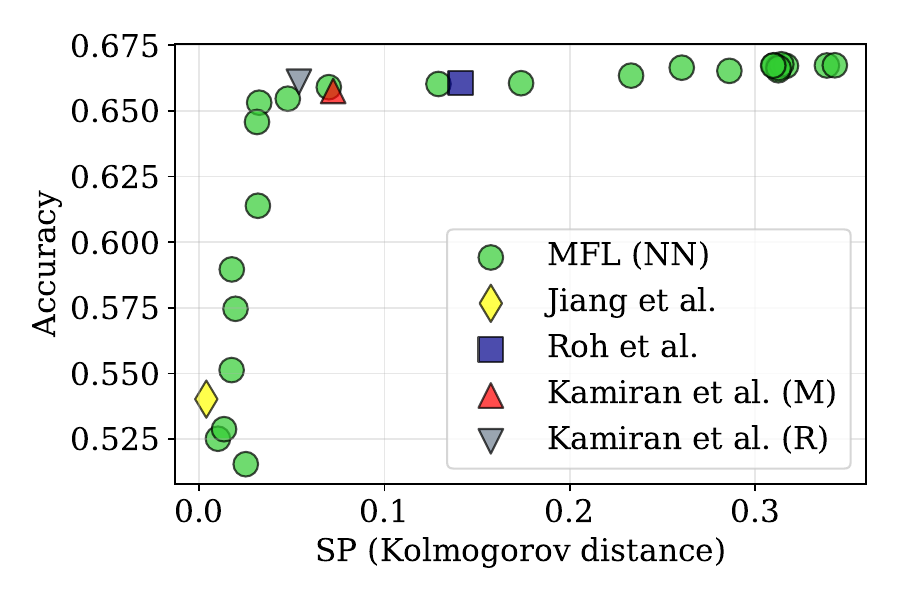}
    \vspace{-.7cm}
    \caption{Compas}
    \end{subfigure}
    \begin{subfigure}[t]{0.24\columnwidth}
    \centering
    \includegraphics[width=\columnwidth]{ 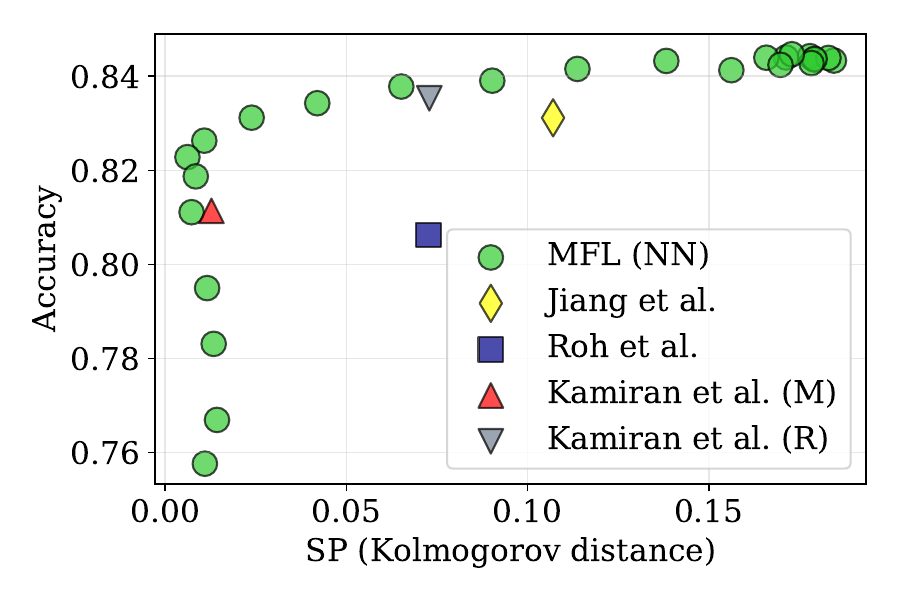}
    \vspace{-.7cm}
    \caption{Adult}
    \label{fig:adult_pre}
    \end{subfigure}
    \begin{subfigure}[t]{0.24\columnwidth}
    \centering
    \includegraphics[width=\columnwidth]{ 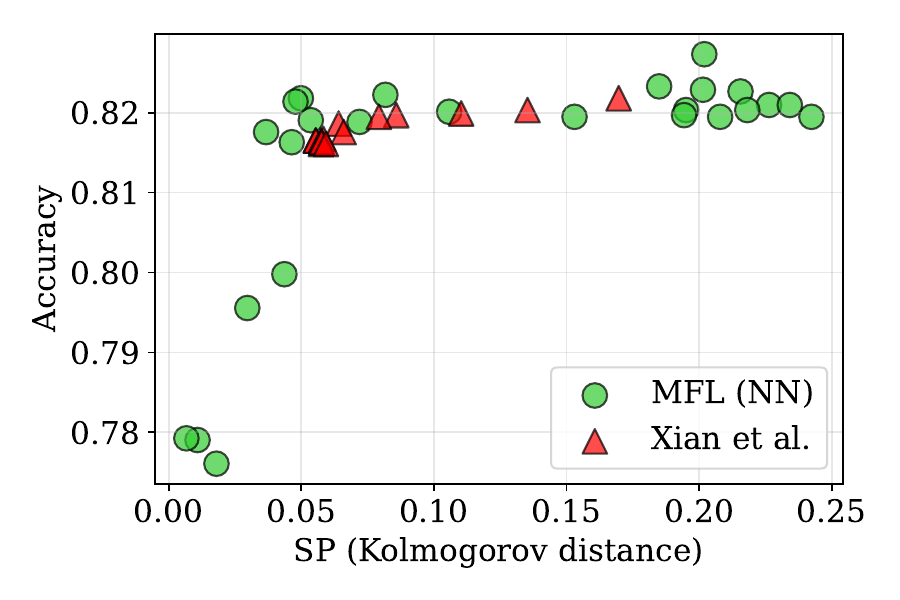}
    \vspace{-.7cm}
    \caption{Drug}
    \label{fig:drug_post}
    \end{subfigure}
    \begin{subfigure}[t]{0.24\columnwidth}
    \centering
    \includegraphics[width=\columnwidth]{ 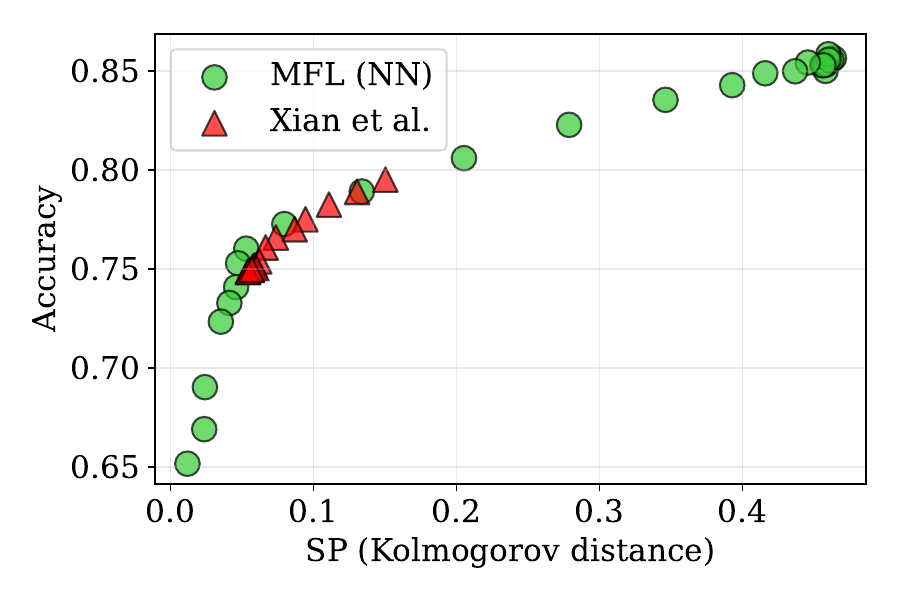}
    \vspace{-.7cm}
    \caption{Communities$\&$Crime}
    \end{subfigure}
    \begin{subfigure}[t]{0.24\columnwidth}
    \centering
    \includegraphics[width=\columnwidth]{ 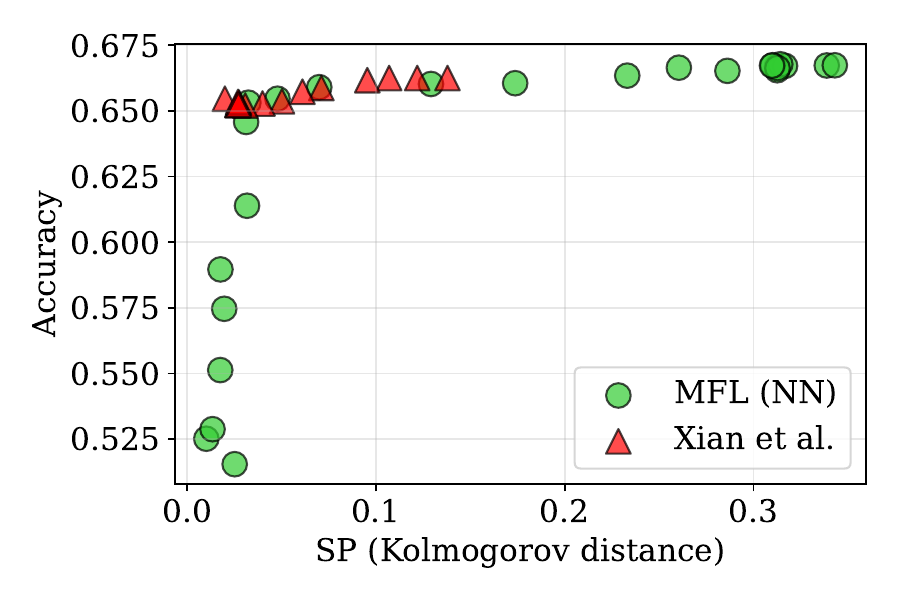}
    \vspace{-.7cm}
    \caption{Compas}
    \end{subfigure}
    \begin{subfigure}[t]{0.24\columnwidth}
    \centering
    \includegraphics[width=\columnwidth]{ 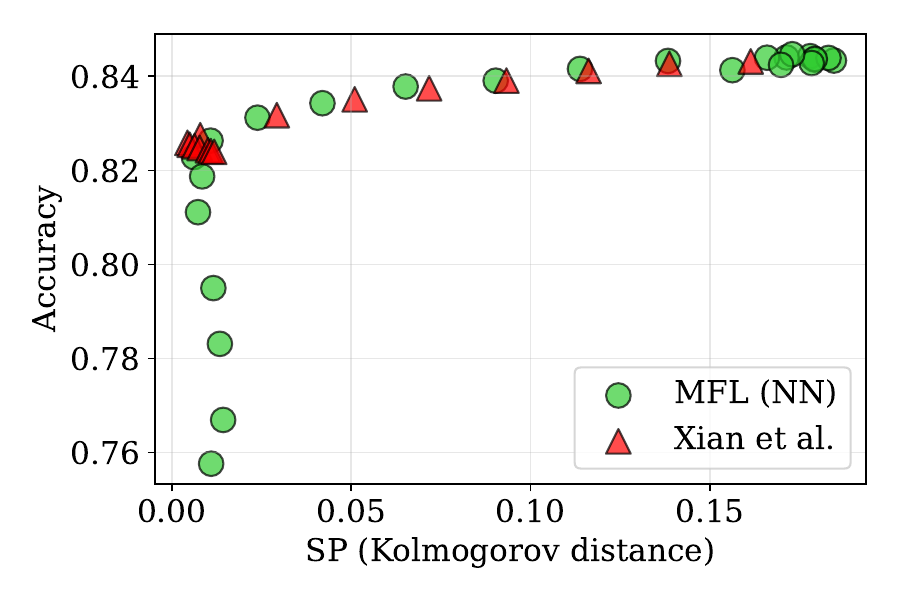}
    \vspace{-.7cm}
    \caption{Adult}
    \label{fig:adult_post}
    \end{subfigure}
    \end{center}
\centering
\vspace{-.4cm}
    \caption{Comparison of MFL against pre-processing (\textsc{\ref{fig:drug_pre}}--\textsc{\ref{fig:adult_pre}}) and post-processing (\textsc{\ref{fig:drug_post}}--\textsc{\ref{fig:adult_post}}) methods on neural network-based classification tasks averaged over 10 simulations.}
    \label{fig:accuracy_fairness_plots_prepost}
\end{figure}

}

\subsubsection{Alternative Fairness Criteria}
We now abandon our standard assumption that fairness is measured by SP and show that the methods developed in this paper extend to other group fairness criteria such as equal opportunity (EO) \cite{ref:hardt2016equality,ref:pleiss2017fairness}; see also Table~\ref{tab:eps-fairness-illustration}.
By definition, a hypothesis~$h\in\mathcal H$ is EO-fair if
the conditional distributions $\PP_{h(X) \leq \tau  | A=0, Y=1}$ and $\PP_{h(X) \leq \tau | A = 1, Y=1}$ match. Moreover, EO-unfairness can again be quantified by any IPM $\mathcal D_\Psi$ from Table~\ref{tab:ipm_examples}, that is, a natural unfairness penalty to be included in the fair learning model~\eqref{eq:fair-metric-learning} is $\mathcal{U}(h_\theta) = \rho(\mathcal{D}_{\Psi}(\PP_{h_\theta(X)|A=0, Y=1}, \PP_{h_\theta(X)|A=1, Y=1}))$, where $\rho$ is a smooth and non-decreasing regularization function. All techniques developed so far readily apply to this generalized setting with obvious modifications.

We now rerun all offline classification experiments from Section~\ref{sec:offline-classification} with the exact same parameter choices but use EO instead of SP as the fairness criterion. To ensure that each batch contains sufficiently many training samples with $\hat A_i=a$ as well as~$\hat Y_i=1$ for every~$a\in\mathcal A$, we simply increase the batch size to~512 in the experiments based on the Drug and Communities and Crime datasets. Table~\ref{tab:auc-acc-eo} reports the AUC values and the training times of the different methods, where EO-unfairness is measured by the Kolmogorov metric. All results are averaged over 10 independent replications of the same experiment. We find that MFL once again compares favorably against the three benchmark methods. Regarding AUC, MFL outperforms most of the benchmarks and is highly competitive with the rest of the methods. Moreover, MFL with neural networks trains multiple orders of magnitude faster than all other neural network-based methods.

\begin{table}[h!]
    \centering
    \caption{AUC values (mean $\pm$ std.\ error) and training times (mean) for classification tasks}
    \vspace{-5pt}
    \begin{tabular}{l@{\;\,}l|c@{\;\,}c@{\;\,}c|c@{\;\,}c}
    \toprule
         Dataset &Performance & \citet{ref:cho2020fair} & \citet{ref:oneto2020expoliting}&MFL~(NN)&\citet{ref:zafar2017fairness}&MFL~(Linear)\\\midrule
         \multirow{2}{*}{Drug} &AUC  &0.789$\pm$0.013 &\textbf{0.823$\pm$0.005} &0.815$\pm$0.005&0.653$\pm$0.021&\textbf{0.795$\pm$0.004}\\
         &Time &12.29 secs &57.78 secs &1.97 secs&0.10 secs& 1.63 secs \\ \midrule
         \multirow{2}{*}{CC}&AUC &0.821$\pm$0.006 &0.829$\pm$0.01 &\textbf{0.830$\pm$0.008}&0.749$\pm$0.006 &\textbf{0.812$\pm$0.004}\\
         &Time& 16.38 secs& 152.36 secs & 2.41 secs&26.12 secs& 2.09 secs\\ \midrule
         \multirow{2}{*}{Compas} &AUC &0.656$\pm$0.004 & \textbf{0.666$\pm$0.004} &0.660$\pm$0.004&\textbf{0.616$\pm$0.008}&0.599$\pm$0.004\\
         &Time&18.71 secs& 1703.84 secs &3.14 secs&0.12 secs & 2.89 secs\\ \midrule
         \multirow{2}{*}{Adult}&AUC &0.765$\pm$0.004 &- &\textbf{0.841$\pm$0.001}&0.740$\pm$0.000 &\textbf{0.788$\pm$0.004}\\
         &Time&154.44 secs &- &29.93 secs& 7.94 secs& 26.60 secs\\ 
         \bottomrule
    \end{tabular}
    \label{tab:auc-acc-eo}
\end{table}
\section{Concluding Remarks}
\label{sec:conclusion}
In this paper, we show that the generator of an IPM can be identified with a family of utility functions, which implies that IMPs have intuitive appeal as fairness regularizers that promote statistical parity or other notions of group fairness. We also demonstrate that particular IPMs such as the $\mc L^2$-distance, the square root of the energy distance or any MMD metric are computationally attractive because they lead to fair learning models that are susceptible to efficient batch SGD-type algorithms. These algorithms often outperform state-of-the-art methods for fair learning on standard datasets. We believe that our intuitive and scalable approach to fair learning is easy to use, thus hopefully boosting the adoption of fair learning models. At the same time, we acknowledge certain risks and limitations of the results and methods presented in this paper. More research is required to design fair learning processes that are immunized against adversarial attacks and missing data. This research might benefit from the observation that the severity of adversarial attacks can also be quantified by IPMs~\citep{ref:shafieezadeh2015distributionally}. In addition, our numerical experiments revealed that the performance of a fair predictor can be highly sensitive to the choice of the unfairness penalty parameter~$\lambda$. For a nuanced discussion on how to tune~$\lambda$ via a two-step validation process we refer to~\citep{ref:donini2018empirical}. Finally and most importantly, it should be kept in mind that not every application may require fairness constraints or that the imposed fairness constraints may have undesirable effects. Example~\ref{ex:price-of-fairness} in Section~\ref{sec:group-fairness} describes a scenario in which the imposition of statistical parity does not improve the well-being of the members of the protected group and even deteriorates the well-being of most people. Therefore, before deploying fair learning models in the real world, one should carefully assess all potential impacts of the chosen fairness regularizer. Our results of Section~\ref{sec:group-fairness} suggest that enforcing SP-fairness is generally desirable if it is known that the protected attribute~$A$ carries no information about the distribution of the output~$Y$ conditional on the input~$X$. For a critical assessment of arguments for and against statistical parity we refer to \cite{raz2021group}. We stress again that the methods developed in this paper readily extend to most group fairness notions such as equal opportunity~\citep{ref:hardt2016equality} or equalized odds~\citep{ref:hardt2016equality}. We further expect that the unbiased gradient estimators developed in Section~\ref{sec:num_fair_learning} are not only useful for SGD algorithms but also for other subsample-based methods such as stochastic gradient boosting~\cite{friedman2002SGB}.
\appendix

\renewcommand\thesection{\Alph{section}}

\renewcommand{\theequation}{A.\arabic{equation}}
\renewcommand{\thefigure}{A.\arabic{figure}}
\renewcommand{\thetable}{A.\arabic{table}}
\section*{Appendix}
This appendix is organized as follows. Appendix~\ref{app:aux} provides an auxiliary generalization bound, Appendix~\ref{app:proofs} contains all proofs omitted from the main text and provides several auxiliary results, Appendix~\ref{sec:discussions} provides background information on distance-induced kernels and highlights several generalizations of our main results, and Appendix~\ref{app:experiments} details the datasets as well as the training processes used in the numerical experiments and reports on additional simulation results.

\section{Generalization Bounds}\label{app:aux}
Throughout this section we assume that unfairness is measured by a generic IPM $\mathcal{D}_{\Psi}$; see Definition~\ref{def:ipm}.
We show that, for all sufficiently large sample sizes~$N$, the empirical prediction accuracy and the empirical unfairness approximate the population accuracy and the population unfairness, respectively. This result exploits Vapnik's classic generalization bound, which we restate below for convenience, and which depends on the error function $\mathcal E:\mbb N\times\mbb N\times(0,1)\to \R$ defined through $\mathcal{E}(\mathcal{V},N,\delta) = \frac{4}{N} (\mathcal{V}\left(\log(2N/\mathcal{V})+1\right)-\log(\delta/4))$.

\begin{theorem}[Generalization bound~\cite{vapnik1999nature}]
\label{thm:vapnik}
Consider binary functions $Q_\theta\in\mathcal L(\mathcal{X}\times\mathcal{Y},\{0,1\})$ indexed by~$\theta\in\Theta$, and assume that $\{(\hat X_i,\hat Y_i)\}_{i=1}^N$ are $N$ i.i.d.\ samples from $\mathbb{P}_{(X,Y)}$, where $N$ strictly exceeds the Vapnik-Chervonenkis (VC) dimension~$\mc V$ of the function class $\{Q_\theta\}_{\theta\in\Theta}$. For any $\delta\in(0,1)$, we then have 
    $$
        \frac{1}{N}\sum_{i=1}^NQ_\theta(\hat X_i,\hat Y_i)-\frac{1}{2}\sqrt{\mathcal{E}(\mathcal{V},N,\delta)}\; \leq\; \mathbb{E}[Q_\theta(X,Y)]\;\leq \; \frac{1}{N}\sum_{i=1}^NQ_\theta(\hat X_i,\hat Y_i)+\frac{1}{2}\sqrt{\mathcal{E}(\mathcal{V},N,\delta)}\quad\forall \theta\in\Theta
    $$
    with probability at least $1-\delta$.
\end{theorem}

\begin{theorem}[Generalization bounds for accuracy and unfairness]\label{thm:generalization}
Assume that $\mc Y=\{0,1\}$, and consider binary hypotheses~$h_\theta \in \mathcal{L}(\mc{X},\mc Y)$ indexed by~$\theta\in\Theta$. Define $Q_\theta\in\mathcal L(\mathcal{X}\times\mathcal{Y},\{0,1\})$ through $Q_\theta(x,y)=\mathbbm{1}_{h_\theta(x)=y}$ for all $x\in\mc X$ and $y\in\mc Y$, and denote the VC dimension of the function class $\{Q_\theta\}_{\theta\in\Theta}$ by~$\mathcal{V}$. Assume also that $\{(\hat X_i,\hat Y_i, \hat A_i)\}_{i=1}^N$ are $N$ i.i.d.\ samples from $\mathbb{P}_{(X, Y, A)}$, and select~$\delta \in(0,1)$. Then, conditional on the event $T^0_N,T^1_N>\mc V$, the prediction accuracy and the unfairness induced by some IPM $\mc D_\Psi$ simultaneously satisfy
    \begin{align*}
        \mathbb{P} \left[h_\theta(X)=Y\right] &\geq \frac{1}{N}\sum_{i=1}^N Q_\theta(\hat X_i,\hat Y_i) -\frac{1}{2}\sqrt{\mathcal{E}(\mathcal{V},N,\delta/3)} \quad \forall \theta\in\Theta
    \end{align*}
    and 
    \begin{align*}
        \mc D_{\Psi}( \PP_{h_\theta(X) | A = 0} ,  \PP_{h_\theta(X) | A=1}) &\leq  \mc D_{\Psi}(\hat\PP^0_{N,\theta} , \hat\PP^1_{N,\theta})+\frac{\alpha}{2}\sqrt{\mathcal{E}(\mathcal{V},T^0_N, \delta/3)}+\frac{\alpha}{2}\sqrt{\mathcal{E}(\mathcal{V},T^1_N, \delta/3)} \quad \forall \theta\in\Theta
    \end{align*}
with probability at least $1-\delta$, where $\alpha=\sup_{\psi\in \Psi}|\psi(0)-\psi(1)|$.
\end{theorem}

\begin{proof}[Proof of Theorem~\ref{thm:generalization}]
Assume first that there are $t_0$ i.i.d.\ samples from class~$0$ and $t_1=N-t_0$ i.i.d.\ samples from class~$1$ for some deterministic $t_0,t_1>\max\mc V$. By Theorem~\ref{thm:vapnik}, the population accuracy thus satisfies
\begin{equation}
    \mathbb{P}[h_\theta(X)=Y] \geq \frac{1}{N}\sum_{i=1}^N Q_\theta(\hat X_i,\hat Y_i) -\frac{1}{2}\sqrt{\mathcal{E}(\mathcal{V},N,\delta/3)}\quad \forall \theta\in\Theta
    \label{eq:ineq-acc}
\end{equation}
with probability at least $1 - \delta/3$. To analyze the unfairness, we temporarily use $\PP^a$ as a shorthand for the distribution $\mathbb{P}_{h_\theta(X)|A=a}$ and $\hat\PP^a$ as a shorthand for the distribution $\hat\PP^a_{N,\theta}$ for all~$a\in\mc A$. As~$\mc D_{\Psi}$ is a semi-metric and thus satisfies the triangle inequality, the population unfairness satisfies
\begin{equation}
    \label{eq:population-unfariness-inequality}
    \mathcal{D}_{\Psi}(\PP^0,\PP^1)\leq \mathcal{D}_{\Psi}(\hat\PP^0, \hat\PP^1) + \mathcal{D}_{\Psi}(\PP^0, \hat\PP^0) + \mathcal{D}_{\Psi}(\PP^1, \hat\PP^1).
\end{equation}
For any fixed~$a\in \mc A$, the definition of the IPM $\mc D_\Psi$ then implies that
\begin{align*}
    \mathcal{D}_{\Psi}(\PP^a, \hat\PP^a)&=\sup\limits_{\psi\in \Psi} \left|  \int_{\R}\psi(\hat y)\, \PP^a({\rm d} \hat y) - \int_{\R} \psi(\hat y) \,\hat\PP^a( \dd \hat y)\right|\nonumber\\
    &=\sup\limits_{\psi\in \Psi}  \left| \psi(0)\, \mathbb{P}^a[\{0\}]+\psi(1)\, \mathbb{P}^a[\{1\}] - \psi(0) \,\hat{\mathbb{P}}^a[\{0\}] -\psi(1) \,\hat{\mathbb{P}}^a[\{1\}]\right| \nonumber\\
    &\leq\sup\limits_{\psi\in \Psi}\left|\psi(0)-\psi(1)\right|\cdot \left|\mathbb{P}^a[\{1\}]-\hat{\mathbb{P}}^a[\{1\}]\right| =\alpha\cdot \left|\mathbb{P}^a[\{1\}]-\hat{\mathbb{P}}^a[\{1\}]\right|,
\end{align*}
where the second equality holds because~$h_\theta$ is a binary function, which implies that $\PP^a$ and~$\hat\PP^a$ are supported on~$\{0,1\}$, and the inequality holds because $\PP^a[\{0\}]  = 1 - \PP^a[\{1\}]$ and $\hat \PP^a[\{0\}]  = 1 - \hat \PP^a[\{1\}]$. By the definitions of $\hat\PP^a=\hat\PP^a_{N,\theta}$ and $Q_\theta(x,1)$, one can verify that $\hat\PP^a[\{1\}]=t_a^{-1}\sum_{t=1}^{t_a} Q_\theta(\hat X^a_t,1)$. By Theorem~\ref{thm:vapnik}, we thus find
\begin{equation*}
    \left|\mathbb{P}^a[\{1\}]-\hat{\mathbb{P}}^a[\{1\}]\right| = \left|\PP^a[\{1\}]-\frac{1}{t_a}\sum_{t=1}^{t_a} Q_\theta(\hat X^a_t,1)\right| \leq\frac{1}{2}\sqrt{\mathcal{E}(\mathcal{V},t_a, \delta/3)} \quad\forall\theta\in\Theta,
\end{equation*}
which in turn implies via~\eqref{eq:population-unfariness-inequality} that $ \mathcal{D}_{\Psi}(\PP^a, \hat\PP^a)\leq \frac{\alpha}{2}\sqrt{\mathcal{E}(\mathcal{V},t_a, \delta/3)}$ for all $\theta\in\Theta$, with probability $1-\delta/3$. Next, using the union bound to combine the above estimates for~$a = 0$ and~$a=1$, we then obtain
\begin{equation}
    \mathcal{D}_{\Psi}(\PP^0,\PP^1) \leq  \mc D_{\Psi}(\hat\PP^0 , \hat\PP^1)+\frac{\alpha}{2}\sqrt{\mathcal{E}(\mathcal{V},T^0_N, \delta/3)}+\frac{\alpha}{2}\sqrt{\mathcal{E}(\mathcal{V},T^1_N, \delta/3)} \quad \forall \theta\in\Theta
    \label{eq:unf-inequality-2}
\end{equation}
with probability at least~$1-2\delta/3$. Applying the union bound once again reveals that~\eqref{eq:ineq-acc} and~\eqref{eq:unf-inequality-2} are simultaneously satisfied with probability at least~$1-\delta$. In the remainder we abandon the assumption that the number of samples in each class is known deterministically. Instead, we only assume that the random number~$T^a_N$ of samples in class~$a$ strictly exceeds~$\mc V$ for all~$a\in\mc A$. We thus have 
\begin{align*}
    \PP\big[\text{\eqref{eq:ineq-acc} and~\eqref{eq:unf-inequality-2} hold} \,\big| \,  T^0_N,T^1_N > \mc V \big] & =\sum_{t_0 =\mc V+1}^{N-\mc V-1} \PP\big[\text{\eqref{eq:ineq-acc} and~\eqref{eq:unf-inequality-2} hold} \, \big| \, T^0_N=t_0\big] \cdot \PP\big[ T^0_N=t_0  \,\big| \,  T^0_N,T^1_N > \mc V  \big]\\ 
    & \geq(1-\delta)\sum_{t_0 =\mc V+1}^{N-\mc V-1}  \PP\big[ T^0_N=t_0  \,\big| \,  T^0_N,T^1_N > \mc V  \big] = 1-\delta,
\end{align*}
where the inequality follows from the first part of the proof. Hence, the claim follows.
\end{proof}

\section{Proofs}
\label{app:proofs}
\begin{proof}[Proof of Proposition~\ref{thm:optimal-decision-indep}]
By the law of iterated conditional expectations, problem~\eqref{eq:loss-min} is equivalent to
$$
    \min\limits_{h \in \mc H} \EE[\EE[L (h(X), Y)|X]].
$$
Recall now that the loss function~$L$ is lower semi-continuous, which implies via Fatou's Lemma that the conditional expectation $\EE[L (\hat y, Y)|X=x]$ is lower semi-continuous in~$\hat y\in\mathbb R$. Recall also that the hypothesis space $\mc H=\mc L(\mc X,\mathbb R)$ contains all real-valued Borel-measurable functions and that problem~\eqref{eq:loss-min} is assumed to be solvable. Therefore, \cite[Theorem~14.60]{rockafellar2009variational} implies that problem~\eqref{eq:loss-min} is equivalent to
\[
    \EE\left[\min\limits_{\hat y\in\mathbb R}\EE[L (\hat y, Y)|X]\right]
\]
and that it admits an optimal solution~$h^\star\in\mc H$ that satisfies
\begin{equation}
    \label{eq:parametric}
   h^\star(X) = \arg\min_{\hat{y}\in\mathbb{R}} \EE[L (\hat{y}, Y)|X] \quad\PP\text{-a.s.}
\end{equation}
As~$\Omega$ is finite, the objective function the parametric optimization problem~\eqref{eq:parametric} can be represented as
$$
    \EE[L (\hat{y}, Y)|X] = \sum_{\omega\in\Omega} \PP_{Y|X}[Y=y_\omega] \, L(\hat{y}, y_\omega).
$$
That is, the conditional expectation reduces to a sum, which depends on~$X$ only indirectly through the conditional probability distribution~$\PP_{Y|X}=(\PP_{Y|X}[Y=y_\omega])_{\omega\in\Omega}$, which is assumed to be independent of the protected attribute~$A$. Thus, the optimal prediction~$h^\star(X)$ depends on~$X$ only indirectly through~$\PP_{Y|X}$, too. Using $\text{I}(Z; Z')$ to denote the mutual information between two random variables~$Z$ and~$Z'$, we then find
$$
    0\leq \text{I}(h^\star(X);A)\leq \text{I}(\PP_{Y|X};A)=0,
$$
where the second inequality exploits the data processing inequality, and the equality holds because~$\PP_{Y|X}$ and~$A$ are independent. Thus, we may conclude that~$\text{I}(h^\star(X);A)=0$, which implies that~$h^\star(X)$ and $A$ are also independent. This observation completes the proof. 
\end{proof}

\begin{proof}[Proof of Proposition~\ref{prop:H_fair}]
Fix any hypothesis~$h\in\mc H$. As~$\Omega$ is finite, $h$ can only adopt finitely many distinct values~$h_k$, $k=1,\ldots,K$. Define~$\Omega_k=\{\omega\in\Omega: h(x_\omega)=h_k\}$ for every~$k=1,\ldots,K$. Then, we have
\begin{align*}
    h(X)\perp A~\iff & ~\EE_{\PP}[\varphi(h(X))|A=0]=\EE_{\PP}[\varphi(h(X))|A=1]\quad\forall \varphi\in \mc L(\R,\R)\\
    \iff & ~ \sum_{\omega\in\Omega} \varphi(h(x_\omega)) \left( \PP(X=x_\omega|A=0)-\PP(X=x_\omega|A=1)\right)=0 \quad\forall \varphi\in \mc L(\R,\R)\\
    \iff & ~ \sum_{k=1}^K \varphi(h_k) \sum_{\omega\in\Omega_k} \left( \PP(X=x_\omega|A=0)-\PP(X=x_\omega|A=1)\right)=0 \quad\forall \varphi\in \mc L(\R,\R)\\
    \iff & ~ \sum_{\omega\in\Omega_k} \PP(X=x_\omega|A=0) = \sum_{\omega\in\Omega_k} \PP(X=x_\omega|A=1) \quad \forall k \in [K].
\end{align*}
Thus, $h\in\mc H$ is SP-fair if and only if there exists a partition $\{\Omega_k\}_{k=1}^K$ of~$\Omega$ such that~$h$ is constant on~$\Omega_k$ and such that~$\Omega_k$ has the same probability under the two conditional distributions~$\PP_{X|A=0}$ and~$\PP_{X|A=1}$ for any~$k$. 

Fix now an arbitrary partition $\{\Omega_k\}_{k=1}^K$ of~$\Omega$ such that~$\Omega_k$ has the same probability under~$\PP_{X|A=0}$ and~$\PP_{X|A=1}$ for any~$k$. Then, the family of all functions~$h\in\mc H$ that are constant on each subset of this partition forms a linear space of SP-fair hypotheses. Note that there is at least one eligible partition (that is, the trivial partition with~$K=1$ and~$\Omega_1=\Omega$), which ensures that~$\mc H_{\rm fair}$ is non-empty. However, there may exist several other (at most finitely many) eligible partitions, each of which contributes a different linear subspace of SP-fair hypothesis. This implies that~$\mc H_{\rm fair}$ represents indeed a union of finitely many linear subspaces of~$\mc H$.
\end{proof}
{\color{black}
\begin{proof}[Proof of Theorem~\ref{thm:hard-fairness-accuracy}]
As $L(\hat y, y)$ is strongly convex in~$\hat y$, the objective function $\varphi(h,\delta)$ is strongly convex in~$h$. Thus, the minimizers $h^\star_\delta$ and $h^\star_{\textrm{fair},\delta}$ exist for all~$\delta\geq 0$. As~$\mc H$ is convex, $h^\star_\delta$ is unique for all~$\delta\geq 0$. In addition, as~$\PP_{Y_0|X}\perp A$, Proposition~\ref{thm:optimal-decision-indep} implies that $h\opt_0 \in\mathcal{H}_{\rm fair}$, which in turn implies that~$h^\star_{{\rm fair},0} =h\opt_0$ is unique. Recall now from Proposition~\ref{prop:H_fair} that~$\mathcal{H}_{\rm fair}$ can be represented as a union of finitely many distinct linear subspaces~$\{\mc H_k\}_{k=1}^K$ of~$\mc H$. We prove the claim first under the assumption that~$h^\star_0$ belongs to exactly {\em one} of these subspaces. Without loss of generality, we may thus assume that~$h^\star_0\in\mc H_1$ and that $\mathcal{H}_1=\{h\in\mathcal H: Sh=0\}$ for some surjective linear map $S:\mc H\to\R^m$ with $m \leq |\Omega|$. By construction, we therefore have
\[
    \min_{h\in\mc H_1} \varphi(h,\delta) < \min_{h\in\mc H_k} \varphi(h,\delta)\quad \forall k=2,\ldots,K
\]
for all sufficiently small $\delta\geq 0$. Indeed, the inequality holds for~$\delta=0$ because~$h^\star_0\in \mc H_1$. By the smoothness and strong convexity assumptions on~$\varphi(h,\delta)$, one can use the implicit function theorem to show that the optimal value function $\min_{h\in\mc H_k}\varphi(h,\delta)$ is continuously differentiable in~$\delta$ for all $k=1,\ldots,K$. Hence, the above inequality holds for all sufficiently small $\delta\geq 0$. This implies that $h^\star_{{\rm fair},\delta}$ constitutes the unique solution of the {\em convex} optimization problem $\min_{h\in\mc H}\{ \varphi(h,\delta):Sh=0\}$ for all sufficiently small~$\delta\geq 0$.

Use now $f(\delta) = \varphi(h\opt_\delta, 0) - \varphi(h\opt_{\textrm{fair},\delta}, 0)$ to denote the excess loss of $h\opt_\delta$ over $h\opt_{\textrm{fair},\delta}$ measured with respect to the true target~$Y_0$. As~$h^\star_{{\rm fair},0} =h\opt_0$, it is clear that~$f(0) = 0$. In the remainder of the proof we will show that~$f(\delta ) > 0$ for all sufficiently small $\delta>0$. Denoting partial derivatives with respect to~$\delta$ by dots, we find 
\begin{equation*}
        \dot f(0) =\langle \nabla_h \varphi(h\opt_0,0), \dot h\opt_{0} \rangle - \langle \nabla_h \varphi(h\opt_{\textrm{fair},0},0), \dot h\opt_{\textrm{fair},0} \rangle = 0,
\end{equation*}

where the second equality holds because $\nabla_h \varphi(h,0)$ vanishes at~$h^\star_{{\rm fair},0} =h\opt_0$. Similarly, we find
\begin{align*}
    \ddot f(0)= &\langle \dot h\opt_{0}, \nabla_h^2\varphi(h\opt_0, 0)\, \dot h\opt_{0}\rangle - \langle \dot h\opt_{\textrm{fair},0}, \nabla_h^2\varphi(h\opt_{\textrm{fair},0},0) \,\dot h\opt_{\textrm{fair},0} \rangle.
\end{align*}
Our goal is to show that $\ddot f(0)>0$. To this end, we need to derive explicit formulas for~$\dot h\opt_0$ and~$\dot h\opt_{\textrm{fair},0}$. For ease of exposition, we introduce some notational shorthands, that is, we define $c\in\R^{|\Omega|}$ and $B\in\mbb S^{|\Omega|}$ as
\[
    c=\nabla_h \dot \varphi(h^\star_0,0)=\nabla_h \dot \varphi(h^\star_{\textrm{fair},0},0)\quad\text{and}\quad B=\nabla^2_h \varphi(h^\star_0,0)=\nabla^2_h \varphi(h^\star_{\textrm{fair},0},0).
\]
Note that~$B$ is positive definite and thus invertible because~$\varphi(h,\delta)$ is strongly convex in~$h$. 
The optimal `unfair' hypothesis $h^\star_\delta$ is uniquely determined by the first-order optimality condition $\nabla_h\varphi(h,\delta)=0$. The implicit function theorem thus implies that $\dot h\opt_0=-B^{-1}c$. Similarly, the optimal `fair' hypothesis $h\opt_{\textrm{fair},\delta}$ is uniquely determined by the stationarity condition $\nabla_h \varphi(h,\delta)+  S^\top u=0$ and the primal feasibility condition $Sh=0$, where $u\in\R^m$ denotes the Lagrange multiplier of the fairness constraint $Sh=0$. 
The implicit function theorem and a standard formula for the inversion of two-by-two block matrices \cite[Proposition~2.8.7]{bernstein2009matrix} then imply that
\begin{equation*}
    \dot h\opt_{\textrm{fair},0}=-\left(B^{-1}-B^{-1} S^\top(SB^{-1}S^\top)^{-1}SB^{-1}    \right)c.
\end{equation*}
Substituting the formulas for~$\dot h\opt_0$ and~$\dot h\opt_{\textrm{fair},0}$ in terms of~$c$ and~$B$ into the formula for~$\ddot f(0)$ yields
\begin{align*}
    \ddot f(0) =& \langle c, B^{-1} c\rangle - \langle c, \left(B^{-1}-B^{-1} S^\top(SB^{-1}S^\top)^{-1}SB^{-1}    \right)^\top  B \left(B^{-1}-B^{-1} S^\top(SB^{-1}S^\top)^{-1}SB^{-1}    \right), c\rangle \\
    =& \langle c, B^{-1} S^\top(SB^{-1}S^\top)^{-1}SB^{-1} c\rangle\geq 0.
\end{align*}
The inequality holds because~$B$ is positive definite and~$S$ has full rank, which implies that~$(SB^{-1}S^\top)^{-1}$ is positive definite, too. In addition, the inequality is {\em strict} unless $SB^{-1}c=0$, that is, unless $\dot h^\star_0=-B^{-1}c \in \mc H_1$. This shows that if~$h^\star_\delta$ is {\em unfair}, that is, if~$h^\star_\delta \notin \mc H_1$, then the optimal fair hypothesis~$h^\star_{\textrm{fair},\delta}$ attains a strictly smaller prediction loss than~$h^\star_\delta$ for all sufficiently small~$\delta>0$.

If~$h^\star_0$ belongs to several different linear subspaces of fair hypotheses, then the above arguments apply to each of these (finitely many) subspaces separately. Details are omitted for brevity.
\end{proof}
}

\begin{proof}[Proof of Lemma~\ref{lemma:soc_fair}]
The claim is an immediate consequence of Definition~\ref{def:ipm} and the relation
\[
    \EE[\psi(h(X)) | A =a ] = \int_\mathbb R \psi(\hat y) \,\PP_{h(X)|A=a}(\dd \hat y)\quad \forall a\in\mc A,
\]
which follows from the variable transformation $\hat y \leftarrow f(\omega)$, where $f : \Omega\to \mathbb R$ is defined through $f(\omega) = h(X(\omega))$ for all $\omega\in\Omega$, and the observation that $\PP_{h(X)|A=a}$ is the pushforward measure of the conditional probability measure~$\PP[\cdot |A=a]$ under the transformation $f$. 
\end{proof}

\begin{proof}[Proof of Lemma~\ref{lem:biased-empirical-risk}]
To avoid clutter, we use~$\PP^a_{\theta}$ as a shorthand for~$\PP_{h_\theta(X)|A=a}$, $a\in\mc A$. As the penalty function~$\rho$ is convex and non-decreasing, a repeated application of Jensen's inequality yields
\begin{align*}
    & \EE\left[ \rho\left(\left. \mc D_{\Psi}(\hat\PP^0_{N,\theta}, \hat\PP^1_{N,\theta} )\right) \right| T^0_N,T^1_N\ge 1\right] \geq \rho\left( \EE\left[  \left. \mc D_{\Psi}(\hat\PP^0_{N,\theta}, \hat\PP^1_{N,\theta} ) \right| T^0_N,T^1_N\ge 1\right] \right)\\
    &\hspace{2cm} =\rho\left(\EE\left[ \left.\sup_{\psi\in \Psi} \left|  \int_{\R} \psi(z)\, \hat\PP^0_{N,\theta}(\dd z) - \int_{\R} \psi(z) \,\hat\PP^1_{N,\theta}( \dd z)\right| \right| T^0_N,T^1_N\ge 1\right]\right)\\
    & \hspace{2cm} \geq \rho\left(\sup_{\psi\in \Psi} \left| \EE\left[ \left. \int_{\R} \psi(z)\, \hat\PP^0_{N,\theta}(\dd z) - \int_{\R} \psi(z) \,\hat\PP^1_{N,\theta}( \dd z) \right| T^0_N,T^1_N\ge 1 \right] \right|\right)\\
    &\hspace{2cm} = \rho\left(\sup_{\psi\in \Psi} \left| \EE\left[ \left. \frac{1}{T_N^0} \sum_{t=1}^{T^0_N} \psi(h_\theta(\hat X_t^0)) - \frac{1}{T_N^1} \sum_{t=1}^{T^1_N} \psi(h_\theta(\hat X_t^0)) \right| T^0_N,T^1_N\ge 1 \right] \right|\right)\\
    & \hspace{2cm} =\rho\left( \sup_{\psi\in \Psi} \left|  \int_{\R} \psi(z)\, \PP^0_{\theta}(\dd z) - \int_{\R} \psi(z) \,\PP^1_{\theta}( \dd z)\right|\right) =\rho\left( \mc D_{\Psi}(\PP^0_{\theta}, \PP^1_{\theta})\right),
\end{align*}
where the first two equalities follow from the definitions of the IPM~$\mc D_\Psi$ and the empirical distributions~$\hat\PP^0_{N,\theta}$ and~$\hat\PP^1_{N,\theta}$, respectively, while the third equality exploits the independence of~$\hat X^a_t$ and~$T^a_N$ for every~$t\in\mathbb N$ and~$a\in\mc A$. Thus, the desired inequality follows. It remains to be shown that this inequality is strict if~$\rho$ is strictly increasing and if $\PP^1_{\theta}=\PP^0_{\theta}$ and is {\em not} a Dirac distribution. In this case, as $T^0_N + T^1_N=N$, the law of total expectation implies that
\begin{align*}
    &\EE\left[  \left. \mc D_{\Psi}(\hat\PP^0_{N,\theta}, \hat\PP^1_{N,\theta} ) \right| T^0_N,T^1_N\ge 1\right]\\
    & \qquad =\sum_{t_0=1}^{N-1}\EE\left[  \left. \mc D_{\Psi}(\hat\PP^0_{N,\theta}, \hat\PP^1_{N,\theta} ) \right| T^0_N=t_0\right]\PP[T^0_N=t_0] >0 = \mc D_{\Psi}(\PP^0_{\theta}, \PP^1_{\theta}),
\end{align*}
where the inequality holds because all terms in the sum on the second line are strictly positive. Indeed, we have~$\PP[T^0_N=t_0]>0$ for every $t_0=1,\ldots,N-1$ because of our assumption that~$\PP[A=a]>0$ for all~$a\in\mc A$. Similarly, as the support of the data-generating distributions $\PP^1_{\theta}=\PP^0_{\theta}$ is not a singleton, the distance between the empirical distributions~$\hat\PP^0_{N,\theta}$ and~$\hat\PP^1_{N,\theta}$ conditional on $T_N^0=t_0$ is strictly positive with a strictly positive probability for every $t_0=1,\ldots,N-1$. The second equality in the above expression holds because $\PP^1_{\theta}=\PP^0_{\theta}$. The claim now follows because $\rho$ is strictly increasing.
\end{proof}

To prove Theorem~\ref{thm:unbiasedness_batch}, we first show that~$\hat U_b(\theta)$ constitutes an {\em unbiased} estimator for the unfairness regularizer in the objective function of problem~\eqref{eq:fair-metric-learning} (Proposition~\ref{prop:unbiased_mmd}). This result critically relies on the randomized construction of the $b$-th batch of training samples. Next, we demonstrate that the empirical prediction loss~$|\mc I_b|^{-1} \sum_{i \in \mc I_b} L(h_\theta(\hat X_i), \hat Y_i)$ constitutes a {\em biased} estimator for the true expected prediction loss in the objective function of problem~\eqref{eq:fair-metric-learning} (Lemma~\ref{lemma:marginal_est_biased} and Proposition~\ref{prop:empirical_loss_Ib_bias}). This is another (yet undesirable) consequence of the randomized construction of the $b$-th batch of training samples. Finally, we prove that~$\hat R_b(\theta)$ constitutes an {\em unbiased} estimator for the expected prediction loss (Lemma~\ref{lemma:correcting_bias_marginals} and Proposition~\ref{prop:empirical_risk_unbiased_est}). This result critically relies on the construction of the bias correction term $\Delta(|\mc I_b|, |\mc I_b^a|)$, which is needed to counteract the effects of the randomness of the $b$-th batch. Theorem~\ref{thm:unbiasedness_batch} is finally proved by combining Proposition~\ref{prop:unbiased_mmd} and Proposition~\ref{prop:empirical_risk_unbiased_est}.

\begin{proposition} 
The statistic~$\hat U_b(\theta)$ constitutes an unbiased estimator for~$d^2_{\rm{MMD}}(\PP_{X | A=0}, \PP_{X | A=1})$.
\label{prop:unbiased_mmd}
\end{proposition}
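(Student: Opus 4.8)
The plan is to condition on the (random) batch cardinalities~$|\mc I_b^0|$ and~$|\mc I_b^1|$ and to evaluate the conditional expectation of~$\hat U_b(\theta)$ term by term, exploiting the fact established in the main text that, conditional on~$|\mc I_b^a|=N^a$ for~$a\in\mc A$, the feature vectors~$\{\hat X_i:i\in\mc I_b^a\}$ form a family of i.i.d.\ samples governed by~$\PP_{X|A=a}$, and that the two subfamilies corresponding to~$a=0$ and~$a=1$ are mutually independent. To streamline notation I would write~$\tilde K(x,x')=K(h_\theta(x),h_\theta(x'))$, so that both~$\hat U_b(\theta)$ and the quantity~$d^2_{\mathrm{MMD}}(\PP_{X|A=0},\PP_{X|A=1})$---understood relative to the pushforward kernel~$\tilde K$, equivalently the squared MMD between~$\PP_{h_\theta(X)|A=0}$ and~$\PP_{h_\theta(X)|A=1}$---are expressed through~$\tilde K$ alone. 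Throughout I assume the integrability required for the relevant expectations to be finite, which follows from the weight-function growth condition in Definition~\ref{def:mmd}.

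First I would handle the two within-class sums. Fix~$a\in\mc A$ and condition on~$|\mc I_b^a|=N^a$. The sum~$\sum_{i,j\in\mc I_b^a,\,i\neq j}\tilde K(\hat X_i,\hat X_j)$ ranges over the~$N^a(N^a-1)$ ordered pairs of distinct indices, and for any such pair the samples~$\hat X_i$ and~$\hat X_j$ are independent and both distributed according to~$\PP_{X|A=a}$. Hence each summand has conditional expectation~$\EE[\tilde K(X,X')]$ with~$X,X'$ independent draws from~$\PP_{X|A=a}$, and dividing by~$N^a(N^a-1)$ returns exactly this integral, independently of the value of~$N^a$. An analogous computation applies to the cross term: conditioning on~$|\mc I_b^0|=N^0$ and~$|\mc I_b^1|=N^1$, the~$N^0N^1$ summands~$\tilde K(\hat X_i,\hat X_j)$ with~$i\in\mc I_b^0$, $j\in\mc I_b^1$ each pair an independent class-$0$ sample with an independent class-$1$ sample, so the normalized sum has conditional expectation~$\EE[\tilde K(X,X')]$ with~$X\sim\PP_{X|A=0}$ and~$X'\sim\PP_{X|A=1}$ independent.

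Combining the three contributions, the conditional expectation of~$\hat U_b(\theta)$ given~$(|\mc I_b^0|,|\mc I_b^1|)$ equals precisely the three-term expression defining~$d^2_{\mathrm{MMD}}(\PP_{X|A=0},\PP_{X|A=1})$ in Definition~\ref{def:mmd}. The decisive observation is that this conditional expectation does \emph{not} depend on the realized batch sizes~$N^0$ and~$N^1$; taking a further expectation over the random cardinalities via the tower property therefore leaves the value unchanged, whence~$\EE[\hat U_b(\theta)]=d^2_{\mathrm{MMD}}(\PP_{X|A=0},\PP_{X|A=1})$, which is the claim.

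The main obstacle is the justification of the conditional sampling structure---that conditioning on the batch cardinalities genuinely returns i.i.d.\ draws from the correct conditional laws with the two classes independent. This is precisely where the randomized, stopping-time-based construction of~$\mc I_b$ is essential: the batch boundaries are determined by the sequence of protected attributes~$\hat A_i$ rather than by the feature values, so no selection bias is introduced on the features. Since this property is asserted in the main text by an argument paralleling that of Section~\ref{sec:empirical}, I would invoke it directly; the remaining work is the elementary pair-counting together with the interchange of summation and expectation justified by integrability.
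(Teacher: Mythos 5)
Your proposal is correct and follows essentially the same route as the paper's own proof: conditioning on the batch cardinalities, using the conditional i.i.d.\ structure of the samples within each class to evaluate the within-class and cross terms, and applying the tower property since the conditional expectations do not depend on the realized batch sizes. Your additional remark resolving the slight notational mismatch in the statement (the estimand is really the squared MMD between the pushforward laws $\PP_{h_\theta(X)|A=a}$) matches the paper's implicit convention of writing $\PP_\theta^a$ for these distributions.
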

\begin{proof}
As in the proof of Lemma~\ref{lem:biased-empirical-risk}, we use~$\PP_\theta^a$ as a shorthand for~$\PP_{h_\theta(X) | A=a}$ for every $a \in \mc A$ in order to avoid clutter. To prove that~$\hat U_b(\theta)$ is an unbiased estimator for~$d^2_{\text{MMD}}(\PP_\theta^0, \PP_\theta^1)$, first note that
\begin{align*}
    &\EE\bigg[\frac{1}{|\mc I_{b}^{a}| (|\mc I^{a}_b|\!-\!1)}\sum\limits_{i, j \in \mc I_b^a,\, i\neq j}  K(h_\theta(\hat X_{i}), h_\theta(\hat X_{j})) \bigg]\\
    & \qquad = \EE\bigg[\EE\bigg[ \frac{1}{|\mc I_{b}^{a}| (|\mc I^{a}_b|\!-\!1)}\sum\limits_{i, j \in \mc I_b^a,\, i\neq j}  K(h_\theta(\hat X_{i}), h_\theta(\hat X_{j}))\bigg|\, |\mc I_b^a| \bigg] \bigg]\\
    & \qquad = \int_{\R^n \times \R^n} K(h_\theta(x), h_\theta(x'))\, \PP_\theta^a(\diff x) \, \PP_\theta^a(\diff x')\quad \forall a\in\mc A,
\end{align*}
where the second equality holds because, conditional on~$|\mc I_b^a|$, the samples~$\hat X_i$, $i\in \mc I^a_b$, are independent and governed by the distribution~$\PP_\theta^a$. Similarly, we have
\begin{align*}
    &\EE\bigg[\frac{2}{|\mc I^{0}_{b}| |\mc I^{1}_{b}|} \sum\limits_{\substack{i\in \mc I_{b}^{0},\, j\in \mc I_{b}^{1}}} K(h_\theta(\hat X_{i}), h_\theta(\hat X_{j})) \bigg]\\
    & \qquad = \EE\bigg[\EE\bigg[ \frac{2}{|\mc I^{0}_{b}| |\mc I^{1}_{b}|} \sum\limits_{\substack{i\in \mc I_{b}^{0},\, j\in \mc I_{b}^{1}}} K(h_\theta(\hat X_{i}), h_\theta(\hat X_{j})) \bigg|\, |\mc I_b^0|,\,|\mc I_b^1| \bigg] \bigg]\\
    & \qquad = 2\int_{\R^d \times \R^d} K (h_\theta(x), h_\theta(x')) \, \PP_\theta^0 (\diff x)\, \PP_\theta^1(\diff x').
\end{align*}
The claim then follows directly from the definitions of~$\hat U_b(\theta)$ and~$d^2_{\text{MMD}}(\PP_\theta^0, \PP_\theta^1)$.
\end{proof}

In the following we use~$p_a$ as a shorthand for the marginal probability~$\PP[A = a]$ for each~$a \in \mc A$ in order to avoid clutter. The next lemma is needed to prove that the empirical prediction loss is biased.

\begin{lemma}
Fix any~$a\in\mc A=\{0,1\}$, and define~$a'=1-a$. Then, the statistic~$|\mc I^a_b|/|\mc I_b|$ constitutes a biased estimator for the class probability~$p_a$, that is, we have $\EE\left[{|\mc I^a_b|}/{|\mc I_b|}\right] = p_a (1+ \beta_a)$, where
\begin{equation*}
    \beta_a = p_{a'}^{\bar N- 1} - p_{a'} p_a^{\bar N - 2} + \frac{2p_a}{p_{a'}^2} \left(\log(p_a)+\sum\limits_{N=1}^{\bar N} \frac{p_{a'}^N}{N}  \right) -\frac{2 p_{a'}^2}{p_a^3} \left(\log(p_{a'})+ \sum\limits_{N=1}^{\bar N}  \frac{p_a^N}{N} \right).
\end{equation*}
\label{lemma:marginal_est_biased}
\end{lemma}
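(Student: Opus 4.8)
The plan is to evaluate $\EE[\,|\mc I^a_b|/|\mc I_b|\,]$ directly by conditioning on the terminal composition of a single batch. Since the labels $\hat A_i$ are i.i.d.\ and $\tau_b$ is a stopping time, the joint law of $(|\mc I^0_b|,|\mc I^1_b|)$ does not depend on $b$, so it suffices to analyse the batch as an i.i.d.\ sequence of class labels that is stopped at the first index $N\ge\bar N$ at which both classes have accumulated at least two samples. Throughout I abbreviate $M=|\mc I^a_b|$ and $N=|\mc I_b|$, and I assume $\bar N\ge 3$ so that at most one class can be deficient (have fewer than two samples) once $\bar N$ labels have been drawn.

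First I would enumerate the terminal configurations. There are three: (i) after the first $\bar N$ draws both classes already contain at least two samples, whence the batch closes at $N=\bar N$ with $M=m$ for some $m$ with $2\le m\le\bar N-2$; (ii) class $a'$ is deficient at step $\bar N$, so the batch is extended until $a'$ reaches exactly two samples, giving $M=N-2$; and (iii) class $a$ is deficient at step $\bar N$, so the batch is extended until $a$ reaches exactly two samples, giving $M=2$. The ratio $M/N$ therefore equals $m/\bar N$, $(N-2)/N$, or $2/N$ in the three cases, and the expectation I seek is the sum of these three contributions.

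Next I would attach probabilities. Case (i) contributes $\sum_{m=2}^{\bar N-2}(m/\bar N)\binom{\bar N}{m}p_a^{m}p_{a'}^{\bar N-m}$. For cases (ii) and (iii) I would condition on the deficient count $j\in\{0,1\}$ of the deficient class $c$ after $\bar N$ draws, which has probability $\binom{\bar N}{j}p_c^{\,j}p_{\bar c}^{\,\bar N-j}$, and on the number $r\ge 0$ of non-deficient labels appearing during the extension. Because the extension terminates exactly on the second label of the deficient class, its probability carries the negative-binomial factor $\binom{r+1-j}{r}p_{\bar c}^{\,r}p_c^{\,2-j}$. Writing the terminal size as $N=\bar N+2-j+r$ and summing over $j$ then collapses the powers of $p_a,p_{a'}$ into $p_a^{2}p_{a'}^{N-2}$ in case (iii) and $p_a^{N-2}p_{a'}^{2}$ in case (ii), each multiplied by $1/N$ (from $2/N$ or $(N-2)/N$) and by a prefactor that is at most linear in $N$.

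Finally I would evaluate the resulting power series. The $1/N$ weights turn the geometric tails into the partial logarithmic sums in $\beta_a$ through the identity $\sum_{N\ge 1}x^N/N=-\log(1-x)$ together with $1-p_{a'}=p_a$; indeed $\log p_a+\sum_{N=1}^{\bar N}p_{a'}^{N}/N=-\sum_{N>\bar N}p_{a'}^{N}/N$ is precisely the logarithmic tail produced by case (iii), and the symmetric statement with $a$ and $a'$ interchanged is produced by case (ii). Collecting the remaining geometric (non-logarithmic) pieces from the two extension cases together with the closed-form evaluation of the case-(i) sum yields the leading term $p_a$ and the polynomial corrections $p_{a'}^{\bar N-1}$ and $-p_{a'}p_a^{\bar N-2}$, so that the three contributions combine to $p_a(1+\beta_a)$. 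I expect the main obstacle to be purely computational bookkeeping: tracking the negative-binomial weights and the linear-in-$N$ prefactors correctly through the re-indexing, and splitting each apparently divergent $\sum 1/N$ piece into a logarithm plus a finite partial sum so that the tails assemble into the stated closed form; the small-$\bar N$ corner cases, where both classes could be deficient at once, would have to be verified separately.
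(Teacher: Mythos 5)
Your proposal is correct and follows essentially the same route as the paper: characterize the joint law of $(|\mc I_b|,|\mc I_b^a|)$ by splitting into the case $N=\bar N$ and the two extension cases $|\mc I_b^a|\in\{2,N-2\}$, then evaluate the three resulting sums via the binomial theorem, geometric series, and the Mercator expansion of $\log(1-x)$. The only cosmetic difference is that you derive the probabilities $(N-1)p_a^2p_{a'}^{N-2}$ and $(N-1)p_a^{N-2}p_{a'}^2$ by conditioning on the deficient count $j$ and a negative-binomial extension length, whereas the paper counts arrangements directly (last sample is of the deficient class, its unique partner sits in one of the first $N-1$ slots); both give the same pmf and the rest of the computation is identical.
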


\begin{proof}
We first characterize the joint distribution of the two random variables~$|\mc I_b|$ and $|\mc I^a_b|$. As the batch~$\mc I_b$ must contain at least~$\bar N$ samples in total, it is clear that its cardinality~$|\mc I_b|$ may only adopt an integer value~$N\geq\bar N$. In addition, as~$\mc I_b$ contains at least two samples of each class, it is also clear that~$|\mc I_b^a|$ may only adopt an integer value~$n=2,\ldots,N-2$. The probabilities~$\PP[|\mc I_b| = N, |\mc I_b^a| = n]$ of the possible scenarios~$(N,n)$ can be calculated as follows. If $N=\bar N$, then we have
\begin{subequations}
\label{eq:prob}
\begin{align}
    \PP[|\mc I_b| = \bar N, \,\mc I_b^a = n] =
    \begin{cases}
    \binom{\bar N}{n} p_a^n p_{a'}^{\bar N - n} &\text{if}~n = 2,\ldots, \bar N-2, \\ 
    0 &\text{otherwise}.
    \end{cases}
    \label{eq:prob_barN_n}
\end{align}
Indeed, the probability that the batch $\mc I_b$ contains $n$ samples of class~$a$ and $\bar N-n$ samples of class~$a'$ in a particular order is $p_a^n p_{a'}^{\bar N - n}$, and there are $\binom{\bar N}{n}$ possibilities of ordering these samples. Assume next that the batch~$\mc I_b$ contains $N > \bar N$ samples. In this case we have
\begin{align}
    \PP[|\mc I_b| = N, \,\mc I_b^a = n] =\begin{cases}
    (N-1)\, p_a^2 \, p_{a'}^{N-2} &\text{if}~n = 2, \\ 
    (N-1)\, p_a^{N-2} \, 
    p_{a'}^2 &\text{if}~
    n= N-2, \\ 
    0 &\text{otherwise.}
    \end{cases}
    \label{eq:prob_N_n}
    \end{align}
\end{subequations}
Note that the batch cardinality $N=|\mc I_b|$ can only exceed~$\bar N$ if the first~$\bar N$ samples do not include at least two representatives of each class. If there is none or only one sample of class~$a$ among the first~$\bar N$ samples, for example, we continue to add samples until the batch includes exactly two samples of class~$a$. Thus, the last ({\em i.e.}, the $N$-th) sample of the batch must belong to class~$a$, and there must be exactly one other sample belonging to class~$a$. This other sample can reside in any of the first $N-1$ positions in the batch. Thus, the probability of the event $|\mc I_b^a|=2$ is $(N-1)p_a^2p_{a'}^{N-2}$. If there is none or only one sample of class~$a'$ among the first~$\bar N$ samples, then a similar reasoning applies.

We are now ready to calculate the expected value of the estimator~$|\mc I_b^a|/|\mc I_b|$. By~\eqref{eq:prob}, we have
\begin{align}
    \nonumber
    \EE\left[\frac{|\mc I^a_b|}{|\mc I_b|}\right] & = \sum\limits_{N=\bar N}^\infty \sum\limits_{n=2}^{N-2} \, \frac{n}{N} \PP[|\mc I_b| = N ,\, |\mc I_b^a| = n] \\
    & = \sum\limits_{n=2}^{\bar N - 2} \frac{n}{\bar N} \, \PP\left[|\mc I_b = \bar N|,\, |\mc I_b^a| = n \right]  +\sum\limits_{N=\bar N + 1}^{\infty}\frac{2}{N} \, \PP\left[|\mc I_b = N|, \,|\mc I_b^a| = 2 \right]  \label{eq:marginal_estimation_expectation_formula} \\
    &\hspace{2cm}+\sum\limits_{N=\bar N + 1}^{\infty} \frac{N-2}{N} \, \PP\left[ |\mc I_b = N|,\, |\mc I_b^a| = N-2 \right].
\nonumber
\end{align}
Next, we may use~\eqref{eq:prob_barN_n} to reformulate the first sum in~\eqref{eq:marginal_estimation_expectation_formula} as
\begin{align*}
    \sum\limits_{n=2}^{\bar N - 2}\frac{n}{\bar N} \, \PP\left[|\mc I_b = \bar N|, \, |\mc I_b^a| = n \right]  &= \sum\limits_{n=2}^{\bar N - 2} \frac{n}{\bar N} \cdot \frac{\bar N!}{n! (\bar N - n)!} \, p_a^n p_{a'}^{\bar N - n}\\
    &=p_a \sum\limits_{n=2}^{\bar N - 2} \binom{\bar N - 1}{n-1} p_a^{n-1} p_{a'}^{\bar N -n} =  p_a \sum\limits_{n=1}^{\bar N - 3} \binom{\bar N - 1}{n} p_a^{n} p_{a'}^{\bar N -1 - n}\\
    &= p_a \left(1 - p_a^{\bar N - 1} - p_{a'}^{\bar N - 1}- (\bar N - 1) p_a^{\bar N - 2} p_{a'} \right),
\end{align*}
where the last equality follows from the binomial expansion of $(p_a+p_{a'})^{\bar N-1}=1$. Smilarly, we may then use~\eqref{eq:prob_N_n} to reformulate the second sum in~\eqref{eq:marginal_estimation_expectation_formula} as
\begin{align}
    \nonumber
    \sum\limits_{N=\bar N + 1}^{\infty}\frac{2}{N} \, \PP\left[|\mc I_b = N|, \,|\mc I_b^a| = 2 \right] & =
    \sum\limits_{N= \bar N + 1}^\infty \frac{2(N-1)}{N} \, p_a^2 p_{a'}^{N-2}\\ 
    &= \frac{2 p_a^2}{p_{a'}^2} \left(\sum\limits_{N= \bar N +1}^\infty p_{a'}^N \! -\! \sum\limits_{N= \bar N +1}^\infty\frac{p_{a'}^N}{N}\right) \nonumber \\ 
    &= \frac{2p_a^2}{p_{a'}^2} \left(\frac{p_{a'}^{\bar N+ 1}}{1-p_{a'}} \! -\! \left(  \sum\limits_{N=1}^\infty\frac{p_{a'}^N}{N} \!-\! \sum\limits_{N=1}^{\bar N}  \frac{p_{a'}^N}{N}  \right)\right) \label{eq:second_term_formulation}\\
    &=\frac{2p_a^2}{p_{a'}^2} \left(\frac{p_{a'}^{\bar N+ 1}}{1-p_{a'}} \! + \log(1-p_{a'})\!+\! \sum\limits_{N=1}^{\bar N}  \frac{p_{a'}^N}{N}  \right) \nonumber\\
    &=\frac{2p_a^2}{p_{a'}^2} \left(\frac{p_{a'}^{\bar N+ 1}}{p_a} \! + \log(p_a)\!+\! \sum\limits_{N=1}^{\bar N}  \frac{p_{a'}^N}{N}  \right),
    \nonumber
\end{align}
where the third equality exploits the standard formula for infinite geometric series, which applies because~$p_{a'} < 1$, while the fourth equality follows from power series representation of $\log(1-p_{a'})$ ({\em i.e.}, the Newton-Mercator series). The last equality holds because~$p_a = 1-p_{a'}$. Finally, we may use~\eqref{eq:prob_barN_n} once again to reformulate the third sum in~\eqref{eq:marginal_estimation_expectation_formula} as
\begin{align}
    & \sum\limits_{N=\bar N + 1}^{\infty} \frac{N-2}{N} \, \PP\left[ |\mc I_b = N|,\, |\mc I_b^a| = N-2 \right] \; = \sum\limits_{N= \bar N + 1}^\infty \frac{N-2}{N} (N-1)\, p_a^{N-2} p_{a'}^2 \nonumber \\
    & \hspace{3cm}= \sum\limits_{N= \bar N +1}^\infty (N-1)\, p_a^{N-2} p_{a'}^2  - \sum\limits_{N = \bar N + 1}^\infty \frac{2}{N} (N-1)\, p_a^{N-2}p_{a'}^2.
    \label{eq:third_term_formulation}
\end{align}
The first sum in~\eqref{eq:third_term_formulation} is equivalent to
\begin{align*}
    \sum\limits_{N=\bar N + 1}^\infty (N-1) p_a^{N-2} p_{a'}^2 &= p_{a'}^2 \,\frac{\diff}{\diff p_a}\left( \sum\limits_{N= \bar N +1}^\infty p_a^{N-1} \right) = p_{a'}^2 \frac{\diff }{\diff p_a} \left(\sum\limits_{N= \bar N}^\infty p_a^N\right) \\
    &= p_{a'}^2 \frac{\diff}{\diff p_a} \left(\frac{p_a^{\bar N}}{1- p_{a}}\right) = \bar N p_{a'} p_a^{\bar N - 1} + p_a^{\bar N},
\end{align*}
where we use again the standard formula for the geometric series. By swapping the roles of~$a$ and~$a'$ and then repeating the derivations in~\eqref{eq:second_term_formulation}, one can further show that the second term in~\eqref{eq:third_term_formulation} equals
\[
    \sum\limits_{N= \bar N +1}^\infty \frac{2}{N} (N-1) p_a^{N-2} p_{a'}^2 = \frac{2p_{a'}^2}{p_a^2} \left(\frac{p_a^{\bar N+ 1}}{p_{a'}} \! + \log(p_{a'})\!+\! \sum\limits_{N=1}^{\bar N}  \frac{p_a^N}{N}  \right).
\]
Substituting the formulas for the different sums into~\eqref{eq:marginal_estimation_expectation_formula} finally yields
\begin{align*}
    \EE\left[\frac{|\mc I_b^a|}{|\mc I_b|}\right] &=  p_a\left(1 - p_a^{\bar N - 1}  - p_{a'}^{\bar N -1 }- (\bar N - 1) p_a^{\bar N - 2} p_{a'} \right) + \frac{2p_a^2}{p_{a'}^2} \left(\frac{p_{a'}^{\bar N+ 1}}{p_a} \! + \log(p_a)\!+\sum\limits_{N=1}^{\bar N}  \frac{p_{a'}^N}{N}  \right)\\
    &\hspace{1cm} + \bar N p_{a'} p_a^{\bar N - 1} + p_a^{\bar N} -
    \frac{2p_{a'}^2}{p_a^2} \left(\frac{p_a^{\bar N+ 1}}{p_{a'}} \! + \log(p_{a'})\!+\! \sum\limits_{N=1}^{\bar N}  \frac{p_a^N}{N}  \right) = p_a  (1+\beta_a),
\end{align*}
where~$\beta_a$ is defined as in the statement of the lemma. 
\end{proof}
Armed with Lemma~\ref{lemma:marginal_est_biased}, we are now prepared to show that the empirical prediction loss with respect to all the samples in the (random) batch~$\mc I_b$ constitutes a biased estimator for the true expected loss.
\begin{proposition}
The empirical prediction loss~$|\mc I_b|^{-1} \sum_{i \in \mc I_b} L(h_\theta(\hat X_i), \hat Y_i)$ constitutes a biased estimator for the true expected loss~$\EE[L(h_\theta(X), Y)]$, that is, defining $\beta_a$ as in Lemma~\ref{lemma:marginal_est_biased}, we have
\[
    \EE\left[\frac{1}{|\mc I_b|} \sum_{i \in \mc I_b} L(h_\theta(\hat X_i), \hat Y_i)\right] =\EE[L(h_\theta(X), Y)] +\!\sum\limits_{a \in \mc A}  \beta_a\cdot p_a\cdot \EE[L(h_\theta(X), Y) | A = a].
\]
\label{prop:empirical_loss_Ib_bias}
\end{proposition}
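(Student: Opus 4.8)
The plan is to decompose the empirical prediction loss according to class membership and then exploit the conditional i.i.d.\ structure of the samples within each class, combined with the bias formula from Lemma~\ref{lemma:marginal_est_biased}. First I would split the batch sum over the partition $\{\mc I_b^0,\mc I_b^1\}$ of $\mc I_b$,
\[
    \frac{1}{|\mc I_b|} \sum_{i \in \mc I_b} L(h_\theta(\hat X_i), \hat Y_i) = \sum_{a \in \mc A} \frac{1}{|\mc I_b|} \sum_{i \in \mc I_b^a} L(h_\theta(\hat X_i), \hat Y_i),
\]
so that it suffices to compute the expectation of each inner sum separately.

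For a fixed class $a$, I would condition on the pair of cardinalities $(|\mc I_b|,|\mc I_b^a|)$ and invoke the tower property. The key structural fact---established by the same stopping-time argument used in Section~\ref{sec:empirical}, but now applied to the labelled pairs $(\hat X_i,\hat Y_i)$ rather than to the features alone---is that, conditional on $|\mc I_b^a|=n$, the samples $\{(\hat X_i,\hat Y_i):i\in\mc I_b^a\}$ are i.i.d.\ and distributed according to $\PP_{(X,Y)|A=a}$. Each of the $|\mc I_b^a|$ summands therefore has conditional mean $\EE[L(h_\theta(X),Y)\mid A=a]$, whence
\[
    \EE\left[\sum_{i \in \mc I_b^a} L(h_\theta(\hat X_i), \hat Y_i) \,\middle|\, |\mc I_b|,\,|\mc I_b^a|\right] = |\mc I_b^a| \cdot \EE[L(h_\theta(X), Y)\mid A = a].
\]
Dividing by the now-measurable factor $|\mc I_b|$ and taking the outer expectation, and noting that the constant $\EE[L(h_\theta(X),Y)\mid A=a]$ factors out, yields
\[
    \EE\left[\frac{1}{|\mc I_b|} \sum_{i \in \mc I_b^a} L(h_\theta(\hat X_i), \hat Y_i)\right] = \EE\left[\frac{|\mc I_b^a|}{|\mc I_b|}\right] \cdot \EE[L(h_\theta(X), Y)\mid A = a].
\]

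At this point I would substitute the bias formula $\EE[|\mc I_b^a|/|\mc I_b|] = p_a(1+\beta_a)$ from Lemma~\ref{lemma:marginal_est_biased}, sum over $a\in\mc A$, and split the result into its $p_a$ and $\beta_a p_a$ contributions. The first contribution collapses to $\EE[L(h_\theta(X),Y)]$ by the law of total expectation over $A$, and the second is precisely the claimed bias term $\sum_{a\in\mc A}\beta_a\,p_a\,\EE[L(h_\theta(X),Y)\mid A=a]$, which completes the argument.

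I expect the only genuine obstacle to be the rigorous justification of the conditional i.i.d.\ claim for the labelled samples within a class. Because $\mc I_b$ is built via stopping times, the cardinalities $|\mc I_b|$ and $|\mc I_b^a|$ are entangled with which samples land in the batch, so one must argue carefully---as in Section~\ref{sec:empirical}---that conditioning on the cardinalities alone leaves the distribution of the retained $(X,Y)$ pairs undistorted. Everything downstream of that fact is a routine combination of the tower property and Lemma~\ref{lemma:marginal_est_biased}.
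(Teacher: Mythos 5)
Your proposal is correct and follows essentially the same route as the paper's proof: decompose the batch sum over the class partition, condition on the batch cardinalities to exploit the conditional i.i.d.\ structure of the samples within each class, and then substitute the bias formula $\EE[|\mc I_b^a|/|\mc I_b|]=p_a(1+\beta_a)$ from Lemma~\ref{lemma:marginal_est_biased} together with the law of total expectation. The one point you flag as the genuine obstacle---justifying the conditional i.i.d.\ claim for the labelled pairs---is handled in the paper exactly as you propose, by the stopping-time argument of Section~\ref{sec:empirical}.
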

\begin{proof}
The expected empirical loss can be reformulated as
\begin{align*}
    \EE\left[\frac{1}{|\mc I_b|} \sum_{i \in \mc I_b} L(h_\theta(\hat X_i), \hat Y_i)\right] &= \EE\left[\frac{1}{|\mc I_b|} \EE\left[\sum\limits_{i \in \mc I_b} L(h_\theta(\hat X_i), \hat Y_i) \Big| |\mc I_b^0|, |\mc I_b^1| \right]\right]\\
    &= \EE\left[\sum\limits_{a\in\mc A}\frac{|\mc I_b^a|}{|\mc I_b|} \EE\left[\sum\limits_{i \in \mc I^a_b} \frac{1}{|\mc I_b^a|} L(h_\theta(\hat X_i), \hat Y_i) \Big| |\mc I_b^a|\right]\right]\\
    &= \sum\limits_{a \in \mc A} \EE\left[ \frac{|\mc I_b^a|}{|\mc I_b|} \right] \EE[L(h_\theta(X), Y) | A= a],
\end{align*}
where the first equality follows from law of iterated conditional expectations, whereas the second equality holds because the sets $\mc I_b^a$, $a\in\mc A$, form a partition of~$\mc I_b$. The third equality exploits our earlier insight that, conditional on fixing the number~$|\mc I_b^a|$ of samples from class~$a$ in the current batch, $\{(\hat X_i, \hat Y_i)\}_{i\in\mc I_b^a}$ represent independent samples from~$\PP_{(X,Y)|A=a}$, implying that the sample average~$|\mc I_b^a|^{-1}\sum_{i \in \mc I_b^a} L(h_\theta(\hat X_i), \hat Y_i)$ constitutes an unbiased estimator for~$\EE[L(h_\theta(X), Y) |A =a]$. Recall now from Lemma~\ref{lemma:marginal_est_biased} that~$\EE[|\mc I_b^a|/ |\mc I_b|] = p_a (1+ \beta_a)$. Thus, we have
\begin{align*}
    \EE\left[\frac{1}{|\mc I_b|} \sum_{i \in \mc I_b} L(h_\theta(\hat X_i), \hat Y_i)\right] &= \sum\limits_{a\in\mc A}p_a  \left(1+ \beta_a \right) \EE[L(h_\theta(X), Y)| A=a],
\end{align*}
and this observation completes the proof.
\end{proof}

Next, we show that the empirical class probability $|\mc I_b^a|/|\mc I_b|$ admits an explicit correction term that eliminates the bias identified in Lemma~\ref{lemma:marginal_est_biased}. To this end, we define~$\Delta(N,n)$ as in the main paper. 

\begin{lemma}
\label{lemma:correcting_bias_marginals}
For any fixed~$a\in\mc A$, the adjusted empirical class probability $\Delta(|\mc I_b|, |\mc I_b^a|)\frac{|\mc I_b^a|}{|\mc I_b|}$ constitutes an unbiased estimator for the class probability~$p_a$.
\end{lemma}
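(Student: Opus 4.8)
The plan is to expand the expectation of the adjusted estimator by splitting it according to the three indicator terms in the definition of $\Delta$, and then to evaluate each resulting sum using the joint distribution of the pair $(|\mc I_b|,|\mc I_b^a|)$ recorded in~\eqref{eq:prob}. Writing $a'=1-a$ as before, one has
\[
    \EE\!\left[\Delta(|\mc I_b|,|\mc I_b^a|)\,\frac{|\mc I_b^a|}{|\mc I_b|}\right] = T_1 + T_2 + T_3,
\]
where $T_1$ collects the scenarios with $|\mc I_b|=\bar N$, while $T_2$ and $T_3$ collect the scenarios with $|\mc I_b|=N>\bar N$ together with $|\mc I_b^a|=2$ and $|\mc I_b^a|=N-2$, respectively.

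Next I would evaluate the three pieces separately. On the event $|\mc I_b|=\bar N$ the correction factor equals $1$, so by~\eqref{eq:prob_barN_n} the term $T_1$ is exactly the first sum appearing in~\eqref{eq:marginal_estimation_expectation_formula}, which the proof of Lemma~\ref{lemma:marginal_est_biased} already showed equals $p_a(1-p_a^{\bar N-1}-p_{a'}^{\bar N-1}-(\bar N-1)p_a^{\bar N-2}p_{a'})$. For $T_2$ the correction factor is $\tfrac{N}{2(N-1)}$ and the estimator value is $\tfrac{2}{N}$, so their product reduces to $\tfrac{1}{N-1}$; multiplying by the probability $(N-1)p_a^2p_{a'}^{N-2}$ from~\eqref{eq:prob_N_n} cancels the factor $N-1$ and leaves the plain geometric series $\sum_{N>\bar N}p_a^2p_{a'}^{N-2}=p_a p_{a'}^{\bar N-1}$. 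For $T_3$ the correction factor $\tfrac{N}{N-1}$ times the estimator value $\tfrac{N-2}{N}$ gives $\tfrac{N-2}{N-1}$; multiplied by the probability $(N-1)p_a^{N-2}p_{a'}^2$ this produces $\sum_{N>\bar N}(N-2)p_a^{N-2}p_{a'}^2$, an arithmetic–geometric series whose closed form can be assembled from the differentiated geometric sums already evaluated in the proof of Lemma~\ref{lemma:marginal_est_biased}, yielding $p_a^{\bar N}+(\bar N-1)p_{a'}p_a^{\bar N-1}$.

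The final step is to add the three contributions and observe that everything beyond $p_a$ cancels: expanding $T_1$ produces residual terms $-p_a^{\bar N}$, $-p_a p_{a'}^{\bar N-1}$ and $-(\bar N-1)p_a^{\bar N-1}p_{a'}$, the first and third of which are matched exactly by $T_3$ and the second by $T_2$, so that $T_1+T_2+T_3=p_a$. The conceptual heart of the argument—and what makes the bookkeeping succeed—is that the correction factors in $\Delta$ are chosen precisely to strip away the $1/N$ weights that, in the uncorrected estimator, turned the tail sums into the Newton–Mercator (logarithmic) series responsible for the bias $\beta_a$ in Lemma~\ref{lemma:marginal_est_biased}. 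Once those $1/N$ factors are removed, only geometric and arithmetic–geometric series remain, all of which admit closed forms already derived in that proof. The main obstacle is therefore not any new analytic difficulty but the careful matching of the residual polynomial-in-$p_a$ terms across the three cases; reusing the intermediate identities from the proof of Lemma~\ref{lemma:marginal_est_biased} keeps this cancellation transparent.
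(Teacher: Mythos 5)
Your proposal is correct and follows essentially the same route as the paper's proof: the same three-way decomposition according to the support of $(|\mc I_b|,|\mc I_b^a|)$ from~\eqref{eq:prob}, reuse of the first sum's closed form from Lemma~\ref{lemma:marginal_est_biased}, reduction of the two tail sums to a geometric and a differentiated geometric series, and the same term-by-term cancellation yielding $p_a$. The closing remark about $\Delta$ being designed to cancel the $1/N$ weights that produced the Newton--Mercator series is a nice piece of intuition, but the argument itself coincides with the paper's.
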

\begin{proof}
Recall from the proof of Lemma~\ref{lemma:marginal_est_biased} that $|\mc I_b|$ can adopt any integer value~$N\ge\bar N$. In addition, recall that if~$N=\bar N$, then $|\mc I_b^a|$ can adopt any integer value $n\in\{2,\ldots,\bar N-2\}$ and that if~$N>\bar N$, then $|\mc I_b^a|$ can adopt only one of the two integer values $n\in\{2,\bar N-2\}$. By the definition of $\Delta(N,n)$, the expected value of the adjusted empirical class probability thus satisfies 
\begin{align}
    &\EE\left[\Delta(|\mc I_b|, |\mc I_b^a|) \frac{|\mc I_b^a|}{|\mc I_b|}\right] = 
    \sum\limits_{n=2}^{\bar N - 2} \frac{n}{\bar N}\, \PP\left[|\mc I_b = \bar N|,|\mc I_b^a| = n \right]  +\!
    \sum\limits_{N=\bar N + 1}^{\infty}\frac{1}{N-1} \, \PP\left[|\mc I_b = N|, |\mc I_b^a| = 2 \right] \nonumber\\
    &\hspace{4cm} +\sum\limits_{N=\bar N + 1}^{\infty} \frac{N-2}{N-1} \, \PP\left[ |\mc I_b= N|, |\mc I_b^a| = N-2 \right].
    \label{eq:expectation_corrected_bias}
\end{align}
For ease of notation, we henceforth use~$a'$ as a shorthand for~$1-a$. From the proof of Lemma~\ref{lemma:marginal_est_biased} we already know that the first sum in~\eqref{eq:expectation_corrected_bias} evaluates to~$p_a (1 - p_a^{\bar N - 1} - p_{a'}^{\bar N - 1}- (\bar N - 1) p_a^{\bar N - 2} p_{a'})$. By~\eqref{eq:prob_N_n}, we may then reformulate the second sum in~\eqref{eq:expectation_corrected_bias} as
\begin{align*}
    \sum\limits_{N=\bar N + 1}^{\infty}\frac{1}{N-1} \, \PP\left[|\mc I_b = N|, |\mc I_b^a| = 2 \right] 
    = p_a^2 \sum\limits_{N = \bar N + 1}^\infty p_{a'}^N= p_a^2 \frac{p_{a'}^{\bar N - 1}}{1- p_{a'}} = p_a p_{a'}^{\bar N - 1},
\end{align*}
where the second equality exploits the formula for infinite geometric series, which applies because~$p_{a'} < 1$, and the third equality follows from the observation that~$p_{a'}=1-p_a$. Using~\eqref{eq:prob_N_n} once again, the third sum in~\eqref{eq:expectation_corrected_bias} can be re-expressed as
\begin{align*}
    & \sum\limits_{N=\bar N + 1}^{\infty} \frac{N-2}{N-1} \, \PP\left[ |\mc I_b= N|, |\mc I_b^a| = N-2 \right] \; =\; p_{a'}^2  \sum\limits_{N = \bar N + 1}^\infty (N-2) p_a^{N-2}\\
    &\hspace{1cm} =\; p_{a'}^2\, p_a \frac{\diff }{\diff p_a}\bigg( \sum\limits_{N = \bar N +1}^\infty p_a^{N-2}\bigg)\; =\; p_{a'}^2 \, p_a \frac{\diff}{\diff p_a} \left(\frac{p_a^{\bar N - 1}}{1-p_a} \right) \; =\; (\bar N - 1) p_{a'}\, p_{a}^{\bar N - 1} + p_a^{\bar N},
\end{align*}
where the third equality follows again from the formula for geometric series. Substituting the formulas for the different sums into~\eqref{eq:expectation_corrected_bias} finally shows that~$\EE[\Delta(|\mc I_b|, |\mc I_b^a|) |\mc I_b^a|/ |\mc I_b|] = p_a$.
\end{proof}

Using Lemma~\ref{lemma:correcting_bias_marginals}, we can now show that the adjusted empirical prediction loss with respect to all the samples in the (random) batch $\mc I_b$ constitutes an unbiased estimator for the true expected loss.

\begin{proposition} 
The statistic~$\hat R_b(\theta)$ constitutes an unbiased estimator for~$\EE[L(h_\theta(X), Y)]$.
\label{prop:empirical_risk_unbiased_est}
\end{proposition}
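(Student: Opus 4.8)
The plan is to follow the proof of Proposition~\ref{prop:empirical_loss_Ib_bias} almost verbatim, but to carry the bias-correction factor $\Delta(|\mc I_b|,|\mc I_b^a|)$ along and to invoke Lemma~\ref{lemma:correcting_bias_marginals} in place of Lemma~\ref{lemma:marginal_est_biased} at the very end. First I would regroup the defining sum so that the correction factor and the empirical class frequency appear as a single weight multiplying a within-class sample average, writing
\[
\hat R_b(\theta)=\sum_{a\in\mc A}\Delta(|\mc I_b|,|\mc I_b^a|)\,\frac{|\mc I_b^a|}{|\mc I_b|}\cdot\frac{1}{|\mc I_b^a|}\sum_{i\in\mc I_b^a}L(h_\theta(\hat X_i),\hat Y_i).
\]

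Next I would take expectations and condition on the pair of cardinalities $(|\mc I_b^0|,|\mc I_b^1|)$ via the law of iterated conditional expectations. Because both $\Delta(|\mc I_b|,|\mc I_b^a|)$ and $|\mc I_b^a|/|\mc I_b|$ are deterministic functions of these cardinalities, they are measurable with respect to the conditioning $\sigma$-algebra and hence pull out of the inner conditional expectation. The remaining inner expectation is that of the within-class sample average, and here I would invoke the fact established in Section~\ref{sec:num_fair_learning} (and already used in the proof of Proposition~\ref{prop:empirical_loss_Ib_bias}) that, conditional on $|\mc I_b^a|$, the pairs $\{(\hat X_i,\hat Y_i)\}_{i\in\mc I_b^a}$ are i.i.d.\ draws from $\PP_{(X,Y)|A=a}$. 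Consequently the sample average is an unbiased estimator of the deterministic constant $\EE[L(h_\theta(X),Y)\mid A=a]$, which I can then factor out of the outer expectation as well.

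The last step is to apply Lemma~\ref{lemma:correcting_bias_marginals}, which states precisely that $\EE[\Delta(|\mc I_b|,|\mc I_b^a|)\,|\mc I_b^a|/|\mc I_b|]=p_a$; summing over $a\in\mc A$ and using the law of total expectation $\sum_{a\in\mc A}p_a\,\EE[L(h_\theta(X),Y)\mid A=a]=\EE[L(h_\theta(X),Y)]$ then yields the claim.

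The only genuinely delicate point---and it is one that has already been discharged in the preceding results---is the legitimacy of pulling the weight $\Delta(|\mc I_b|,|\mc I_b^a|)\,|\mc I_b^a|/|\mc I_b|$ out of the inner expectation: this requires that, after conditioning on the cardinalities, the within-class samples remain i.i.d.\ with the correct conditional law and independent of the batch-construction randomness. Given that structural fact, the computation is a routine adaptation of Proposition~\ref{prop:empirical_loss_Ib_bias}, with Lemma~\ref{lemma:correcting_bias_marginals} supplying exactly the correction that cancels the spurious factor $1+\beta_a$ appearing there.
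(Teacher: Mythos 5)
Your proposal is correct and matches the paper's own proof essentially step for step: the same regrouping of $\hat R_b(\theta)$, the same conditioning on $(|\mc I_b^0|,|\mc I_b^1|)$ to pull out the weight $\Delta(|\mc I_b|,|\mc I_b^a|)\,|\mc I_b^a|/|\mc I_b|$ and reduce the inner expectation to $\EE[L(h_\theta(X),Y)\mid A=a]$, and the same final appeal to Lemma~\ref{lemma:correcting_bias_marginals} together with the law of total expectation. Nothing is missing.
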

\begin{proof}
The expectation of the adjusted empirical prediction loss satisfies
\begin{align*}
    &\EE\left[\!\sum\limits_{a \in \mc A}\Delta(|\mc I_b|, |\mc I_b^a|)\frac{1}{|\mc I_b|}
    \sum\limits_{i \in \mc I_b^a} L(h_\theta(\hat X_i), \hat Y_i)\right]\\
    &\hspace{2cm}=\EE\Bigg[\sum\limits_{a \in \mc A} \Delta(|\mc I_b|, |\mc I_b^a|)\frac{|\mc I_b^a|}{|\mc I_b|} \,
    \EE\Bigg[\frac{1}{|\mc I_b^a|}\sum\limits_{i \in \mc I_b^a} L(h_\theta(\hat X_i), \hat Y_i)  \Bigg||\mc I_b^0|, |\mc I_b^1|\Bigg]\Bigg]\\
    &\hspace{2cm}=\EE\Bigg[\sum\limits_{a \in \mc A}\Delta(|\mc I_b|, |\mc I_b^a|)\frac{|\mc I_b^a|}{|\mc I_b|} \,
    \EE[L(h_\theta(X), Y) | A=a]\Bigg]\\
    &\hspace{2cm}=\sum\limits_{a \in \mc A}\PP[A=a]\,\EE[L(h_\theta(X), Y) | A=a] =\EE[L(h_\theta(X), Y)],
\end{align*}
where the first equality follows from law of iterated conditional expectations and the observation that the sets $\mc I_b^a$, $a\in\mc A$, form a partition of~$\mc I_b$. The second equality exploits our earlier insight that, conditional on fixing the number~$|\mc I_b^a|$ of samples from class~$a$ in the current batch, $\{(\hat X_i, \hat Y_i)\}_{i\in\mc I_b^a}$ represent independent samples from~$\PP_{(X,Y)|A=a}$, implying that the sample average~$|\mc I_b^a|^{-1}\sum_{i \in \mc I_b^a} L(h_\theta(\hat X_i), \hat Y_i)$ constitutes an unbiased estimator for~$\EE[L(h_\theta(X), Y) |A =a]$. Finally, the forth equality holds thanks to Lemma~\ref{lemma:correcting_bias_marginals}. Thus, the claim follows.
\end{proof}

We are now ready to prove Theorem~\ref{thm:unbiasedness_batch}.

\begin{proof}[Proof of Theorem~\ref{thm:unbiasedness_batch}]
Propositions~\ref{prop:unbiased_mmd} and~\ref{prop:empirical_risk_unbiased_est} imply that~$\hat R_b(\theta)+ \lambda \hat U_b(\theta)$ represents an unbiased estimator for the objective function of the fair learning problem~\eqref{eq:fair-metric-learning}. To prove the theorem statement, it thus remains to be shown that gradients and expectations can be interchanged. However, this follows immediately from~\cite[Lemma~1]{ref:Glassermann-differentiability}, which applies thanks to Assumption~\ref{ass:regularity}.
\end{proof}

\section{Discussions}
\label{sec:discussions}
This section contains useful background information on kernels and discusses possible extensions of our main results. Specifically, in Section~\ref{app:dist-kernel} we briefly elucidate the connection between the energy distance and the family of MMD metrics, and in Section~\ref{sec:extensions} we sketch extensions of the proposed IPM-based fairness regularizers to other fairness criteria beyond statistical parity. 

\subsection{Distance-Induced Kernel}
\label{app:dist-kernel}
Fix any norm~$\|\cdot\|$ on~$\mathbb R^n$, and set $w(z)=1+\|z\|$. The energy distance with respect to~$\|\cdot\|$ between two probability distributions~$\mathbb Q_1,\mathbb Q_2\in \mathcal Q_w(\mathbb R^n)$ is defined as~\cite{ref:baringhaus2004new, ref:szekely2004testing}
\begin{align*}
    \mc E(\mathbb Q_1, \mathbb Q_2) & = 2\int_{\mathbb R\times\mathbb R} \|z-z'\| \, \mathbb Q_1(\diff z)\, \mathbb Q_2(\diff z')-\int_{\mathbb R\times\mathbb R} \|z-z'\| \, \mathbb Q_1(\diff z)\, \mathbb Q_1(\diff z')\\
    &\hspace{2.5cm} -\int_{\mathbb R\times\mathbb R} \|z-z'\| \, \mathbb Q_2(\diff z)\, \mathbb Q_2(\diff z').
\end{align*}
Recall that the univariate energy distance (for $n=1$) is intimately connected to the $\mc L^2$-distance through the relation~$\mc E(\mathbb Q_1, \mathbb Q_2)=2 d^2_2(\mathbb Q_1, \mathbb Q_2)$ \cite[Theorem~1]{ref:szekely2002E-Statistics}. In addition, the multivariate energy distance (for $n\geq 1$) can also be expressed in terms of an MMD for a suitable choice of the underlying kernel \cite[Theorem~2]{ref:sejdinovic2013equivalence}. To keep this paper self-contained, we derive this expression below.

\begin{definition}[Distance-induced kernel~\cite{ref:sejdinovic2013equivalence}]
The distance-induced kernel $K\in\mc L(\mathbb R^n\times\mathbb R^n, \mathbb R)$ corresponding to the norm~$\|\cdot\|$ on~$\mathbb R^n$ and the anchor point~$z_0\in\mathbb R^n$ is given by
\[
K(z,z') = \frac{1}{2}\left( \|z- z_0\| + \|z' - z_0\| - \|z - z'\|\right).
\]
\end{definition}
The distance-induced kernel is positive definite if and only if the norm $\|\cdot\|$ is of negative type (\cite[Lemma~2.1]{berg1984harmonic}), such as the Euclidean norm~\cite[Proposition~3]{ref:sejdinovic2013equivalence}. In addition, it satisfies the identity $\|z -z'\| = K(z,z) + K(z', z') - 2 K(z, z')$ irrespective of the anchor point~$z_0$. This implies that
\begin{align*}
    \mc E(\mathbb Q_1, \mathbb Q_2) & = 2\int_{\mathbb R^n\times\mathbb R^n} K(z,z) + K(z', z') - 2 K(z, z') \, \mathbb Q_1(\diff z)\, \mathbb Q_2(\diff z')\\
    &\phantom{=} -\int_{\mathbb R^n\times\mathbb R^n} K(z,z) + K(z', z') - 2 K(z, z') \, \mathbb Q_1(\diff z)\, \mathbb Q_1(\diff z')\\
    &\phantom{=}  -\int_{\mathbb R^n\times\mathbb R^n} K(z,z) + K(z', z') - 2 K(z, z') \, \mathbb Q_2(\diff z)\, \mathbb Q_2(\diff z')\\
    & = 2\int_{\mathbb R^n\times\mathbb R^n} K(z, z') \, \mathbb Q_1(\diff z)\, \mathbb Q_1(\diff z') + 2\int_{\mathbb R^n\times\mathbb R^n} K(z, z') \, \mathbb Q_2(\diff z)\, \mathbb Q_2(\diff z')\\
    & \hspace{2.5cm} -4\int_{\mathbb R^n\times\mathbb R^n} K(z, z') \, \mathbb Q_1(\diff z)\, \mathbb Q_2(\diff z')\\
    & = 2\cdot d_{\rm MMD}^2(\mathbb Q_1, \mathbb Q_2),
\end{align*}
where $d_{\rm MMD}$ is the MMD corresponding to~$K$. This derivation shows that the distance-induced kernel corresponding to any fixed anchor point and, in fact, any mixture of distance-induced kernels corresponding to different anchor points generates the energy distance.


\subsection{Extensions to other Fairness Criteria}
\label{sec:extensions}
The ideas of this paper can be generalized along several dimensions. For example, SP-fairness may be enforced at the level of the losses instead of the output distributions. In this case, the conditional loss distributions $\PP_{L(h(X),Y)|A=a}$ must be similar across all~$a\in\mc A$. As explained in Section~\ref{sec:group-fairness}, our techniques for solving fair learning problems readily extend to popular group fairness notions other than SP such as equal opportunity~\citep{ref:hardt2016equality}, probabilistic equal opportunity~\citep{ref:pleiss2017fairness} or log-probabilistic equal opportunity~\citep{ref:taskesen2020distributionally}. In the context of classification, these fairness criteria enforce similarity of the distributions~$\PP_{h(X)|Y=1,A=a}$ across all~$a\in\mc A$. Conditioning on~$Y=1$ requires no new ideas but means that only the positive training samples are used to estimate the fairness of a hypothesis. Another fairness criterion used in classification is equalized odds~\citep{ref:hardt2016equality}, which requires the distributions~$\PP_{h(X)|Y=y,A=a}$ to be similar across all~$a\in\mc A$ and~$y\in\mc Y$. This criterion can be accommodated by introducing separate unfairness penalties for all~$y\in\mc Y$. Last but not least, our methods remain applicable if the protected attribute has more than two realizations or if there is more than one protected attribute. In this case it is again expedient to introduce multiple unfairness penalties. Details are omitted for brevity of exposition.
	
\section{Experiments}
\label{app:experiments}
We now describe all datasets underlying our numerical experiments (Section~\ref{app:numerical}), provide implementation details (Section~\ref{app:trainig_details}) and define the AUC metric used to quantify the accuracy-fairness trade-off (Section~\ref{app:auc}). 

\subsection{Datasets}
	\label{app:numerical}

	\begin{itemize}[leftmargin =3mm, itemsep=0.5mm]
    \item \textbf{Drug}:\footnote{\url{https://archive.ics.uci.edu/ml/datasets/Drug+consumption+\%28quantified\%29}} This dataset contains records for 1885 respondents. Each respondent is described by~12 features including personality type, level of education, age, gender, country of residence and ethnicity. Additionally, each respondent's self-declared drug usage history is recorded. That is, for each drug the respondents declare the time of the last consumption (possible responses are: never, over a decade ago, in the last decade/year/month/week, or on the last day). The classification task is to predict the response `never used' versus `others' ({\em i.e.}, `used') for heroin, and we treat race as the protected attribute. For more details see~\cite{ref:fehrman2017five}.
    
    \item \textbf{Communites$\&$Crime (CC)}:\footnote{\url{http://archive.ics.uci.edu/ml/datasets/Communities+and+Crime}} This dataset contains socio-economic, law enforcement, and crime data for~1,994 different communities in the US described by 99 attributes. We use this dataset both for regression and classification experiments. The regression task is to predict the number of violent crimes per~100,000 residents, and the classification task is to predict whether the incidence of violent crime in a community exceeds the national average. As in~\cite{ref:calders2013controlling}, we create a binary protected attribute by thresholding the percentage of black residents at the median across all communities.
    
    \item \textbf{Compas}:\footnote{\url{https://www.kaggle.com/danofer/compass}} This dataset contains records on 10,000 criminal defendants in Broward County, Florida. The features correspond to the information used by the popular COMPAS~(Correctional Offender Management Profiling for Alternative Sanctions) algorithm to predict recidivism within~2 years of the original offense. Specifically, each defendant is described by~7 features including age, gender and prior offenses as well as race, which we use as the protected attribute. The classification task is to predict whether a defendant will reoffend. The full dataset consists of three subsets, and we only use the one that focuses on violent recividism, which comprises 6,172 records.
    
    \item \textbf{Adult}:\footnote{\url{http://archive.ics.uci.edu/ml/datasets/Adult}} This dataset comprises 45,222 records for different individuals described by 14 features including age, work class, education, marital status, gender, race and yearly income. The classification task is to predict whether or not a person's income exceeds~50,000\$ per year. We use gender as the protected attribute. The dataset is split into fixed training and test sets that contain~32,561 and~12,661 records, respectively. We use these prescribed training and test sets for our experiments.

	\item \textbf{Student Grades}:\footnote{\url{https://archive.ics.uci.edu/ml/datasets/student+performance}} This dataset contains information about the academic performance and about demographic, social and school-related features of students of two Portuguese high schools. After converting all categorical features into binary variables via one-hot encoding, each student is described by 40 features including gender as the protected attribute. The regression task is to predict the final grade (on a scale from 0 to 20). From the full dataset we extract two partial datasets used to predict the final grade in Portuguese language (Student Portuguese, comprising 649 records) and in Mathematics (Student Math, comprising 395 records), respectively.
	
    
\end{itemize}

All datasets are randomly partitioned into training and test sets containing 75\% and 25\% of the samples using the function \code{sklearn.model\_selection.train\_test\_split} from the Scikit-learn toolbox in Python~\citep{ref:sklearn_api} with different seeds. The only exception is the Adult dataset, where this partition is predefined.

\label{app:real_data}

\subsection{Implementation Details}
\label{app:trainig_details}
This section provides additional information on the implementation of the MFL method and the various baseline methods.

\noindent\textbf{Online Learning.}
The MFL method and its biased variant solve problem~\eqref{eq:fair-metric-learning} with the Adam optimizer using the default Adam parameters~$\beta_1=0.9$ and~$\beta_2=0.999$. In the regression experiment, we set the learning rate to 1e-2 without decay, and in the classification experiment, we set the initial learning rate and its decay factor to 5e-4 and 0.9, respectively. The infinite data stream underlying the classification experiment repeatedly outputs a limited number of training samples that differ form the test samples, which can lead to overfitting. To mitigate this effect, we add an $\ell_2$-regularization term with weight 5e-3 to the objective function of the fair leaning problem. This is achieved by using the \code{weight\_decay} option of the Adam optimizer in PyTorch. All codes are available from~\url{https://github.com/RAO-EPFL/Metrizing-Fairness}.


\noindent\textbf{Offline Learning.}
We first provide additional information on the training processes of the MFL method used in the regression experiment as well as the two baseline methods by \citet{ref:berk2017convex}.
\begin{table}[h!]
\caption{Hyperparameters of the~MFL method used in the regression experiments}
\vspace{-5pt}
    \centering
    \begin{tabular}{lccc}
    \toprule
        Dataset & Communities\&Crime & Math &Portugese\\ \midrule
        Batch size $\bar N$ &128 &128 &128\\
        $(\beta_1, \beta_2)$ (Adam) &(0.9, 0.999)&(0.9, 0.999)&(0.9, 0.999) \\
        Learning rate~(LR)  &1e-4 &1e-3 &1e-3\\
        LR decay factor  &None &None &None\\
        Number of epochs &1,000 &2,000 &2,000\\\bottomrule
    \end{tabular}
    \label{tab:hyper-params-reg}
\end{table}

\begin{itemize}[leftmargin = 2.5mm, itemsep=0.5mm]
    \item The code of the proposed MFL method is available from~\url{https://github.com/RAO-EPFL/Metrizing-} \url{Fairness}. Table~\ref{tab:hyper-params-reg} lists all hyperparameters. 
    
    \item Since there is no publicly available code for the two methods by~\citet{ref:berk2017convex}, we reimplemented and calibrated them ab initio using CVXPY~\cite{agrawal2018CVXPY}. To ensure a fair comparison with MFL, which uses no regualizers in addition to the unfairness penalty, we set the weight of the Tikhonov regularization term to~$0$. Full implementation details can be found in our GitHub repository~\url{https://github.com/RAO-EPFL/Metrizing-Fairness}.
\end{itemize}

Next, we provide additional information on the training processes of the MFL method  used in the classification experiment as well as the three baseline methods by \citet{ref:zafar2017fairness},~\citet{ref:cho2020fair} and~\citet{ref:oneto2020expoliting}. For all datasets, we use thresholding at~$0.5$ for converting scores to labels. 
\begin{table}[h!]
    \centering
    \caption{Hyperparameters of the~MFL method used in the classification experiments}
    \vspace{-5pt}
    \begin{tabular}{lcccc}
    \toprule
        Dataset & Drug & Communities\&Crime & Compas &Adult\\ \midrule
        Batch size $\bar N$ & 128 &128 &2,048 &2,048\\
        $(\beta_1, \beta_2)$ (Adam) &(0.9, 0.999)&(0.9, 0.999)&(0.9, 0.999)&(0.9, 0.999) \\
        Learning rate~(LR) &5e-4 &5e-4 &5e-4 &5e-4\\ 
        LR decay factor & 0.99 &0.99 &0.99 &0.99\\
        Number of epochs &500 &500 &500 &500\\\bottomrule
    \end{tabular}
    
    \label{tab:hyper-params}
\end{table}
\begin{itemize}[leftmargin = 2.5mm, itemsep=0.5mm]
    \item The code of the MFL method is available from~\url{https://github.com/RAO-EPFL/Metrizing-Fairness}. Table~\ref{tab:hyper-params} lists all hyperparameters. 
    \item The code of the method by \citet{ref:zafar2017fairness} can be downloaded from~\url{https://github.com/mbilalzafar/fair-classification}. This method can only optimize over linear hypotheses. 
    \item The code of the method by~\citet{ref:cho2020fair} is available from~\url{https://proceedings.neurips.cc/paper/2020/file/ac3870fcad1cfc367825cda0101eee62-Supplemental.zip}. The underlying hypothesis space is the class of all neural networks with one hidden layer accommodating 16 nodes with ReLU activation functions. Note that our MFL method relies on the exact same hypothesis space. For the Adult and Compas datasets, we adopt the hyperparameters proposed in~\cite[Supplementary Material, \S~5.2]{ref:cho2020fair}. For the Adult dataset, we thus set the learning rate to 1e-1 with a decay rate of 0.98, the batch size to 512 and the number of epochs to 200, and for the Compas dataset we set the learning rate to 5e-4 without decay, the batch size to 2,048 and the number of epochs to 500. For all other datasets not considered in~\cite{ref:cho2020fair}, we use the default hyperparameters predefined in the code, that is, we set the learning rate to 2e-4 without decay, the batch size to~2,048 and the number of epochs to 500. Throughout all experiments, we use the default Adam parameters ($\beta_1=0.9$, $\beta_2=0.999$). All of these parameters are chosen to ensure best comparability with MFL.
    
    \item Since there is no publicly available code of the method by~\citet{ref:oneto2020expoliting}, we implemented and calibrated it ab initio. As in the MFL method, we define the hypothesis space as the class of neural networks with one hidden layer accommodating 16 nodes with ReLU activation functions, and we use the cross-entropy loss for training. In addition, we set the regularization parameter of the Sinkhorn divergence to~$0.1$. To ensure a fair comparison with MFL, which uses no regualizers in addition to the unfairness penalty, we further set the weight of the Tikhonov regularization term to~$0$. 
    Finally, we set the total number of iterations of the gradient descent algorithm to 500, initialize the learning rate as~$0.1$ and set the decay factor of the learning rate to~$0.99$. For full implementation details see~\url{https://github.com/RAO-EPFL/Metrizing-Fairness}.
\end{itemize}

\subsection{AUC Metric}
\label{app:auc}

\begin{wrapfigure}{r}{0.4\textwidth}
    \centering
    \includegraphics[width=0.4\textwidth]{ 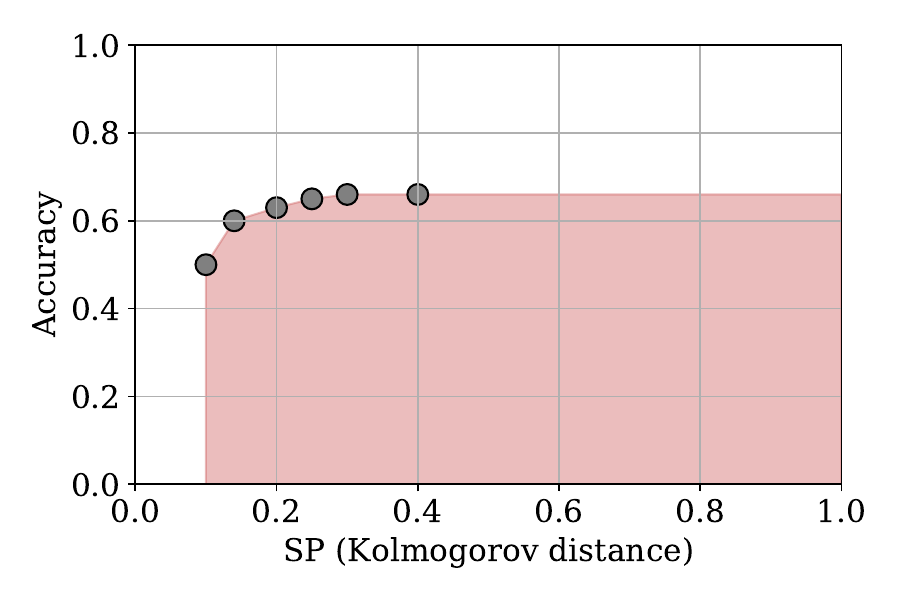}
    \caption{Construction of the AUC}
    \label{fig:auc_explanation}
\end{wrapfigure}
Each method for fair statistical learning considered in this paper involves an accuracy-fairness trade-off parameter. The accuracy versus SP-unfairness plots shown in Section~\ref{sec:numerical} are obtained by sweeping this parameter in equal steps on a logarithmic scale. The most appropriate search grid may depend on the method and the dataset at hand (concrete specifications are given in the main paper). At each grid point we compute an optimal hypothesis on the training set and evaluate its accuracy (correct classification rate for classification tasks or coefficient of determination for regression tasks) as well as its SP-unfairness measure (using the Kolmogorov distance) on the test set. The resulting tuples are conveniently represented as points in the unfairness-accuracy plane; see the gray dots in Figure~\ref{fig:auc_explanation} for a schematic. By the definitions of the correct classification rate, the coefficient of determination and the SP-unfairness measure based on the Kolmogorov distance, all of these points must reside inside the unit box~$[0,1]^2$. 
The red area visualizes all Pareto-dominated points, which attain a smaller accuracy as well as a higher unfairness than at least one of the gray dots, and its boundary can be interpreted as the corresponding Pareto frontier. The {\em area under the curve} (AUC) is then defined as the size of the red area, which is necessarily a number in~$[0,1]$. For an ideal classifier, the red area would span the whole graph, which corresponds to an AUC value of~1. Such a classifier would attain perfect accuracy at zero unfairness. We use the function \code{sklearn.metrics.auc} from the Scikit-learn toolbox in Python~\citep{ref:sklearn_api} to compute the AUC values.

\bibliographystyle{abbrvnat} 
\bibliography{references}

\end{document}